\documentclass{article}

    \PassOptionsToPackage{numbers, compress}{natbib}


    \usepackage[preprint]{neurips_2025}



\usepackage[utf8]{inputenc} 
\usepackage[T1]{fontenc}    
\usepackage{hyperref}       
\usepackage{url}            
\usepackage{booktabs}       
\usepackage{amsfonts}       
\usepackage{nicefrac}       
\usepackage{microtype}      
\usepackage[dvipsnames]{xcolor}

\definecolor{lightmintgreen}{rgb}{.69,.86,.99}
\definecolor{darkgreen}{HTML}{006600}

\usepackage{amsthm}
\usepackage{thmtools}
\usepackage{thm-restate}
\usepackage{amssymb}
\usepackage{wrapfig}
\usepackage{enumerate}
\usepackage{enumitem}
\usepackage{booktabs}
\usepackage{float}
\usepackage{multicol}
\usepackage{multirow}
\usepackage{caption}

\newtheorem{defn}{Definition}[section]

\newcommand{\ind}{\perp\!\!\!\!\perp} 
\usepackage[textsize=tiny]{todonotes}

\usepackage{amsmath}
\usepackage{amssymb}
\usepackage{mathtools}
\usepackage{amsthm}
\usepackage{wrapfig}
\usepackage{yhmath}

\usepackage{pifont}
\theoremstyle{plain}

\theoremstyle{definition}

\theoremstyle{remark}

\usepackage{xcolor}
\hypersetup{
    colorlinks=true,
    linkcolor=red,
    citecolor=blue,
    filecolor=black,      
    urlcolor=black,
    anchorcolor=red,
    }

\usepackage[dvipsnames]{xcolor}
\definecolor{lightmintbg}{rgb}{.69,.86,.99}
\definecolor{lightmintgreen}{rgb}{0.8,0.98,0.81}
\definecolor{terms1}{HTML}{1F77B4}
\definecolor{terms2}{HTML}{FF7F0E}
\usepackage{tcolorbox}
\usepackage{colortbl}

\newcommand\blfootnote[1]{
    \begingroup
    \renewcommand\thefootnote{}\footnote{#1}
    \addtocounter{footnote}{-1}
    \endgroup
}

\title{When Shift Happens - Confounding Is to Blame}

%

\author{
  Abbavaram Gowtham Reddy \;
  Celia Rubio-Madrigal \;
  Rebekka Burkholz\textsuperscript{\textdagger} \;
  Krikamol Muandet\textsuperscript{\textdagger} \\
  CISPA Helmholtz Center for Information Security,
  Saarbrücken, Germany \\
  \texttt{\{gowtham.abbavaram,celia.rubio-madrigal,burkholz,muandet\}@cispa.de}
  }

\begin{document}

\maketitle

\vspace{-10pt}
\begin{abstract}
Distribution shifts introduce uncertainty that undermines the robustness and generalization capabilities of machine learning models. While conventional wisdom suggests that learning causal-invariant representations enhances robustness to such shifts, recent empirical studies present a counterintuitive finding: (i) empirical risk minimization (ERM) can rival or even outperform state-of-the-art out-of-distribution (OOD) generalization methods, and (ii) its OOD generalization performance improves when all available covariates—not just causal ones—are utilized. Drawing on both empirical and theoretical evidence, we attribute this phenomenon to hidden confounding. Shifts in hidden confounding induce changes in data distributions that violate assumptions commonly made by existing OOD generalization approaches. Under such conditions, we prove that effective generalization requires learning environment-specific relationships, rather than relying solely on invariant ones. Furthermore, we show that models augmented with proxies for hidden confounders can mitigate the challenges posed by hidden confounding shifts. These findings offer new theoretical insights and practical guidance for designing robust OOD generalization algorithms and principled covariate selection strategies.
\end{abstract}

\vspace{-1.5em}
\section{Introduction}
\label{sec:introduction}
\vspace{-1.5em}

\blfootnote{\textsuperscript{\textdagger}These authors share senior authorship.}

Generalization—the ability to draw reliable conclusions about unseen data based on observed data—is central to numerous scientific fields. In medicine and the social sciences, it is embodied as \textit{external validity}, ensuring findings from one population are applicable to a different population~\citep{campbell2015experimental}; in ecology, it supports \textit{space‑for‑time substitutions}, where spatial variation is used as a proxy for temporal change to infer long-term ecological patterns~\citep{pickett1989space}; and in engineering, it drives \textit{robust control}, where models must maintain performance in the presence of unmodeled disturbances~\citep{khalil1996robust}.
In recent years, machine learning has become a powerful tool for learning generalizable models~\citep{zhou2022domain, wang2022generalizing, liu2021towards}.

To generalize well, machine learning models must be robust to distribution shifts between training and test data. For example, a model trained to predict \textit{food stamp recipiency} based on \textit{household attributes} in one state should be capable of adapting and performing well when deployed in another state. A model is said to achieve \textit{out-of-distribution (OOD) generalization} when it maintains its performance on both in-distribution (ID) data (from which training data are sampled) and out-of-distribution (OOD) test data. Over the last decade, various families of methods such as domain generalization~\citep{Muandet13:DG, HeinzeDemlPetersMeinshausen, Lv2022CVPR, zhaofundamental, irm, krueger2021out, Singh24:DGIL}, domain adaptation~\citep{zhao2019learning,sun2016deep,long2018conditional,mixup1,mixup2,deepcoral}, and robust learning~\citep{dro,sagawa2019distributionally,zhang2020coping} have been proposed to achieve provable OOD generalization under specific assumptions. 
However, under careful model selection~\citep{Gulrajani21:DG,krueger2021out,vedantam2021an}, models based on standard \textit{empirical risk minimization (ERM)}~\citep{vapnik1999nature} often achieve competitive OOD generalization performance across a range of real-world applications~\citep{Gulrajani21:DG,gardner2023benchmarking,liu2023on,nastl2024do,rosenfeld2022domain,vedantam2021an}. Moreover, a recent empirical study~\citep{nastl2024do} on 16 real-world tabular datasets has concluded that incorporating all available covariates when predicting the outcome, regardless of whether they directly affect the outcome, can improve OOD generalization performance. These findings challenge prevailing assumptions in the field and motivate a deeper investigation into the mechanisms underlying OOD generalization.

Distribution shifts are commonly observed when data originate from different environments~\citep{scholkopf2022causality}. For example, \textit{food stamp recipiency} may differ across states, because each state operates under different eligibility rules, leading to distinct, environment-specific distributions. In such data, certain statistical relationships between covariates and the outcome may stay consistent across all environments, referred to as \textit{invariant relationships}~\citep{irm}. Identifying and learning such invariant relationships ensures the development of models with OOD generalization~\citep{irm,peters2016causal,HeinzeDemlPetersMeinshausen,rojas2018invariant,Muandet13:DG,zhao2019learning,Quinzan24:CIP}. A special kind of invariance is known as \textit{causal invariance}, where the causal relationships in the data stay invariant across environments~\citep{scholkopf2022causality,peters2016causal,irm}. However, in many real-world settings, not all relevant covariates required for predicting an outcome are observed, due to limitations in data collection, privacy constraints, or measurement errors~\citep{carroll2006measurement,louizos2017causal,dwork2014algorithmic,little2019statistical}. Their absence disrupts the invariant relationships needed for models to achieve generalization. This issue is further compounded when the unobserved variables are confounders that influence both the observed covariates and the outcome. In practice, such hidden confounders are pervasive, and shifts in their distributions correspond to distinct environments. Ignoring these shifts not only undermines generalization performance~\citep{landeiro2018robust,LatentSubgroupShift, tsai2024proxy,prashant2025scalable}, but can also lead to learning incorrect relationships between observed covariates and the outcome. 

Despite its importance, the impact of hidden confounding shifts on OOD generalization remains underexplored~\citep{LatentSubgroupShift, landeiro2018robust, tsai2024proxy, prashant2025scalable}. 
This work offers both theoretical insights and empirical justifications that explain recent findings in the literature~\citep{Gulrajani21:DG,gardner2023benchmarking,liu2023on,nastl2024do,rosenfeld2022domain,vedantam2021an}. 
Our contributions are as follows:
\begin{itemize}[leftmargin=*]
    \item We motivate the need for studying confounding shift in OOD generalization from a causal perspective (\S~\ref{sec:confoundingshift}) and explain why adding informative, non-causal covariates can improve performance. 
    \item Specifically, we show that maximizing predictive information between model predictions and true outcomes demands explicitly learning environment‑specific relationships. Consequently, invariant representations alone,  while necessary, are insufficient (\S~\ref{sec:generaldecomposition},~\S~\ref{sec:predictiveinformationunderconfoundingshift}).
    \item We demonstrate that variables informative of either outcome or hidden confounders help in improving predictive information between model predictions and true outcomes. This underscores the importance of principled covariate selection in the presence of hidden confounding shift (\S~\ref{sec:impactofinformativevariables}).
    \item Our experiments on both real-world and synthetic datasets provide evidence that (i) hidden confounding is prevalent in real-world tabular benchmark data, (ii) learning environment‑specific relationships correlates positively with ID and OOD test accuracy, and (iii) incorporating informative, non‑causal covariates improves generalization (\S~\ref{sec:experimentsandresults}). 
\end{itemize}
Overall, our work not only explains the surprising effectiveness of ERM under real-world distribution shifts, but also underscores the subtle yet inevitable role of hidden confounding in OOD generalization—highlighting the need for methods that address it more effectively.

\vspace{-7pt}
\section{Related work}
\label{sec:relatedwork} 
\vspace{-7pt}

Out-of-distribution (OOD) generalization encompasses various facets, with notable examples including domain generalization~\citep{Muandet13:DG, HeinzeDemlPetersMeinshausen, Lv2022CVPR, zhaofundamental, irm, krueger2021out}, domain adaptation~\citep{zhao2019learning,sun2016deep,long2018conditional,mixup1,mixup2,deepcoral}, robust learning~\citep{dro,sagawa2019distributionally,zhang2020coping}, federated learning~\citep{mcmahan2017communication,konevcny2015federated,li2023federated,zhang2021federated,liao2024foogd}, and OOD detection~\citep{lee2018simple,hendrycks2017baseline,yang2024generalized,tack2020csi,liu2021towards}. Domain adaptation assumes access to unlabeled data from the test set, whereas domain generalization relies on environment labels during training. Federated learning adopts a collaborative learning framework, tackling constraints such as communication efficiency and privacy when data originates from multiple environments. OOD detection focuses on identifying samples that differ significantly from the training distribution. A common goal of many of these methods is to learn invariant relationships. However, recent work suggests that additional inductive biases beyond invariance are required for improved generalization~\citep{lin2022zin,schrouff2022diagnosing,ye2021towards}.~\citet{ye2021towards} argue that invariance of features is necessary but not sufficient for generalization and discuss the importance of \textit{informativeness} of features for generalization. We also study the role of both invariance and informativeness for OOD generalization.

\noindent \textbf{Proxy variable adjustment:}
When confounding variables are observed during training and unobserved at test time,~\citet{landeiro2018robust} propose adjusting for confounding shifts to improve classifier performance. Building on proxy-based adjustment methods for causal effect estimation~\citep{miao2018identifying,kuroki2014measurement}, recent domain adaptation methods rephrase the problem of unknown distribution shifts as a causal effect identification problem~\citep{LatentSubgroupShift, tsai2024proxy}. Recently, OOD generalization under hidden confounding shift has been considered under the assumption of overlapping confounder support~\citep{prashant2025scalable}. In contrast to these approaches, we \textit{explain how} proxy variables enhance OOD generalization.

\noindent \textbf{All variable models vs causal models.}
Recently,~\citet{nastl2024do} introduced a benchmark study where covariates are categorized into four groups: causal (conservatively chosen), arguably causal, anti-causal, and other spurious covariates. They show that across 16 benchmark datasets, models using all covariates Pareto-dominate those using only causal or arguably causal subsets on both ID and OOD data. However, there is limited theoretical work explaining these results. We present scenarios and arguments to explain their experimental findings. For linear causal models, anchor regression~\citep{anchor} introduces a framework that balances between two estimation paradigms: models that include all observed covariates and models that focus solely on causal covariates. We aim to explain the impact of adding more covariates that are not necessarily causal under a hidden confounding shift.~\citet{eastwood2024spuriosity} show that unstable covariates can boost performance when they carry information about the label, provided they are conditionally independent of the stable covariates given the label. They propose to adjust the distribution shift by looking at the test domain without labels. However, when applied to a medical real-world dataset not constructed for this particular problem~\citep{Bandi2019-xr}, ERM still remains competitive with their method, in line with the findings of~\citet{nastl2024do}. This reflects the broader insight that, under well-specified covariate shifts, maximum likelihood estimation (MLE) achieves minimax optimality for OOD generalization~\citep{ge2024maximum}. 
Yet, real-world settings are rarely well-specified due to hidden confounding shift, which is the main focus of this work.

\vspace{-5pt}
\section{Manifestations of hidden confounding shift}
\label{sec:confoundingshift}
\vspace{-5pt}

In this section, we provide background on hidden confounding shift and motivate the need to address it using an example involving causal effect identification of observed covariates on the outcome.

\begin{wraptable}[10]{r}{0.5\textwidth}
\centering
\vspace{-5pt}
\caption{A summary of different distribution shifts.}
\scalebox{0.85}{
\begin{tabular}{l|c}
\toprule
\textbf{Type of Shift} & \textbf{Mathematical Expression} \\ 
\midrule
Label & $\mathbb{P}^e(Y) \neq \mathbb{P}^{e'}(Y)$ \\ 
Covariate & $\mathbb{P}^e(\mathbf{X}) \neq \mathbb{P}^{e'}(\mathbf{X})$ \\ 
Conditional Covariate & $\mathbb{P}^e(\mathbf{X} \mid Y) \neq \mathbb{P}^{e'}(\mathbf{X} \mid Y)$ \\ 
Concept & $\mathbb{P}^e(Y \mid \mathbf{X}) \neq \mathbb{P}^{e'}(Y \mid \mathbf{X})$ \\ 
Dataset & $\mathbb{P}^e(\mathbf{X},Y) \neq \mathbb{P}^{e'}(\mathbf{X},Y)$ \\
\bottomrule
\end{tabular}
}
\label{tab:typesofshifts}
\end{wraptable}
\noindent \textbf{Types of distribution shifts.}  
For covariates $\mathbf{X}$ and target $Y$, one may observe several distribution shifts between two environments $e$ and $e'$ as shown in Table~\ref{tab:typesofshifts}. These distribution shifts usually result from a shift in the distribution $\mathbb{P}(U)$ of an unobserved covariate $U$ that causes either $\mathbf{X}$ or $Y$ or both (see Figure~\ref{fig:differentshifts}). The shifts in $\mathbb{P}(U)$ lead to shifts in observed distributions involving only $\mathbf{X}$ and $Y$. For instance, we can write: $\mathbb{P}(\mathbf{X}) = \sum_U \mathbb{P}(U) \mathbb{P}(\mathbf{X}\mid U)$. Thus, when $\mathbb{P}^e(U)\neq \mathbb{P}^{e'}(U)$ and $U\rightarrow \mathbf{X}$, we observe $\mathbb{P}^e(\mathbf{X}) \neq \mathbb{P}^{e'}(\mathbf{X})$. Similar arguments can be made about the distribution shifts of $\mathbb{P}(Y),\ \mathbb{P}(\mathbf{X}\mid Y),\ \mathbb{P}(Y\mid \mathbf{X}),\ \mathbb{P}(\mathbf{X}, Y)$. 

\begin{wrapfigure}[7]{r}{0.5\textwidth}
\centering
\vspace{-5pt}
\includegraphics[width=0.92\linewidth]{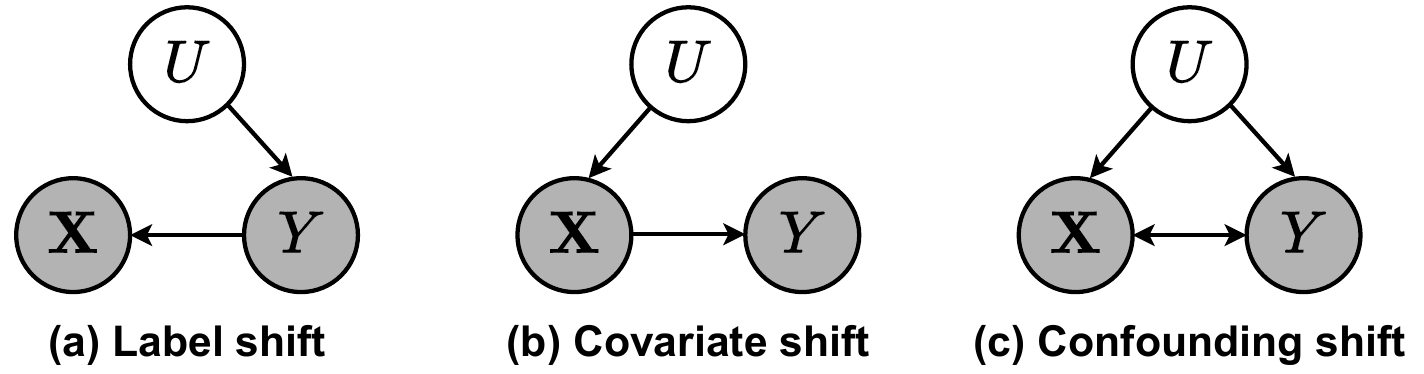}
    \caption{\footnotesize Causal graphs underlying distribution shifts.}
    \label{fig:differentshifts}
\end{wrapfigure}
Several existing methods for OOD generalization assume certain causal relationships among $U,\mathbf{X},Y$ that guarantee specific invariances. For instance, when  $U\rightarrow Y\rightarrow \mathbf{X}$ (Figure~\ref{fig:differentshifts} (a)), since $\mathbb{P}(\mathbf{X}, Y) = \mathbb{P}(Y)\mathbb{P}(\mathbf{X} \mid Y)$, a shift $\mathbb{P}^e(\mathbf{X}, Y) \neq \mathbb{P}^{e'}(\mathbf{X}, Y)$ is observed due to label shift, i.e., $\mathbb{P}^e(Y) \neq \mathbb{P}^{e'}(Y)$, but conditional covariate distribution stays invariant i.e., $\mathbb{P}^e(\mathbf{X} \mid Y) = \mathbb{P}^{e'}(\mathbf{X} \mid Y)$ because $\mathbb{P}(\mathbf{X}|Y,U) = \mathbb{P}(\mathbf{X}|Y)$~\citep{wu2021online,tachet2020domain,garg2020unified,alexandari2020maximum,lipton2018detecting,zhang2013domain,du2014semi,storkey2009training}. Similarly, when $U \rightarrow \mathbf{X}\rightarrow Y$ (Figure~\ref{fig:differentshifts} (b)), a shift $\mathbb{P}^e(\mathbf{X}, Y) \neq \mathbb{P}^{e'}(\mathbf{X}, Y)$ is observed due to covariate shift, i.e., $\mathbb{P}^e(\mathbf{X}) \neq \mathbb{P}^{e'}(\mathbf{X})$, but the conditional distribution $\mathbb{P}(Y\mid \mathbf{X})$ stays invariant i.e., $\mathbb{P}^e(Y \mid \mathbf{X}) = \mathbb{P}^{e'}(Y \mid \mathbf{X})$~\citep{schneider2020improving,chen2016robust,sugiyama2012machine,bickel2009discriminative,gretton2009covariate}. These invariances may not hold in many scenarios because $U$ usually causes both $\mathbf{X}$ and $Y$ (Figure~\ref{fig:differentshifts} (c))~\citep{liu2023on,LatentSubgroupShift,landeiro2018robust,tsai2024proxy,prashant2025scalable,reddy2022causally,reddy2024detecting}. In this case, when $\mathbb{P}(U)$ shifts between environments, label shift, covariate shift, conditional covariate shift, and concept shift can all occur simultaneously. Nevertheless, confounding shift induces an invariance: $\mathbb{P}^e(\mathbf{X}, Y \mid U) = \mathbb{P}^{e'}(\mathbf{X}, Y \mid U)$ referred to as \textit{stable confounding shift}~\citep{tsai2024proxy,LatentSubgroupShift}. Since $U$ is unobserved, this invariance is absent from the observed data.

\noindent \textbf{Information-theoretic measures:} Following~\citet{federici2021an}, we use mutual information to quantify distribution shifts. Specifically, we use $I(\mathbf{X};E),\ I(Y;E),$ $I(\mathbf{X};E|Y),\ I(Y;E|\mathbf{X}),\ I(\mathbf{X},Y;E)$ where $E$ is the environment variable to measure the shifts in $\mathbb{P}(\mathbf{X}),\ \mathbb{P}(Y),\ $ $ \mathbb{P}(\mathbf{X}\mid Y),\ \mathbb{P}(Y\mid \mathbf{X}),\ \mathbb{P}(\mathbf{X},Y)$ respectively across the environments. As discussed earlier, existing methods for OOD generalization often assume either $I(\mathbf{X}; E \mid Y)=0$ through the label shift assumption or $I(Y; E \mid \mathbf{X})=0$ through the covariate shift assumption. However, as discussed earlier, both of them may be nonzero under hidden confounding shifts. 

\noindent \textbf{Challenges with hidden confounding shift:} To illustrate the challenges posed by hidden confounding shift on OOD generalization, we present the experiment shown in Figure~\ref{fig:introfigure}. The outcome $Y$ depends on $X$ and a hidden confounder $U$. Environment-specific shifts in the mean of $U$, denoted by $\mu_e$, induce distinct environments. A linear regression model trained solely on observed covariate $X$ infers an incorrect relationship between $X$ and $Y$, as shown in the first subplot. This exemplifies the Simpson's paradox~\citep{simpsons}, where $Y$ increases with $X$ within each environment, yet the model captures the opposite trend. We observe similar behavior from the models designed for OOD generalization, such as IRM~\citep{irm}, VREX~\citep{krueger2021out}, and Group DRO~\citep{sagawa2019distributionally}. In such scenarios, we theoretically show in~\S~\ref{sec:impactofhiddenconfounding} that the optimal strategy for generalization involves learning environment-specific relationships, which enables recovery of the correct relationship between $X$ and $Y$. This is illustrated in the second subplot, where environment-specific summary statistics of $X$, such as mean, standard deviation, and quantiles, help uncover the true $ X$–$Y$ relationship. This mirrors the \textit{backdoor adjustment} criteria in causal effect estimation~\citep{pearl2009causality}, where environment-specific information acts as a proxy for adjusting for confounders. However, as shown in the second subplot, environment-specific information encoded in observed covariates alone may be insufficient due to the limited representational capacity. However, as we show in~\S~\ref{sec:impactofinformativevariables}, additional informative covariates serving as proxies for unobserved confounders can further improve generalization performance, as shown in the third subplot. For comparison, we include an oracle model trained on $X$ and $U$, which achieves the best‑possible mean squared error.

\begin{figure}
\centering
\vspace{-5pt}
    \includegraphics[width=0.99\linewidth]{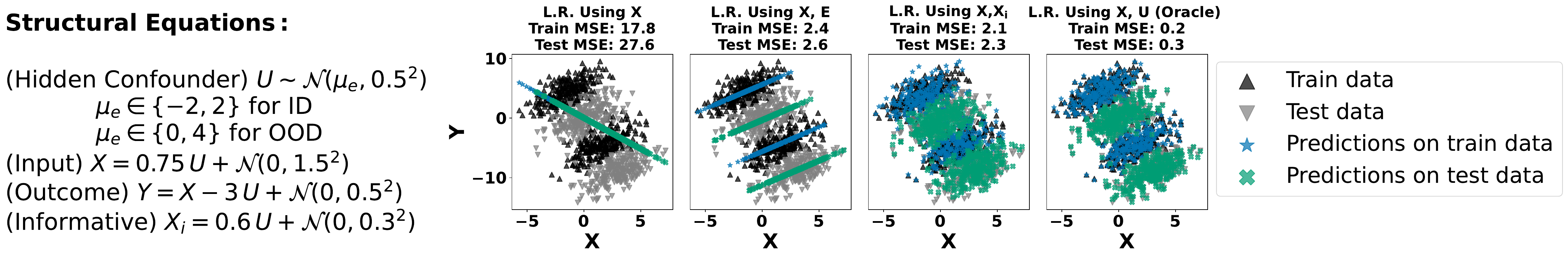}
    \vspace{-5pt}
    \caption{\footnotesize We evaluate four linear regression (L.R.) models in an OOD setting characterized by hidden confounding shifts and minimal environment overlap (i.e., distant $\mu_e$). (i) A model trained solely on $X$ learns an incorrect relationship with $Y$, illustrating Simpson's paradox. (ii) Using environment-specific summary statistics of $X$, denoted as $E$, recovers the correct relationship but remains limited in representation power. (iii) Using an informative covariate $X_i$ for $U$ improves OOD generalization. (iv) The oracle model is trained on $X$ and $U$.}
    \vspace{-10pt}
    \label{fig:introfigure}
\end{figure}

\vspace{-4pt}
\section{Impact of hidden confounding shifts on OOD generalization}
\label{sec:impactofhiddenconfounding}
\vspace{-4pt}

In this section, we theoretically study how hidden confounding shifts impact various aspects of OOD generalization. Let $\hat{Y} = (f\circ\phi) (\mathbf{X})$ be the predicted label for an input $\mathbf{X}$, where $f$ is a classifier and $\phi$ is a feature extractor or transformation function. $H(X) = -\mathbb{E}_X[\log(\mathbb{P}(X))]$ denotes entropy and $I(X; Y) = \mathbb{E}_{X,Y}[\log\frac{\mathbb{P}(X,Y)}{\mathbb{P}(X)\mathbb{P}(Y)}]$ denotes mutual information. The risk of the predictor $(f\circ\phi)$ in an environment $e$ is defined as: $\mathcal{R}^e(f\circ \phi) = \mathbb{E}_{(\mathbf{X}, Y)\sim\mathbb{P}^e_{\mathbf{X}, Y}} [\ell((f\circ\phi)(\mathbf{X}), Y)]$ where $\ell(\cdot,\cdot)$ is a loss function. The goal in OOD generalization is to learn a predictor $(f\circ\phi)$ that performs well on both ID and OOD data. To this end, various objectives have been considered in the literature. Robust optimization-based methods~\citep{bagnell2005robust,ben2009robust,duchi2021statistics,sinha2017certifying,sagawa2019distributionally} aim to minimize the worst risk across all training environments: $\mathcal{R}^{\text{Rob}}(f\circ\phi) = \max_{e\in \mathcal{E}_{tr}} \mathcal{R}^e(f\circ\phi)$ where $\mathcal{E}_{tr}$ denotes the set of training environments. The variance risk extrapolation (VREX)~\citep{krueger2021out} aims to minimize $\mathcal{R}^{\text{VREX}}(f\circ \phi)= \sum_{e\in \mathcal{E}_{tr}} \mathcal{R}^e(f\circ\phi) +\beta \times \text{Variance}(\{\mathcal{R}^1(f\circ\phi),\dots,\mathcal{R}^{|\mathcal{E}_{tr}|}(f\circ \phi)\})$ where $\beta$ is a hyperparameter. Invariant risk minimization (IRM)~\citep{irm} aims to minimize $\sum_{e\in \mathcal{E}_{tr}} \mathcal{R}^e(f\circ \phi)$ with the constraint that $f$ is a simultaneously optimal classifier across all environments. Empirical risk minimization (ERM) based methods simply pool data from all training environments and minimize the empirical risk on the pooled data~\citep{irm,krueger2021out}. That is, ERM minimizes $\mathcal{R}^{\text{ERM}}(f \circ \phi) = \sum_{e \in \mathcal{E}_{tr}} \sum_{(\mathbf{X}, Y) \in \mathcal{D}^e} \ell((f\circ\phi)(\mathbf{X}), Y)$ where $\mathcal{D}^e$ denotes observed data in an environment $e$. 

To understand how hidden confounding shift impacts traditional objective functions, we explore different aspects of maximizing predictive information: $I(Y; \hat{Y})$ where $\hat{Y}=(f\circ\phi)(\mathbf{X})$. $I(Y; \hat{Y})$ quantifies how informative the prediction $\hat{Y}$ is about the true label $Y$, making it a natural and meaningful objective for many machine learning tasks. We start by defining two key properties of $\phi(\mathbf{X})$—\textit{informativeness} and \textit{invariance}—both of which are crucial for OOD generalization~\citep{ye2021towards}.

\begin{defn}[Informativeness and Conditional Informativeness]
\label{defn:informativeness}
     The informativeness of features $\phi(\mathbf{X})$ for predicting $Y$ is defined as $\text{INF}(\phi(\mathbf{X}),Y) = I(\phi(\mathbf{X});Y)$. The conditional informativeness of features $\phi(\mathbf{X})$ for predicting $Y$ conditioned on environment variable $E$ is defined as 
        $\text{CINF}(\phi(\mathbf{X}), Y, E) = I(\phi(\mathbf{X});Y|E)$.
\end{defn}
Conditional informativeness measures the information $\phi(\mathbf{X})$ provides about $Y$ within each environment. Minimizing CINF implies information loss, and maximizing CINF can be undesirable in applications such as algorithmic fairness, as $\phi(\mathbf{X})$ may exploit sensitive or biased information within environments, leading to unfair predictions. While such biased representations can be avoided by minimizing $I(\phi(\mathbf{X}); E)$, it may reduce predictive performance, as we show in~\S~\ref{sec:predictiveinformationunderconfoundingshift}. Considering the setting where no sensitive information is associated with $E$, we adopt the perspective that maximizing CINF can improve generalization performance, a view we follow and substantiate in~\S~\ref{sec:generaldecomposition}.

\begin{defn}[Variation and Invariance]
\label{defn:invariance}
For a given label $Y$, the variation in the features $\phi(\mathbf{X})$ across environments $E$ is defined as $\text{VAR}(\phi(\mathbf{X}), Y, E) = I(\phi(\mathbf{X});E|Y)$. For a given label $Y$, the invariance of the features $\phi(\mathbf{X})$ across environments $E$ is defined as the negative of the variation i.e., $\text{INV}(\phi(\mathbf{X}), Y, E) = - \text{VAR}(\phi(\mathbf{X}),Y,E)=-I(\phi(\mathbf{X});E|Y)$
\end{defn}
Invariance quantifies how consistent the representation $\phi(\mathbf{X})$ is across different environments for a given label $Y$. Minimizing invariance can lead to overfitting by preserving environment-specific information in $\phi(\mathbf{X})$, whereas maximizing invariance helps eliminate environment-specific dependencies, promoting invariant learning. Extending the information-theoretic measures from~\S~\ref{sec:confoundingshift}, we use $I(\phi(\mathbf{X}); E)$ and $I(Y; E \mid \phi(\mathbf{X}))$ to quantify \textit{feature shift} and \textit{concept shift} respectively. Depending on the context, we use the term \textit{concept shift} to denote either $I(Y; E \mid \mathbf{X})$ or $I(Y; E \mid \phi(\mathbf{X}))$.

\vspace{-5pt}
\subsection{A general decomposition of predictive information}
\label{sec:generaldecomposition}
\vspace{-5pt}

We begin with a few causal‐graph preliminaries. A causal graph $\mathcal{G}$ consists of nodes representing random variables, and directed edges indicating direct causal influences between nodes. A \textit{path} between two nodes $X_i$ and $X_j$ is a sequence of unique nodes connected by edges. A \textit{directed path} from $X_i$ to $X_j$ with $i < j$ is one where all arrows point toward $X_j$, i.e., $X_i \rightarrow X_{i+1} \rightarrow \dots \rightarrow X_{j-1} \rightarrow X_j$. In such a directed path, $X_i$ is called the \textit{parent} of $X_{i+1}$, $X_{i+1}$ is the \textit{child} of $X_i$, $X_i$ an \textit{ancestor} of $X_j$, and $X_j$ is a \textit{descendant} of $X_i$. Paths decompose into three fundamental structures: a \textit{chain} $X_i\to X_j\to X_k$, a \textit{fork} $X_i\leftarrow X_j\to X_k$, and a \textit{collider} $X_i\to X_j\leftarrow X_k$. In both chains and forks, $X_i$ and $X_k$ are marginally dependent yet become conditionally independent upon conditioning on the intermediate node $X_j$, i.e.\ $X_i\not\ind X_k$ and $X_i\ind X_k\mid X_j$. In a collider, $X_i$ and $X_k$ are marginally independent but become conditionally dependent when conditioning on $X_j$ or any of its descendants, i.e.\ $X_i\ind X_k$ and $X_i\not\ind X_k\mid X_j$. A path between $X_i$ and $X_k$ is said to be \textit{blocked} by a conditioning set $\mathcal{S}$ if and only if either (i) the path contains a chain or fork whose middle node lies in $\mathcal{S}$, or (ii) it contains a collider such that neither the collider nor any of its descendants belongs to $\mathcal{S}$. If all paths from $X_i$ to $X_k$ are blocked by $\mathcal{S}$, then $X_i\ind X_k\mid \mathcal{S}$. A path is open if it is not blocked.

\begin{wrapfigure}[9]{r}{0.35\textwidth}
    \centering
    \includegraphics[width=0.8\linewidth]{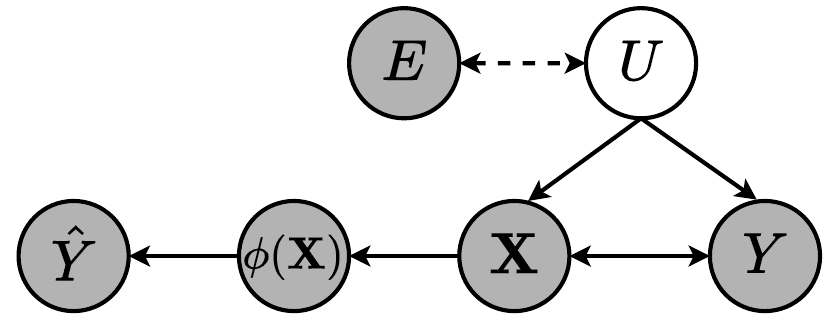}
    \caption{\footnotesize Bi-directed arrow between $\mathbf{X}$ and $Y$ indicate that some covariates of $\mathbf{X}$ can cause $Y$, and some may be caused by $Y$.}
    \label{fig:causalmodel}
\end{wrapfigure}
Now consider the predictive information $I(Y;\hat{Y})$, where the predictions $\hat{Y}=(f\circ\phi)(\mathbf{X})$ are based on a learned representations $\phi(\mathbf{X})$. We model the underlying causal relationships among $\mathbf{X}, Y, U, \phi(\mathbf{X})$, $\hat{Y}$ and $E$ as shown in Figure~\ref{fig:causalmodel}. Here, $U$ is a hidden confounding variable. Given either $\mathbf{X}$ or $\phi(\mathbf{X})$, $\hat{Y}$ is redundant for reasoning about $Y$. That is, $\hat{Y}\ind Y\mid \mathbf{X}$ and $\hat{Y}\ind Y\mid \phi(\mathbf{X})$. $E$ denotes an environment variable that captures shifts in $\mathbb{P}(U)$. That is, between any two environments, there is a shift in $\mathbb{P}(U)$. \textit{We first present a more general decomposition of predictive information without explicit consideration of how $U$ influences $\mathbf{X}, Y$. A formal treatment of $U$’s influence on $\mathbf{X}$ and $Y$ is presented in~\S~\ref{sec:predictiveinformationunderconfoundingshift}}.

\begin{restatable}[]{propn}{informationdecomposition}
\label{propn:informationdecomposition}
For a covariate vector $\mathbf{X}$, label $Y$, with causal structure $\mathbf{X}\leftrightarrow Y$, i.e., some covariates cause $Y$ and some covariates are caused by $Y$, environment variable $E$, a feature extractor $\phi$, and prediction $\hat{Y}$, the predictive information $I(Y; \hat{Y})$ is decomposed as follows:
\begin{equation}
\label{eq:decomposition}
\scalebox{0.92}{
    $I(Y;\hat{Y}) = \overbrace{I(\phi(\mathbf{X});Y|E)}^{\text{\textcolor{terms1}{Cond. informativ.}}} - \overbrace{\frac{I(\phi(\mathbf{X});E|Y)}{2}}^{\text{\textcolor{terms2}{Variation}}} + \overbrace{\frac{I(Y;E)}{2}}^{\text{\textcolor{terms2}{Label shift}}}+\overbrace{\frac{I(\phi(\mathbf{X});E)}{2}}^{\text{\textcolor{terms2}{Feature shift}}}-\overbrace{\frac{I(Y;E|\phi(\mathbf{X}))}{2}}^{\text{\textcolor{terms2}{Concept shift}}}-\overbrace{I(\phi(\mathbf{X});Y|\hat{Y})}^{\text{\textcolor{terms1}{Residual}}}$
    }
\end{equation}
where $I(\phi(\mathbf{X});Y|\hat{Y})$ is the residual information in $\phi(\mathbf{X})$ for inferring $Y$ that is not captured by the prediction $\hat{Y}$. The decomposition above also holds when $\phi(\mathbf{X})$ is replaced with $\mathbf{X}$.
\end{restatable}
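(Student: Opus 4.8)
The plan is to reduce the six-term decomposition to two elementary ingredients: a single chain-rule step that isolates the residual term, followed by a symmetric averaging of two complementary chain-rule expansions of $I(\phi(\mathbf{X});Y)$. Throughout I write $Z := \phi(\mathbf{X})$ for brevity. Every identity I intend to use is the chain rule for mutual information together with the symmetry $I(A;B)=I(B;A)$, so no appeal to the causal graph is needed beyond the two redundancy conditional independencies already asserted in the setup, namely $\hat{Y}\ind Y\mid \phi(\mathbf{X})$ and $\hat{Y}\ind Y\mid \mathbf{X}$.

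First I would peel off the residual. Since $\hat{Y}=(f\circ\phi)(\mathbf{X})$ is a function of $Z$, the assumption $\hat{Y}\ind Y\mid Z$ gives $I(Y;\hat{Y}\mid Z)=0$. Expanding $I(Y;Z,\hat{Y})$ two ways,
$$I(Y;Z,\hat{Y}) = I(Y;\hat{Y}) + I(Y;Z\mid \hat{Y}) = I(Y;Z) + I(Y;\hat{Y}\mid Z),$$
the final term vanishes, so $I(Y;\hat{Y}) = I(Z;Y) - I(Z;Y\mid \hat{Y})$. This already supplies the residual $I(\phi(\mathbf{X});Y\mid \hat{Y})$, and reduces the task to showing that $I(Z;Y)$ equals the remaining five terms of the decomposition.

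The core step is to expand $I(Z;Y)$ symmetrically. Applying the chain rule to the triple $(Z,Y,E)$ and grouping around $E$ versus $Y$ yields $I(Z;Y) = I(Z;Y\mid E) + I(Z;E) - I(Z;E\mid Y)$; grouping the analogous expansion of $I(Y;Z,E)$ around $E$ versus $Z$ yields $I(Z;Y) = I(Z;Y\mid E) + I(Y;E) - I(Y;E\mid Z)$. Averaging these two expressions is precisely what manufactures the characteristic factors of one-half:
$$I(Z;Y) = I(Z;Y\mid E) - \tfrac{1}{2}I(Z;E\mid Y) + \tfrac{1}{2}I(Y;E) + \tfrac{1}{2}I(Z;E) - \tfrac{1}{2}I(Y;E\mid Z),$$
which are exactly the conditional-informativeness, variation, label-shift, feature-shift, and concept-shift terms. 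Substituting into the residual identity completes the proof, and the extension to $\mathbf{X}$ in place of $\phi(\mathbf{X})$ is immediate, since the only structural input was the redundancy $\hat{Y}\ind Y\mid Z$, whose $\mathbf{X}$-version is likewise assumed.

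I expect the main obstacle to be conceptual rather than computational: neither single chain-rule expansion of $I(Z;Y)$ alone produces the symmetric half-weights appearing in the statement, and one must recognize that averaging the two complementary expansions is the right move. Once that is seen, each line is a one-step manipulation of standard mutual-information identities, and the causal graph enters only through the two conditional-independence assumptions.
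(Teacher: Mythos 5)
Your proposal is correct and follows essentially the same route as the paper's proof: both isolate the residual via the chain rule on $I(Y;\phi(\mathbf{X}),\hat{Y})$ using $\hat{Y}\ind Y\mid\phi(\mathbf{X})$, then obtain the half-weighted terms by combining the two complementary chain-rule expansions of $I(\phi(\mathbf{X});Y)$ through $E$ (the paper sums the two full expansions and regroups, whereas you factor out the residual first and average — a purely cosmetic reorganization).
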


Proofs are presented in Appendix~\S~\ref{sec:proofs}. Existing methods can be viewed as methods that explicitly minimize or maximize certain terms in Equation~\eqref{eq:decomposition}. For instance, IRM~\citep{irm} enforces that a fixed classifier $f$ remains the same across different environments. This is directly related to maximizing $-I(\phi(\mathbf{X});E|Y)$ because if $\phi(\mathbf{X})$ were to carry extra environment-specific information conditional on $Y$, the fixed classifier would no longer be optimal in all environments. DANN~\citep{ganin2016domain} and the independence criterion for fair classification~\citep{federici2021an} aims to minimize environment-specific information in $\phi(\mathbf{X})$ by minimizing $I(\phi(\mathbf{X});E)$. In the context of domain adaptation, certain properties of features are learned invariant to domains~\citep{sun2016deep,mixup1}. While minimizing $I(\phi(\mathbf{X});E)$ may be good for fair classification, it may degrade predictive performance~\citep{zhao2019learning,johansson2019support,federici2021an}. This can be seen from Equation~\eqref{eq:decomposition}, where minimizing $I(\phi(\mathbf{X});E)$ can reduce $I(Y;\hat{Y})$. In CDAN~\citep{long2018conditional}, the objective is to obtain $\mathbb{P}^e(\phi(\mathbf{X}), Y) = \mathbb{P}^{e'}(\phi(\mathbf{X}), Y)$. By enforcing this equality, $E$ becomes independent of the pair $(\phi(\mathbf{X}), Y)$. Consequently, $\mathbb{P}(Y \mid \phi(\mathbf{X}), E) = \mathbb{P}(Y \mid \phi(\mathbf{X})) \Longrightarrow I(Y; E \mid \phi(\mathbf{X})) = 0$. Thus, although CDAN does not explicitly include $I(Y; E \mid \phi(X))$ in its loss, its joint-distribution alignment objective effectively drives this conditional mutual information to zero. While $I(Y;E)$ is a constant, there exist methods that are robust to shifts in the distribution of labels~\citep{sagawa2019distributionally,zhang2021coping}.

Recall that Equation~\eqref{eq:decomposition} is derived without explicitly considering how $U$ influences $\mathbf{X}, Y$. Knowing how $U$ influences $\mathbf{X}$ and $Y$ can further guide the effective handling of the terms in~\eqref{eq:decomposition}. That is, if $U\rightarrow Y, U \not\rightarrow \mathbf{X},Y\rightarrow \mathbf{X}$, then $I(\mathbf{X}; E \mid Y) = 0$, suggesting that minimizing $I(\phi(\mathbf{X}); E \mid Y)$ is a principled objective. Similarly, if $U \not\rightarrow Y, U \rightarrow \mathbf{X}, \mathbf{X}\rightarrow Y$, then $I(Y; E \mid \mathbf{X}) = 0$, thereby motivating the minimization of $I(Y; E \mid \phi(\mathbf{X}))$. However, if $U \rightarrow Y \text{ and } U \rightarrow \mathbf{X}$, the interaction between the terms in Equation~\eqref{eq:decomposition} becomes non-trivial, and it remains unclear what constitutes an ideal strategy for addressing them. We answer this question in the next section.



\vspace{-5pt}
\subsection{Predictive information under hidden confounding shift}
\label{sec:predictiveinformationunderconfoundingshift}
\vspace{-5pt}

To understand how hidden confounding variable $U$ that cause both $\mathbf{X}$ and $Y$, impact the predictive information, we consider \textit{two special cases} of the relationship between $\mathbf{X}$ and $Y$: (i) $\mathbf{X}\rightarrow Y$ and (ii) $Y\rightarrow \mathbf{X}$. When $\mathbf{X}\rightarrow Y$, we obtain the inequalities in~\eqref{eq:inequalitiesone} because conditioning on $Y$ opens the path $\phi(\mathbf{X})\leftarrow \mathbf{X}\rightarrow Y \leftarrow U\leftarrow E$ from $\phi(\mathbf{X})$ to $E$ at the collider node $Y$ which results in additional information flow from $\phi(\mathbf{X})$ to $E$ via $Y$. In contrast, conditioning on $\phi(\mathbf{X})$ partially blocks the information flow from $Y$ to $E$ at the node $\mathbf{X}$ as long as $\phi(\mathbf{X})$ encodes some information about $\mathbf{X}$.
\begin{equation}
    \label{eq:inequalitiesone}
     (i)\ \overbrace{I(\phi(\mathbf{X});E|Y)}^{\text{\textcolor{terms2}{Variation}}} \geq \overbrace{I(\phi(\mathbf{X});E)}^{\text{\textcolor{terms2}{Feature shift}}} \quad \quad \quad \quad \quad \quad  (ii)\ \overbrace{I(Y;E)}^{\text{\textcolor{terms2}{Label shift}}}\geq \overbrace{I(Y;E|\phi(\mathbf{X}))}^{\text{\textcolor{terms2}{Concept shift}}}
\end{equation}
Similarly, when $Y \rightarrow  \mathbf{X}$, we obtain the inequalities in~\eqref{eq:inequalitiestwo} because conditioning on $Y$ blocks the information flow from $\phi(\mathbf{X})$ to $E$ at the node $Y$ and conditioning on $\phi(\mathbf{X})$ opens the path $Y\rightarrow \mathbf{X}\leftarrow U\leftarrow E$ from $Y$ to $E$ at the node $\mathbf{X}$ because $\phi(\mathbf{X})$ is the child of $\mathbf{X}$ and conditioning on the child of a collider opens a path via that collider~\citep{pearl2009causality} (recall the preliminaries in~\S~\ref{sec:generaldecomposition}).
\begin{equation}
    \label{eq:inequalitiestwo}
       (i)\  \overbrace{I(\phi(\mathbf{X}); E)}^{\text{\textcolor{terms2}{Feature shift}}} \geq \overbrace{I(\phi(\mathbf{X});E|Y)}^{\text{\textcolor{terms2}{Variation}}}   \quad \quad \quad \quad \quad \quad (ii)\ \overbrace{I(Y;E|\phi(\mathbf{X}))}^{\text{\textcolor{terms2}{Concept shift}}}\geq \overbrace{I(Y;E)}^{\text{\textcolor{terms2}{Label shift}}}
\end{equation}
Using the inequalities in~\eqref{eq:inequalitiesone},~\eqref{eq:inequalitiestwo}, we refine the predictive information decomposition as follows.
\begin{restatable}[]{propn}{informationdecompositionunderconfounding}
\label{propn:informationdecompositionunderconfounding}
For a covariate vector $\mathbf{X}$, label $Y$, an environment variable $E$, a feature extractor $\phi$, the prediction $\hat{Y}$, and an unobserved confounding variable $U$ that cause both $\mathbf{X}$ and $Y$, and if either (i) $\mathbf{X}\rightarrow Y$ or (ii) $Y\rightarrow \mathbf{X}$, the predictive information $I(Y; \hat{Y})$ can be decomposed as follows:
\begin{equation}
\label{eq:decompositionunderconfounding}
        I(Y;\hat{Y}) = \overbrace{I(\phi(\mathbf{X});Y|E)}^{\text{\textcolor{terms1}{Cond. informativeness}}} - \overbrace{I(\phi(\mathbf{X});Y|\hat{Y})}^{\text{\textcolor{terms1}{Residual}}}
\end{equation}
\end{restatable}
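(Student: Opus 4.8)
The plan is to start from the general decomposition in Proposition~\ref{propn:informationdecomposition} and show that, under either causal orientation, the four ``shift'' terms (variation, label shift, feature shift, concept shift) cancel \emph{exactly}, leaving only conditional informativeness minus residual. Denoting the sum of those four terms by $T$, the first step is to regroup them by pairing each unconditioned shift with its conditioned counterpart:
\[
T = \tfrac{1}{2}\bigl[I(\phi(\mathbf{X});E) - I(\phi(\mathbf{X});E\mid Y)\bigr] + \tfrac{1}{2}\bigl[I(Y;E) - I(Y;E\mid\phi(\mathbf{X}))\bigr].
\]

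The key observation is that the two bracketed differences are in fact equal. Applying the chain rule for mutual information to $I(\phi(\mathbf{X}),Y;E)$ expanded in its two orders gives the identity $I(\phi(\mathbf{X});E) + I(Y;E\mid\phi(\mathbf{X})) = I(Y;E) + I(\phi(\mathbf{X});E\mid Y)$, which rearranges to
\[
\delta \;:=\; I(\phi(\mathbf{X});E) - I(\phi(\mathbf{X});E\mid Y) \;=\; I(Y;E) - I(Y;E\mid\phi(\mathbf{X})).
\]
Hence $T = \delta$, and it remains only to prove that $\delta = 0$.

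This is where the inequality pairs from~\eqref{eq:inequalitiesone} and~\eqref{eq:inequalitiestwo} do the work: they sandwich the single quantity $\delta$ from opposite sides. In case (i), $\mathbf{X}\rightarrow Y$, inequality~\eqref{eq:inequalitiesone}(i) reads $I(\phi(\mathbf{X});E\mid Y)\geq I(\phi(\mathbf{X});E)$, forcing $\delta\leq 0$, while~\eqref{eq:inequalitiesone}(ii) reads $I(Y;E)\geq I(Y;E\mid\phi(\mathbf{X}))$, forcing $\delta\geq 0$; together they yield $\delta=0$. In case (ii), $Y\rightarrow\mathbf{X}$, the inequalities of~\eqref{eq:inequalitiestwo} reverse but symmetrically bound $\delta$ above and below by $0$, again giving $\delta=0$. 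Substituting $T=\delta=0$ into~\eqref{eq:decomposition} collapses the middle four terms and leaves precisely~\eqref{eq:decompositionunderconfounding}.

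I expect the main obstacle to be conceptual rather than computational: recognizing that the two superficially slack d-separation inequalities are knit together by the chain-rule identity into a genuine two-sided bound on one and the same quantity $\delta$, so that equality is forced even though neither inequality alone is tight. The only real care required is tracking the sign conventions in both cases, to confirm that the two inequalities genuinely bracket $\delta$ from opposite directions rather than reinforcing the same sign. The d-separation arguments underlying the inequalities themselves are already established in the discussion preceding~\eqref{eq:inequalitiesone} and~\eqref{eq:inequalitiestwo}, so I would invoke them directly rather than re-deriving the open/blocked path analysis.
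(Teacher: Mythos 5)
Your proof is correct and is essentially the paper's argument in different packaging: the paper sandwiches $I(Y;\hat{Y})$ between identical upper and lower bounds obtained from its two expansions of the predictive information (one d-separation inequality per direction in each case), whereas you apply the same two inequalities to sandwich the single quantity $\delta$ (the interaction information) around zero and then substitute into the averaged decomposition. The chain-rule identity you invoke is precisely the statement that the paper's two expansions of $I(Y;\hat{Y})$ are simultaneously valid, so the logical content is the same.
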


In Proposition~\ref{propn:informationdecompositionunderconfounding}, we tackle the two cases: $\mathbf{X}\rightarrow Y$ and $Y\rightarrow \mathbf{X}$ separately. However, the causal relationship $\mathbf{X}\rightarrow Y$ is more prevalent in real-world tabular prediction tasks.  For instance, $11$ out of $16$ datasets considered in recent benchmark studies~\citep{nastl2024do,liu2023on,gardner2023benchmarking} follow the causal structure $\mathbf{X}\rightarrow Y$. In the remaining $5$ out of $16$ datasets, the number of covariates that are caused by $Y$ is much smaller than the number of covariates that cause $Y$. We present the key takeaway from Proposition~\ref{propn:informationdecompositionunderconfounding} below.

\begin{tcolorbox}[colframe=black, boxrule=0.5pt, sharp corners]
Compared to the decomposition in~\eqref{eq:decomposition}, the decomposition in~\eqref{eq:decompositionunderconfounding} (which is obtained for the special cases: (i) $U\rightarrow \mathbf{X}, U\rightarrow Y, \mathbf{X}\rightarrow Y$, (ii) $U\rightarrow \mathbf{X}, U\rightarrow Y, Y\rightarrow \mathbf{X}$) is free from the terms: \textit{\textcolor{terms2}{variation, label shift, feature shift, and concept shift}}. That is, under hidden confounding shift, when either $\mathbf{X}\rightarrow Y$ or $Y\rightarrow \mathbf{X}$, the predictive information can be maximized by maximizing the difference: $\text{\textit{\textcolor{terms1}{conditional informativeness}}} - \text{\textit{\textcolor{terms1}{residual}}}$. 
\end{tcolorbox}

\noindent \textbf{Maximizing conditional informativeness:} In a very recent work,~\citet{prashant2025scalable} proposed a mixture of experts (MoE) model for OOD generalization under hidden confounding shift, where each expert corresponds to a specific hidden confounder assignment. That is, each expert focuses on maximizing the performance within the environment corresponding to a particular value of the hidden confounder. Thus, learning MoE models aligns with the goal of maximizing the conditional informativeness $I(\phi(\mathbf{X});Y|E)$. Equation~\eqref{eq:decompositionunderconfounding} provides theoretical justification for the use of MoE-type models. Beyond supporting such models, the Equation~\eqref{eq:decompositionunderconfounding} also highlights the need for methods that operate under more general confounding shift settings. For instance, the method proposed by~\citet{prashant2025scalable} assumes overlapping confounding support i.e., $\text{supp}(\mathbb{P}(U|\mathcal{E}_{te})) \subseteq \text{supp}(\mathbb{P}(U|\mathcal{E}_{tr}))$, and the proxy variables to be discrete-valued. 

\begin{tcolorbox}[colframe=black, boxrule=0.5pt, sharp corners]
The results in Figure~\ref{fig:introfigure} illustrate that, in a simple setting without confounding overlap and with a continuous-valued proxy, a linear regression model can successfully recover the correct causal relationship between observed covariates and the target across environments when provided with environment-specific information. This serves as a motivational example that emphasizes the need for OOD generalization methods that remain effective even when confounding support is limited and the confounder is continuous-valued.
\end{tcolorbox}

\noindent \textbf{Minimizing residual:} The residual term $I(\phi(\mathbf{X}); Y \mid \hat{Y})$ quantifies how much additional information $\phi(\mathbf{X})$ provides about the true label $Y$ beyond what is already contained in the prediction $\hat{Y}$. The residual term can be expressed as: $I(\phi(\mathbf{X}); Y \mid \hat{Y}) = H(Y \mid \hat{Y}) - H(Y \mid \phi(\mathbf{X}), \hat{Y})$.  The conditional entropy $H(Y \mid \hat{Y})$ is related to the expected cross-entropy loss as: $\mathbb{E}_{Y, \hat{Y}}[\ell_{\mathrm{CE}}(Y, \hat{Y})] = H(Y \mid \hat{Y}) + 
\text{\textit{calibration error}}$~\citep{brocker2009reliability,berta2025rethinking,gruber2025optimizing}. Here $H(Y \mid \hat{Y})$ is known as the \textit{refinement error} that measures the model’s ability to distinguish between classes. Minimizing the residual term $I(\phi(\mathbf{X}); Y \mid \hat{Y})$ thus involves two complementary strategies. First, one can directly minimize $I(\phi(\mathbf{X}); Y \mid \hat{Y})$ by making $\phi(\mathbf{X})$ conditionally independent of the target $Y$ given $\hat{Y}$, i.e., $Y \ind \phi(\mathbf{X}) \mid \hat{Y}$. 
Alternatively, one can minimize the refinement error $H(Y \mid \hat{Y})$, for instance, through training techniques that minimize refinement error for accurate predictions before minimizing the calibration error~\citep{berta2025rethinking}.

\subsection{Impact of additional informative variables}
\label{sec:impactofinformativevariables}
In~\S~\ref{sec:predictiveinformationunderconfoundingshift}, we present the optimal strategies for generalization when the confounder is unobserved. In practice, we sometimes have access to proxies for the hidden confounding variable~\citep{LatentSubgroupShift, tsai2024proxy, prashant2025scalable}, which can be leveraged to substitute for the hidden confounding variable.
We now study the impact of adding such non-causal but informative covariates on predictive information. 
\begin{defn}[Informative Covariates]
A set of covariates $\mathbf{X}_I$ that are not causally related to $Y$ i.e., neither ancestors nor descendants of $Y$ are in $\mathbf{X}_I$, are said to be informative to $Y$ if $\mathbf{X}_I$ and $Y$ are not independent of each other given other causally related covariates $\mathbf{X}$ and $E$ i.e., $Y\not \ind \mathbf{X}_I\mid \mathbf{X},E$.
\end{defn}
Since $U\rightarrow Y$, any covariate that is informative to $U$, is also informative to $Y$ and vice-versa. From the predictive information decomposition in~\eqref{eq:decomposition}, to maximize $I(Y;\hat{Y})$, it is required to minimize the concept shift $I(Y;E\mid \phi(\mathbf{X}))$.~\citet{liu2023on} suggest based on their empirical analysis that collecting additional covariates $\mathbf{X}_I$ such that $\mathbb{P}(Y|\mathbf{X},\mathbf{X}_I)$ is more stable across environments i.e., reducing concept shift, improves the OOD test accuracy. We theoretically show that utilizing more informative variables helps maximizing predictive information by maximizing or minimizing certain terms in~\eqref{eq:decomposition}.

\begin{restatable}[]{propn}{shiftsunderinformativevariables}
\label{propn:shiftsunderinformativevariables}
If $\phi_1(\cdot), \phi_2(\cdot)$ are invertible functions and if $\mathbf{X}_I$ are informative variables such that $Y\not \ind \mathbf{X}_I\mid \mathbf{X},E$, then we have the following inequalities:
\small{
\begin{equation}
\label{eq:informativecovariates}
\begin{aligned}
    &(i)\ I(\phi_2(\{\mathbf{X}\cup \mathbf{X}_I\}); Y|E) > I(\phi_1(\mathbf{X}); Y|E) \quad (ii)\ I(\phi_2(\{\mathbf{X}\cup \mathbf{X}_I\});E) 
    > I(\phi_1(\mathbf{X});E) \\
    & (iii)\ I(Y;E|\phi_2(\{\mathbf{X}\cup \mathbf{X}_I\})) < I(Y;E|\phi_1(\mathbf{X}))  \ \ (iv)\ I(\phi_2(\{\mathbf{X}\cup \mathbf{X}_I\});E\mid Y) 
    > I(\phi_1(\mathbf{X});E\mid Y)
    \end{aligned}
\end{equation}
}
\end{restatable}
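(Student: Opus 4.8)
The plan is to first strip away the invertible feature maps, then dispatch three of the four inequalities by the chain rule, isolating (iii) as the genuinely delicate case.

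First I would invoke the invariance of mutual information under invertible (bijective, measurable) transformations of either argument. Since $\phi_1$ and $\phi_2$ are invertible, $I(\phi_1(\mathbf{X}); \cdot \mid \cdot) = I(\mathbf{X}; \cdot \mid \cdot)$ and $I(\phi_2(\{\mathbf{X}\cup\mathbf{X}_I\}); \cdot \mid \cdot) = I(\{\mathbf{X}, \mathbf{X}_I\}; \cdot \mid \cdot)$. This reduces every inequality in~\eqref{eq:informativecovariates} to a comparison between conditioning on $\mathbf{X}$ alone and on the enlarged set $(\mathbf{X}, \mathbf{X}_I)$, eliminating $\phi_1$ and $\phi_2$ from consideration entirely.

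Inequalities (i), (ii), and (iv) then follow from the chain rule for (conditional) mutual information. For (i), $I(\mathbf{X}, \mathbf{X}_I; Y \mid E) = I(\mathbf{X}; Y \mid E) + I(\mathbf{X}_I; Y \mid \mathbf{X}, E)$, and the last term is strictly positive precisely by the informativeness hypothesis $Y \not\ind \mathbf{X}_I \mid \mathbf{X}, E$. For (ii) and (iv), I would write $I(\mathbf{X}, \mathbf{X}_I; E) = I(\mathbf{X}; E) + I(\mathbf{X}_I; E \mid \mathbf{X})$ and $I(\mathbf{X}, \mathbf{X}_I; E \mid Y) = I(\mathbf{X}; E \mid Y) + I(\mathbf{X}_I; E \mid \mathbf{X}, Y)$; in each case the extra term is positive because, since $\mathbf{X}_I$ is a proxy for $U$ (a common cause) and $E$ drives the shifts in $U$, the chain $E \rightarrow U \rightarrow \mathbf{X}_I$ stays open after conditioning on $\mathbf{X}$, and also after additionally conditioning on $Y$, as the intermediate node $U$ is never placed in the conditioning set. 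Faithfulness to the causal graph of Figure~\ref{fig:causalmodel} converts each open path into the required strict inequality.

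The main obstacle is (iii), where both sides are generically positive, so neither the chain rule nor a purely qualitative d-separation argument suffices. I would reduce it to a statement about interaction information via the identity $I(Y; E \mid \mathbf{X}) - I(Y; E \mid \mathbf{X}, \mathbf{X}_I) = I(Y; \mathbf{X}_I \mid \mathbf{X}) - I(Y; \mathbf{X}_I \mid \mathbf{X}, E)$, obtained by expanding $I(Y; E, \mathbf{X}_I \mid \mathbf{X})$ in the two conditioning orders. It then suffices to show $I(Y; \mathbf{X}_I \mid \mathbf{X}, E) < I(Y; \mathbf{X}_I \mid \mathbf{X})$, i.e. that conditioning on $E$ strictly reduces the $Y$--$\mathbf{X}_I$ dependence. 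Given $\mathbf{X}$, this dependence is carried entirely through the fork $Y \leftarrow U \rightarrow \mathbf{X}_I$; because $E \rightarrow U$, conditioning on $E$ removes the $E$-driven component of the common cause $U$'s variation, leaving only the residual coupling through $U \mid E$. This produces strictly positive interaction information—redundancy of the two proxies $E$ and $\mathbf{X}_I$ for $U$—and hence the claimed strict decrease in concept shift. I expect this redundancy step to be the crux, since it requires more than faithfulness, and I would secure it either through the additive-noise/Gaussian proxy model underlying Figure~\ref{fig:introfigure}, where the shared-variance computation makes the inequality explicit, or through a general argument that conditioning on an ancestor of the unique common cause cannot increase the dependence that cause induces between its descendants.
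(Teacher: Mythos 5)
Your proposal follows essentially the same route as the paper's proof: invertibility strips away $\phi_1,\phi_2$, the chain rule plus the hypothesis $Y\not\ind\mathbf{X}_I\mid\mathbf{X},E$ (resp.\ the open path $E\leftrightarrow U\rightarrow\mathbf{X}_I$ surviving conditioning on $\mathbf{X}$ and $Y$) handles (i), (ii), (iv), and (iii) is reduced to $I(Y;\mathbf{X}_I\mid\mathbf{X})>I(Y;\mathbf{X}_I\mid\mathbf{X},E)$ via exactly the same two-way expansion of $I(Y;E,\mathbf{X}_I\mid\mathbf{X})$. If anything you are more careful than the paper on the crux of (iii): the paper justifies that final inequality only with the heuristic that conditioning on a non-collider ``usually'' reduces mutual information, whereas you correctly flag that this redundancy step needs more than faithfulness and sketch how to secure it.
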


Proposition~\ref{propn:shiftsunderinformativevariables} shows that adding informative covariates increases conditional informativeness (\ref{eq:informativecovariates}.$i$) and feature shift (\ref{eq:informativecovariates}.$ii$), while reducing concept shift (\ref{eq:informativecovariates}.$iii$). However, adding informative covariates also amplifies variation (\ref{eq:informativecovariates}.$iv$), which may necessitate dedicated strategies to control the variation.

\vspace{-5pt}
\section{Experimental results}
\label{sec:experimentsandresults}
\vspace{-5pt}

We conduct experiments on both real-world and synthetic datasets to analyze: (i) the presence of hidden confounding shift in real-world data, (ii) how the components of the decomposition in~\eqref{eq:decomposition} affect OOD generalization under hidden confounding shifts, and (iii) the role of informative covariates in improving generalization. We consider eight real-world tabular benchmark datasets: \textit{Food stamps, Readmission, Income, Public coverage, Unemployment, Diabetes, Hypertension, and ASSISTments}. These datasets and corresponding domain splits are adopted from \textit{TableShift} benchmark~\citep{gardner2023benchmarking}. We use ID test and OOD test accuracies to measure the performance of models. We perform experiments on two ERM-based methods: XGBoost (XGB)~\citep{xgb} and multi-layer perceptron (MLP), two domain generalization methods: IRM~\citep{irm}, VREX~\citep{krueger2021out}, and one robust learning based method: Group DRO (GDRO)~\citep{sagawa2019distributionally}. We present average results over $5$ different random hyperparameter choices as described in Appendix~\ref{sec:experimentalsetup}. The effect of different random seeds is statistically insignificant for these datasets~\citep{gardner2023benchmarking}. Additional details of the experimental setup are provided in Appendix~\S~\ref{sec:experimentalsetup}. Results with hyperparameter tuning are presented in Appendix~\ref{sec:additionalresults}.

\noindent \textbf{Hidden confounding shift:} 
Recall that hidden confounding shifts induce observable shifts in the distributions $\mathbb{P}(\mathbf{X})$, $\mathbb{P}(\mathbf{X}\mid Y)$, $\mathbb{P}(Y)$, and $\mathbb{P}(Y\mid\mathbf{X})$. Results in Table~\ref{tab:datashifts} show that label shift, covariate shift, conditional covariate shift, and concept shift are all present in real-world datasets, indicating the presence of hidden confounding shifts. For the qualitative analysis, we query GPT-4o~\citep{achiam2023gpt} to list potential hidden confounders for several benchmark datasets~\citep{gardner2023benchmarking,nastl2024do}. These results are presented in Appendix~\S~\ref{sec:detailsonrealworlddatasets}. For instance, in the \textit{food stamps} dataset, unmeasured factors such as \textit{economic policies} specific to each state may influence both \textit{household} income and \textit{food stamp recipiency}.
\begin{table}
    \centering
    \caption{Quantifying distribution shifts. Mean$\pm$standard deviation is computed over 10 random subsets of 40,000 samples. Statistical significance against a mean of zero is assessed via one-sample t-tests ($\alpha=0.05$), confirming all measures are significantly different from zero ($\text{p-value}\approx0$).}
    \scalebox{0.9}{
    \begin{tabular}{l|cccc}
    \toprule
        \textbf{Dataset} & \textbf{Conditional Covariate Shift} & \textbf{Label shift} & \textbf{Covariate Shift} & \textbf{Concept shift} \\
        &$I(\mathbf{X};E|Y)$&$I(Y;E)$&$I(\mathbf{X};E)$&$I(Y;E|\mathbf{X})$\\
        \midrule
         Readmission       & $0.107\pm0.002$ & $0.068\pm0.002$ & $0.097\pm0.002$ & $2.032\pm0.000$ \\
         Food stamps       & $0.126\pm0.004$ & $0.030\pm0.003$ & $0.108\pm0.001$ & $2.118\pm0.001$ \\
         Income            & $0.168\pm0.002$ & $0.075\pm0.003$ & $0.147\pm0.001$ & $2.059\pm0.002$ \\
         Public coverage   & $0.231\pm0.002$ & $0.412\pm0.006$ & $0.222\pm0.002$ & $1.945\pm0.001$ \\
         Unemployment      & $0.117\pm0.001$ & $0.019\pm0.002$ & $0.114\pm0.002$ & $2.010\pm0.003$ \\
         Diabetes          & $0.032\pm0.002$ & $0.048\pm0.001$ & $0.022\pm0.002$ & $2.132\pm0.001$ \\
         Hypertension      & $0.090\pm0.002$ & $0.183\pm0.003$ & $0.037\pm0.001$ & $1.883\pm0.004$ \\
         ASSISTments       & $0.293\pm0.002$ & $0.260\pm0.001$ & $0.306\pm0.002$ & $0.367\pm0.004$ \\
    \bottomrule
    \end{tabular}
    }
    \vspace{-7pt}
    \label{tab:datashifts}
\end{table}

\noindent \textbf{Conditional informativeness vs. accuracy:} From Proposition~\ref{propn:informationdecompositionunderconfounding}, under hidden confounding shift, maximizing the difference $\text{\textit{\textcolor{terms1}{conditional informativeness}}}-\text{\textit{\textcolor{terms1}{residual}}}$ is essential for maximizing predictive information. Results on the first six datasets (that have more than one training environments~\citep{nastl2024do,gardner2023benchmarking}) listed in Table~\ref{tab:datashifts} are shown in Figure~\ref{fig:shiftdifferences}. Dataset-wise results are in Appendix~\ref{sec:additionalresults}. These results show that the difference $\text{\textit{\textcolor{terms1}{conditional informativeness}}}-\text{\textit{\textcolor{terms1}{residual}}}$ is positively correlated with ID and OOD accuracy, with XGB being the best performing model with highest value of $\text{\textit{\textcolor{terms1}{conditional informativeness}}}-\text{\textit{\textcolor{terms1}{residual}}}$. As shown in the dataset-wise results in Appendix~\ref{sec:additionalresults}, we observe that in many settings, due to hidden confounding shifts, the bars denoted with \textcolor{terms2}{orange} are close to zero because the sum $- \text{\textit{\textcolor{terms2}{variation / 2}}} + \text{\textit{\textcolor{terms2}{label shift / 2}}}  + \text{\textit{\textcolor{terms2}{feature shift / 2}}} - \text{\textit{\textcolor{terms2}{concept shift / 2}}}$ is zero under hidden confounding shift (Proposition~\ref{propn:informationdecompositionunderconfounding}). The relative ID / OOD performance of the methods shown in Figure~\ref{fig:differentshifts} is consistent with previously reported results in the literature~\citep{nastl2024do,gardner2023benchmarking}.

\begin{figure}
    \centering
    \begin{minipage}{0.65\linewidth}
        \centering
        \includegraphics[width=0.8\linewidth]{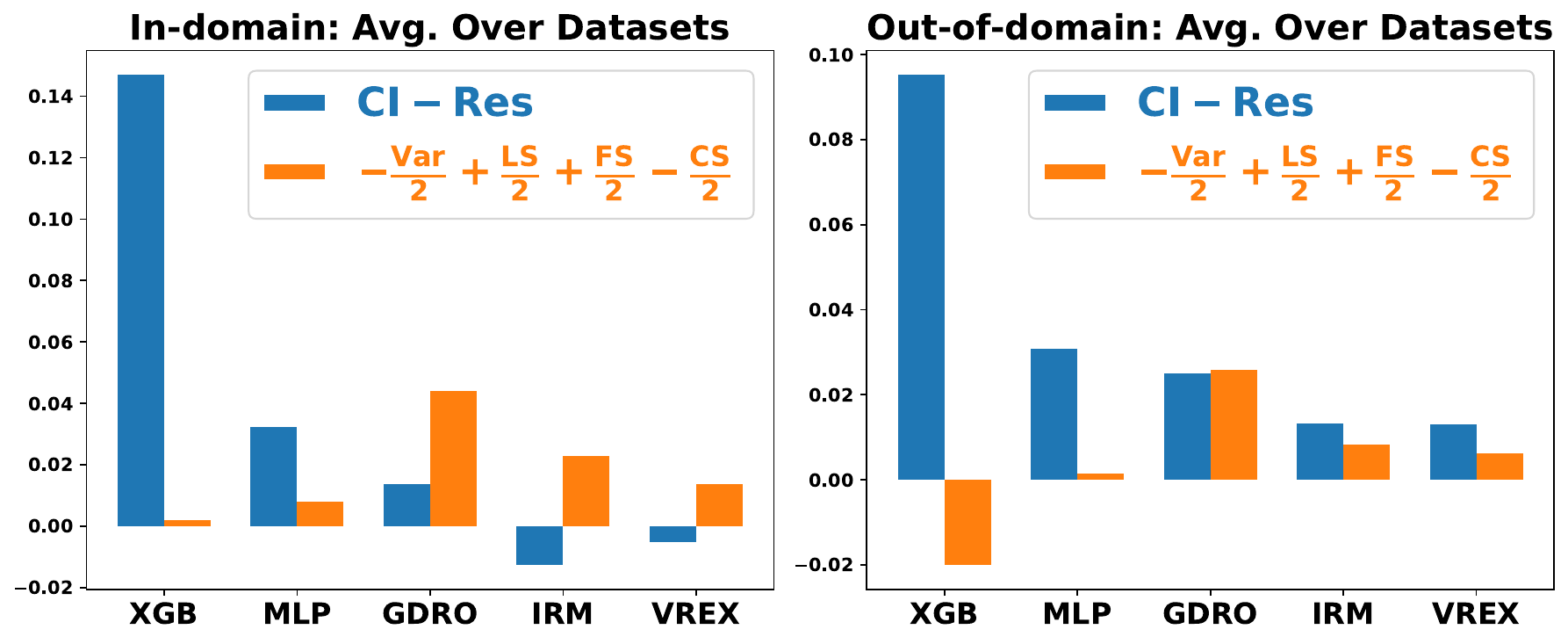}
    \end{minipage}
    \hfill
    \begin{minipage}{0.33\linewidth}
        \centering
        \scalebox{0.75}{
        \begin{tabular}{l|cc}
        \toprule
            \textbf{Method} & \textbf{Avg. ID}& \textbf{Avg. OOD} \\
            &\textbf{Test Acc.}&\textbf{Test Acc.}\\
        \midrule
            XGB & \cellcolor{lightmintgreen}$\mathbf{85.22}$& \cellcolor{lightmintgreen}$\mathbf{77.00}$\\
            MLP    & $83.64$& $73.29$\\
            GDRO & $78.40$ &$72.28$\\
            IRM & $71.17$&$64.89$\\
            VREX   & $59.74$ & $63.06$\\
        \bottomrule
        \end{tabular}
        }
    \end{minipage}
    
    \caption{\footnotesize 
     The difference $\text{\textit{\textcolor{terms1}{conditional informativeness}}} - \text{\textit{\textcolor{terms1}{residual}}}$ in the plots is positively correlated with the average ID and OOD test accuracy over the eight datasets shown in the table on the right.}
    \label{fig:shiftdifferences}
    \vspace{-10pt}
\end{figure}

\noindent \textbf{Informative covariates vs. accuracy:} We now study how inclusion of informative covariates helps in achieving better OOD generalization. To this end, we adopt the covariate partitioning from~\citep{nastl2024do}, where all available covariates are grouped into three nested covariate subsets: $\text{\textit{causal covariates}} \subseteq \text{\textit{arguably causal covariates}} \subseteq \text{\textit{all covariates}}$ (see Appendix~\S~\ref{sec:experimentalsetup}). We evaluate how the terms in the decomposition in~\eqref{eq:decomposition} affect ID and OOD performance when going from one covariate subset to another covariate subset. 
To this end, we use what we call the \textit{sign consistency metric}, which works as follows. When we move from one covariate set to a larger covariate set, for any term in~\eqref{eq:decomposition} with a positive coefficient, such as conditional informativeness, we count how often its value rises when accuracy improves. For negatively weighted terms, such as concept shift, we count how often its value decreases when accuracy improves. The metric is the fraction of observations where a term's change aligns with its \textit{beneficial} direction, thus capturing how reliably it contributes to better generalization. See Appendix~\S~\ref{sec:experimentalsetup} for the formal definition. Table~\ref{tab:signconsistency} shows that both conditional informativeness and concept shift exhibit high sign consistency with accuracy gains. While reducing concept shift is crucial, XGB further enhances conditional informativeness compared to other methods, yielding additional performance improvements. Dataset-specific results are presented in Appendix~\S~\ref{sec:additionalresults}. 

Note that in real-world datasets, it is often challenging to verify whether \textit{all available covariates} include \textit{every relevant or sufficiently informative} variable. To further test our hypothesis that informative covariates enhance generalization, we conduct experiments on synthetic data with a known causal structure: $U \rightarrow X$, $U \rightarrow Y$, $X \rightarrow Y$, and $U \rightarrow \mathbf{X}_I$, where $\mathbf{X}_I$ denotes informative covariates caused by the hidden confounder $U$. We observe that including $\mathbf{X}_I$ leads to improvements in conditional informativeness and feature shift, while reducing concept shift, as shown in Figure~\ref{fig:varyings}. Notably, the variation component remains consistently zero due to the simplicity of the problem. The synthetic data generation process and additional results are provided in Appendix~\S~\ref{sec:syntheticdataexperiments}.
\begin{table}
    \centering
    \caption{\footnotesize For all methods, sign consistency value is high for concept shift (\textbf{\textcolor{terms2}{CS}}). However, the sign consistency metric for conditional informativeness (\textbf{\textcolor{terms1}{CI}}) is crucial for generalization according to Proposition~\ref{propn:informationdecompositionunderconfounding}, and XGB excels at this. \textcolor{terms1}{Res}: residual, \textcolor{terms2}{Var}: variation, \textcolor{terms2}{FS}: feature shift, C: causal, AC: arguably causal, A: all.}
    \scalebox{0.8}{
    \begin{tabular}{l|ccccc|ccc|ccc}
    \toprule
    &\multicolumn{5}{c}{\textbf{Sign Consistency Metric ($\uparrow$)}}&\multicolumn{3}{|c}{\textbf{ID-Test Accuracy ($\uparrow$)}}&\multicolumn{3}{|c}{\textbf{OOD-Test Accuracy ($\uparrow$)}}\\
    \midrule
    \textbf{Method} & \textbf{\textcolor{terms1}{CI}} & \textbf{\textcolor{terms2}{Var}} & \textbf{\textcolor{terms2}{FS}} & \textbf{\textcolor{terms2}{CS}} & \textbf{\textcolor{terms1}{Res}} & \textbf{C} & \textbf{AC} & \textbf{A} &\textbf{C} & \textbf{AC} & \textbf{A}\\
    \midrule
    XGB  & \cellcolor{lightmintgreen}$\mathbf{0.89}$ & $0.61$ & $0.36$ &\cellcolor{lightmintgreen} $0.89$ & $0.17$ &\cellcolor{lightmintgreen}$\mathbf{81.24}$&\cellcolor{lightmintgreen}$\mathbf{84.84}$&\cellcolor{lightmintgreen}$\mathbf{85.22}$& \cellcolor{lightmintgreen}$\mathbf{69.92}$ & \cellcolor{lightmintgreen}$\mathbf{76.72}$ & \cellcolor{lightmintgreen}$\mathbf{77.00}$ \\
    MLP  & $0.75$ & $0.64$ & $0.28$ &\cellcolor{lightmintgreen} $0.89$ & $0.33$ &$80.51$ &$81.93$&$83.64$&$69.70$ & $73.84$ & $73.29$ \\
    GDRO & $0.69$ & $0.67$ & $0.19$ &\cellcolor{lightmintgreen} $0.94$ & $0.36$ &$80.92$ &$83.76$&$78.40$&$69.61$ & $73.49$ & $72.28$ \\
    IRM  & $0.78$ & $0.61$ & $0.25$ &\cellcolor{lightmintgreen} $0.92$ & $0.25$ &$67.91$ &$68.22$&$71.17$&$61.21$ & $62.96$ & $64.89$ \\
    VREX & $0.64$ & $0.61$ & $0.25$ &\cellcolor{lightmintgreen} $0.89$ & $0.44$ &$54.39$ &$62.32$&$59.74$&$56.62$ & $63.53$ & $63.06$ \\
    \bottomrule
    \end{tabular}
    }
    \label{tab:signconsistency}
    \vspace{-10pt}
\end{table}

\begin{figure}
    \centering
    \includegraphics[width=0.98\linewidth]{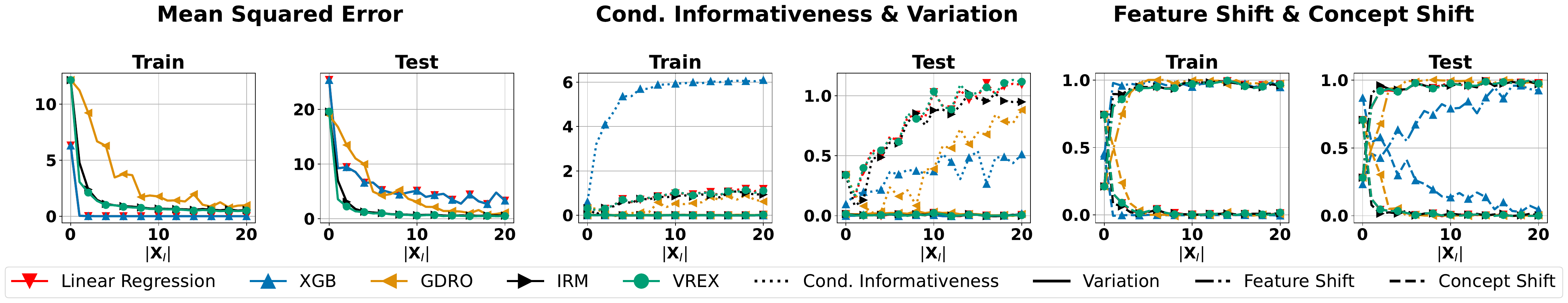}
    \caption{\footnotesize Adding more proxy variables $\mathbf{X}_I$ of $U$ that are informative to $Y$ helps in reducing MSE, increasing conditional informativeness and feature shift while reducing concept shift.}
    \label{fig:varyings}
    \vspace{-1.3em}
\end{figure}

\vspace{-0.6em}
\section{Discussion, limitations, and future work}
\label{sec:conclustions}
\vspace{-0.6em}

In summary, we decompose the predictive information between model outputs and true outcomes to reveal the factors limiting OOD generalization (Equation~\eqref{eq:decomposition}). We argue that hidden confounding shifts—reflected in covariate and outcome distribution changes across environments—are unavoidable, and no finite set of observed covariates or proxies can fully resolve the induced ambiguity. Since confounding entangles covariate, label, concept, and feature shifts (inequalities in~\eqref{eq:inequalitiesone}, \eqref{eq:inequalitiestwo}), treating these in isolation is ineffective. We show that learning environment-specific input–output mappings is more effective than enforcing a single invariant predictor (Equation~\eqref{eq:decompositionunderconfounding}). By highlighting the inevitability of hidden confounding and the need to address it directly, our work lays a foundation for more robust OOD methods. Key challenges ahead include: (i) developing algorithms effective under limited confounder overlap, (ii) quantifying the cost–accuracy trade-off of acquiring informative covariates, (iii) handling entangled shifts without relying on untestable proxy assumptions, and (iv) modeling ambiguity from unobserved confounders to inspire new OOD-robust paradigms.

\section*{Acknowledgements}
The authors thank Anurag Singh for proofreading the paper, which improved its readability. Moreover, the authors gratefully acknowledge the Gauss Centre for Supercomputing e.V. for funding this project by providing computing time on the GCS Supercomputer JUWELS at Jülich Supercomputing Centre (JSC). We also gratefully acknowledge funding from the European Research Council (ERC) under the Horizon Europe Framework Programme (HORIZON) for proposal number 101116395 SPARSE-ML.

\bibliography{bibliography}
\bibliographystyle{plainnat}


\newpage

\appendix

\renewcommand{\thesection}{\Alph{section}}
\renewcommand{\thefigure}{\thesection\arabic{figure}}
\renewcommand{\thetable}{\thesection\arabic{table}}
\setcounter{figure}{0} 
\setcounter{table}{0}  

\section*{Appendix}

\section{Proofs of propositions}
\label{sec:proofs}
\informationdecomposition*
\begin{proof}

From the causal graph shown in Figure~\ref{fig:causalmodel} of the main paper, in the causal substructure: $\hat{Y}\leftarrow \phi(\mathbf{X})\leftarrow \mathbf{X}\leftrightarrow Y$, $\hat{Y}$ provides no additional information in predicting $Y$ given either $\phi(\mathbf{X})$ or $\mathbf{X}$. This is because, the path $\hat{Y}\leftarrow \phi(\mathbf{X})\leftarrow \mathbf{X}\leftrightarrow Y$, between $\hat{Y}$ and $Y$ is blocked by either $\{\phi(\mathbf{X})\}$ or $\{\mathbf{X}\}$. This implies $\hat{Y}\ind Y|\phi(\mathbf{X})$ and $\hat{Y}\ind Y|\mathbf{X}$. Now consider the expansion of $I(Y;\hat{Y})$.

\begin{equation}
\label{eq:one}
\begin{aligned}
    I(Y;\hat{Y}) &= I(Y;\phi(\mathbf{X}),\hat{Y}) - I(\phi(\mathbf{X});Y|\hat{Y})\quad \quad (\text{Introduce } \phi(\mathbf{X}))\\
    &=I(\phi(\mathbf{X});Y)+\underbrace{I(Y;\hat{Y}|\phi(\mathbf{X}))}_{0}-I(\phi(\mathbf{X});Y|\hat{Y}) \quad \quad (\text{Expand and use }  \hat{Y}\ind Y \mid\phi(\mathbf{X}))\\
    &=I(\phi(\mathbf{X}),E;Y)-I(Y;E|\phi(\mathbf{X}))-I(\phi(\mathbf{X});Y|\hat{Y}) \quad \quad(\text{Introduce } \mathbf{E})\\
    &=I(Y;E)+I(\phi(\mathbf{X});Y|E)-I(Y;E|\phi(\mathbf{X}))-I(\phi(\mathbf{X});Y|\hat{Y}) \quad \quad (\text{Expand})\\
\end{aligned}
\end{equation}
Similarly,
\begin{equation}
\label{eq:two}
\begin{aligned}
I(Y;\hat{Y}) &= I(Y;\phi(\mathbf{X}),\hat{Y}) - I(\phi(\mathbf{X});Y|\hat{Y})\quad \quad (\text{Introduce } \phi(\mathbf{X}))\\
    &=I(\phi(\mathbf{X});Y)+\underbrace{I(Y;\hat{Y}|\phi(\mathbf{X}))}_{0}-I(\phi(\mathbf{X});Y|\hat{Y})\quad \quad (\text{Expand and use }  \hat{Y}\ind Y \mid\phi(\mathbf{X}))\\
    &=I(Y,E;\phi(\mathbf{X}))-I(\phi(\mathbf{X});E|Y)-I(\phi(\mathbf{X});Y|\hat{Y}) \quad \quad (\text{Introduce } \mathbf{E})\\
    &=I(\phi(\mathbf{X});E)+I(\phi(\mathbf{X});Y|E)-I(\phi(\mathbf{X});E|Y)-I(\phi(\mathbf{X});Y|\hat{Y})\quad \quad (\text{Expand})\\
\end{aligned}
\end{equation}
Summing Equations~\eqref{eq:one} and \eqref{eq:two} and grouping the terms gives the desired expression:
\begin{equation*}
\begin{aligned}
    I(Y;\hat{Y}) &= I(\phi(\mathbf{X});Y|E) - \frac{I(\phi(\mathbf{X});E|Y)}{2} + \frac{I(Y;E)}{2}\\
    &+\frac{I(\phi(\mathbf{X});E)}{2}-\frac{I(Y;E|\phi(\mathbf{X}))}{2}-I(\phi(\mathbf{X});Y|\hat{Y}).
\end{aligned}
\end{equation*}
To get the similar decomposition with $\mathbf{X}$ instead of $\phi(\mathbf{X})$, use $\hat{Y}\ind Y\mid \mathbf{X}$ and introduce $\mathbf{X}$ into the expansion instead of $\phi(\mathbf{X})$.
\end{proof}

\informationdecompositionunderconfounding*

\begin{proof}
Following the causal graph shown in Figure~\ref{fig:causalmodel} of the main paper, we prove the proposition in two separate cases: $\mathbf{X}\rightarrow Y$ and $Y\rightarrow \mathbf{X}$.

\noindent \textbf{Case 1: $\mathbf{X}\rightarrow Y$}. In this case, we have the following inequality because conditioning on $Y$ opens the path $\phi(\mathbf{X})\leftarrow \mathbf{X}\rightarrow Y\leftarrow U\leftarrow E$ between $\phi(\mathbf{X})$ and $E$ at the collider node $Y$:
\begin{equation}
\label{eq:fact1}
    I(\phi(\mathbf{X});E|Y)\geq I(\phi(\mathbf{X});E).
\end{equation}
Additionally, considering the causal substructure: $Y\leftarrow \mathbf{X}\leftarrow U \leftrightarrow E$, we have the following inequality because conditioning on $\phi(\mathbf{X})$ partially blocks the information flow from $Y$ to $E$ at the node $\mathbf{X}$ as long as $\phi(\mathbf{X})$ encodes some information about $\mathbf{X}$:
\begin{equation}
\label{eq:fact2}
    I(Y;E)\geq I(Y;E|\phi(\mathbf{X})).
\end{equation}

Now consider the following expansion from Equation~\eqref{eq:two} in the proof of Proposition~\ref{propn:informationdecomposition}.
\begin{equation*}
\begin{aligned}
I(Y;\hat{Y}) &=I(\phi(\mathbf{X});E)+I(\phi(\mathbf{X});Y|E)-I(\phi(\mathbf{X});E|Y)-I(\phi(\mathbf{X});Y|\hat{Y})\quad \quad (\text{Equation}~\eqref{eq:two})\\
    &\leq I(\phi(\mathbf{X});Y|E)+I(\phi(\mathbf{X});E|Y)-I(\phi(\mathbf{X});E|Y)-I(\phi(\mathbf{X});Y|\hat{Y}) \quad \quad (\text{Using}~\eqref{eq:fact1})\\
    &= I(\phi(\mathbf{X});Y|E)-I(\phi(\mathbf{X});Y|\hat{Y})\\
\end{aligned}
\end{equation*}
Similarly, consider the following expansion from Equation~\eqref{eq:one} from the proof of Proposition~\ref{propn:informationdecomposition}:
\begin{equation*}
\begin{aligned}
    I(Y;\hat{Y}) &=I(Y;E)+I(\phi(\mathbf{X});Y|E)-I(Y;E|\phi(\mathbf{X}))-I(\phi(\mathbf{X});Y|\hat{Y})\quad \quad (\text{Equation}~\eqref{eq:one})\\
    &\geq I(Y;E)+I(\phi(\mathbf{X});Y|E)-I(Y;E)-I(\phi(\mathbf{X});Y|\hat{Y})\quad \quad (\text{Using}~\eqref{eq:fact2})\\
    &=I(\phi(\mathbf{X});Y|E)-I(\phi(\mathbf{X});Y|\hat{Y})\\
\end{aligned}
\end{equation*}

That is, we have $I(\phi(\mathbf{X});Y|E)-I(\phi(\mathbf{X});Y|\hat{Y})\leq I(Y;\hat{Y})\leq I(\phi(\mathbf{X});Y|E)-I(\phi(\mathbf{X});Y|\hat{Y})$.

\noindent \textbf{Case 2: $Y\rightarrow \mathbf{X}$}. In this scenario, we have the following inequality because conditioning on $\phi(\mathbf{X})$ opens the path $E\rightarrow U\rightarrow \mathbf{X}\leftarrow Y$ between $Y$ and $E$ at the collider node $\mathbf{X}$ because $\phi(\mathbf{X})$ is a child of $\mathbf{X}$~\citep{pearl2009causality}.
\begin{equation}
\label{eq:fact3}
     I(Y;E|\phi(\mathbf{X}))\geq I(Y;E)
\end{equation}
Additionally, we have the following inequality because conditioning on $Y$ blocks the path $\phi(\mathbf{X})\leftarrow \mathbf{X} \leftarrow Y \leftarrow U \leftrightarrow E$ at $Y$ and hence less information flow between $\phi(\mathbf{X})$ and $E$.
\begin{equation}
    \label{eq:fact4}
    I(\phi(\mathbf{X}); E) \geq I(\phi(\mathbf{X});E|Y)
\end{equation}

Now consider the following expansion from Equation~\eqref{eq:one} from the proof of Proposition~\ref{propn:informationdecomposition}.
    \begin{equation*}
\begin{aligned}
    I(Y;\hat{Y}) &= I(Y;E)+I(\phi(\mathbf{X});Y|E)-I(Y;E|\phi(\mathbf{X}))-I(\phi(\mathbf{X});Y|\hat{Y}) \quad \quad  (\text{Equation}~\eqref{eq:one})\\
    &\leq I(\phi(\mathbf{X});Y|E)+I(Y;E|\phi(\mathbf{X}))-I(Y;E|\phi(\mathbf{X}))-I(\phi(\mathbf{X});Y|\hat{Y}) \quad \quad (\text{Using}~\eqref{eq:fact3})\\
    &= I(\phi(\mathbf{X});Y|E)-I(\phi(\mathbf{X});Y|\hat{Y})\\
\end{aligned}
\end{equation*}

Similarly, consider the following expansion from Equation~\eqref{eq:two} from the proof of Proposition~\ref{propn:informationdecomposition}.
\begin{equation*}
\begin{aligned}
I(Y;\hat{Y}) &=I(\phi(\mathbf{X});E)+I(\phi(\mathbf{X});Y|E)-I(\phi(\mathbf{X});E|Y)-I(\phi(\mathbf{X});Y|\hat{Y})\quad \quad (\text{Equation}~\eqref{eq:two})\\
    &\geq I(\phi(\mathbf{X});Y|E)+I(\phi(\mathbf{X});E)-I(\phi(\mathbf{X});E)-I(\phi(\mathbf{X});Y|\hat{Y}) \quad \quad (\text{Using}~\eqref{eq:fact4})\\
    &= I(\phi(\mathbf{X});Y|E)-I(\phi(\mathbf{X});Y|\hat{Y})\\
\end{aligned}
\end{equation*}
That is, we have $I(\phi(\mathbf{X});Y|E)-I(\phi(\mathbf{X});Y|\hat{Y})\leq I(Y;\hat{Y})\leq I(\phi(\mathbf{X});Y|E)-I(\phi(\mathbf{X});Y|\hat{Y})$. 

Since the upper bound and lower bounds are the same in both cases, we have the desired result $I(Y;\hat{Y})= I(\phi(\mathbf{X});Y|E)-I(\phi(\mathbf{X});Y|\hat{Y})$.
\end{proof}

\shiftsunderinformativevariables*
\begin{proof}

(i) By the chain rule of mutual information, we have: $I(\mathbf{X},\mathbf{X}_I;Y|E) = I(\mathbf{X};Y|E) + I(\mathbf{X}_I;Y|E,\mathbf{X})$. We have $I(\mathbf{X}_I;Y|E,\mathbf{X}) > 0$ because  $Y\not \ind \mathbf{X}_I \mid \mathbf{X},E$. Hence, we have $I(\mathbf{X},\mathbf{X}_I;Y|E) > I(\mathbf{X};Y|E)$. Since $\phi_1(\cdot), \phi_2(\cdot)$ are invertible functions and mutual information is invariant to invertible transformations, we have $I(\phi_2(\{\mathbf{X}\cup\mathbf{X}_I\});Y|E) > I(\phi_1(\mathbf{X});Y|E)$.

(ii) We consider the causal graph shown in Figure~\ref{fig:causalmodel}. From the chain rule of mutual information, we have: $I(\mathbf{X},\mathbf{X}_I;E) = I(\mathbf{X};E) + I(\mathbf{X}_I;E|\mathbf{X})$. Since $E\leftrightarrow U\rightarrow Y$, any covariate that is informative to $Y$ is also informative to $U$. Since $E\leftrightarrow U$, we have $I(\mathbf{X}_I;E|\mathbf{X})> 0$, and hence $I(\mathbf{X},\mathbf{X}_I;E) > I(\mathbf{X};E)$. Since $\phi_1(\cdot), \phi_2(\cdot)$ are invertible functions and mutual information is invariant to invertible transformations, we have $I(\phi_2(\{\mathbf{X}\cup\mathbf{X}_I\});E) > I(\phi_1(\mathbf{X});E)$.

(iii) Consider $I(Y;E,\mathbf{X}_I|\mathbf{X})$. It can be expressed in two ways:
\begin{equation*}
I(Y;E,\mathbf{X}_I\mid \mathbf{X}) = I(Y;E\mid \mathbf{X}) + I(Y;\mathbf{X}_I \mid E, \mathbf{X})= I(Y;\mathbf{X}_I\mid \mathbf{X}) + I(Y; E \mid \mathbf{X}_I, \mathbf{X})
\end{equation*}
Equating and rearranging the terms gives the following.
\begin{equation*}
I(Y;E\mid \mathbf{X}) - I(Y; E \mid \mathbf{X}_I, \mathbf{X}) = I(Y;\mathbf{X}_I\mid \mathbf{X}) - I(Y;\mathbf{X}_I \mid E, \mathbf{X})    
\end{equation*}
Since $Y\not \ind \mathbf{X}_I\mid E,\mathbf{X}$, we have $I(Y;\mathbf{X}_I\mid \mathbf{X}) > I(Y;\mathbf{X}_I \mid E, \mathbf{X})$ because conditioning usually reduces mutual information unless the additional conditioning variable opens any collider paths. Here $E$ is neither a collider not a descendant of any collider. Thus, $I(Y;E\mid \mathbf{X}) > I(Y; E \mid \mathbf{X}_I, \mathbf{X})$.  Since $\phi_1(\cdot), \phi_2(\cdot)$ are invertible functions and mutual information is invariant to invertible transformations, we have $I(Y;E\mid \phi_1(\mathbf{X})) > I(Y; E \mid \phi_2(\{\mathbf{X}_I\cup \mathbf{X}\}))$.

(iv) By the chain rule of mutual information, we have: $I(\mathbf{X},\mathbf{X}_I;E\mid Y) = I(\mathbf{X};E\mid Y) + I(\mathbf{X}_I;E|\mathbf{X},Y)$. Since $E\leftrightarrow U\rightarrow Y$, any covariate that is informative to $Y$ is also informative to $U$. Since $E\leftrightarrow U$, we have $I(\mathbf{X}_I;E|\mathbf{X},Y)> 0$, and hence $I(\mathbf{X},\mathbf{X}_I;E\mid Y) > I(\mathbf{X};E\mid Y)$. Since $\phi_1(\cdot), \phi_2(\cdot)$ are invertible functions and mutual information is invariant to invertible transformations, we have $I(\phi_2(\{\mathbf{X}\cup \mathbf{X}_I\};E\mid Y) > I(\phi_1(\mathbf{X});E\mid Y)$.
\end{proof}

\section{Experimental setup}
\label{sec:experimentalsetup}

\begin{table}[ht]
\centering
\caption{Hyperparameters and their possible values considered in this paper.}
\begin{tabular}{lll}
\toprule
\textbf{Model}     & \textbf{Hyperparameter}                   & \textbf{Search values}              \\

\midrule
   & Learning rate                    & $[0.1, 0.2, 0.3]$            \\
   & Maximum tree depth               & $[4, 5, 6]$                  \\
   & Minimum child weight             & $[0.1, 1, 10]$               \\
XGBoost   & Gamma (min. loss reduction)      & $[0.0001, 0.001, 0.01]$      \\
   & Subsample ratio                  & $[0.5, 0.6, 0.7]$            \\
   & Column subsample per tree        & $[0.5, 0.6, 0.7]$            \\
   & L1 regularization ($\alpha$)     & $[0.0001, 0.001, 0.01]$      \\
   & L2 regularization ($\lambda$)    & $[0.0001, 0.001, 0.01]$      \\
\midrule
       & Number of layers                 & $[2, 3, 4]$                  \\
       & Hidden layer size                & $[256, 512, 1024]$           \\
       & Dropout rate                     & $[0.0, 0.1, 0.2]$            \\
MLP       & Learning rate                    & $[0.01, 0.02, 0.05]$         \\
       & Weight decay (L2)                & $[0.0001, 0.001, 0.01]$      \\
       & Batch size                       & $[4096]$                     \\
       & Number of epochs                 & $[1, 2, 3]$                  \\
\midrule
  & Number of layers                 & $[2, 3, 4]$                  \\
  & Hidden layer size                & $[256, 512, 1024]$           \\
  & Group‑weights step size          & $[0.01, 0.05, 0.1]$          \\
GroupDRO  & Dropout rate                     & $[0.0, 0.1, 0.2]$            \\
  & Learning rate                    & $[0.01, 0.02, 0.05]$         \\
  & Weight decay (L2)                & $[0.0001, 0.001, 0.01]$      \\
  & Batch size                       & $[4096]$                     \\
  & Number of epochs                 & $[1, 2, 3]$                  \\
\midrule
       & Number of layers                 & $[2, 3, 4]$                  \\
       & Hidden layer size                & $[256, 512, 1024]$           \\
       & Dropout rate                     & $[0.0, 0.1, 0.2]$            \\
       & IRM penalty weight ($\lambda$)           & $[0.01, 0.05, 0.1]$          \\
  IRM     & IRM penalty anneal iterations    & $[1, 2, 3]$                  \\
       & Learning rate                    & $[0.01, 0.02, 0.05]$         \\
       & Weight decay (L2)                & $[0.0001, 0.001, 0.01]$      \\
       & Batch size                       & $[4096]$                     \\
       & Number of epochs                 & $[1, 2, 3]$                  \\
\midrule
      & Number of layers                 & $[2, 3, 4]$                  \\
      & Hidden layer size                & $[256, 512, 1024]$           \\
      & VREX penalty anneal iterations   & $[1, 2, 3]$                  \\
      & VREX penalty weight ($\lambda$)          & $[0.1, 10, 100]$             \\
VREX      & Dropout rate                     & $[0.0, 0.1, 0.2]$            \\
      & Learning rate                    & $[0.01, 0.02, 0.05]$         \\
      & Weight decay (L2)                & $[0.0001, 0.001, 0.01]$      \\
      & Batch size                       & $[4096]$                     \\
      & Number of epochs                 & $[1, 2, 3]$                  \\
\bottomrule
\end{tabular}
\label{tab:hyperparams}
\end{table}

In this section, we detail our experimental setup. The impact of random seeds on the results is statistically insignificant in many settings~\citep{gardner2023benchmarking}. For the results in the main paper, we report the mean and standard deviation across five random hyperparameter values (sampled from their respective domains in Table~\ref{tab:hyperparams}) for each dataset-method combination. Since we evaluate models on ID test and OOD test data, to evaluate mutual information terms, we use the nonparametric entropy estimation toolbox~\citep{kraskov_estimating_2004,steeg_information_2011,steeg_informationtheoretic_2013}. Following~\citep{gardner2023benchmarking}, we evaluate mutual information using both in-distribution (ID) test data (from training domains) and out-of-distribution (OOD) test data (from test domains). 

\begin{wrapfigure}[12]{r}{0.5\textwidth}
    \centering
    \vspace{-10pt}
    \includegraphics[width=0.85\linewidth]{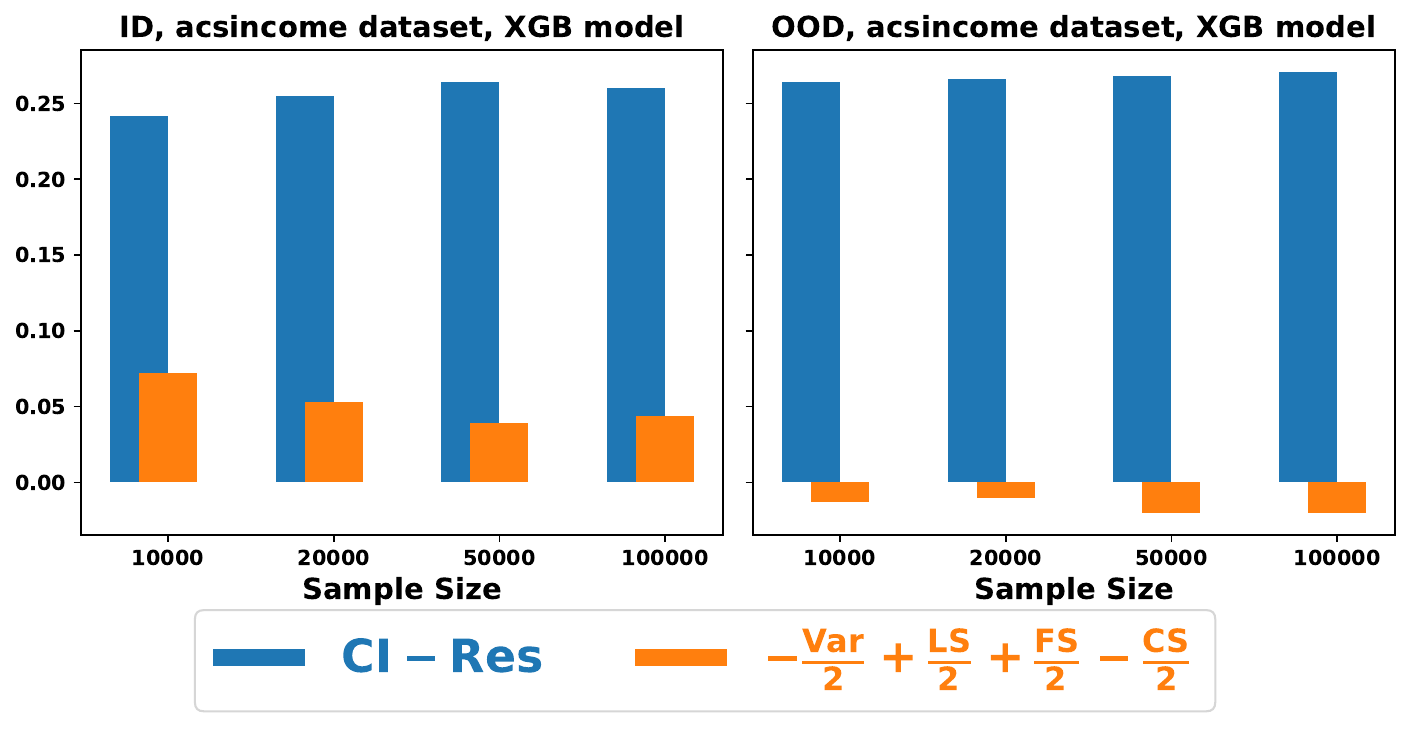}
    \caption{Evaluated terms of predictive information are relatively consistent across various sample sizes on the \textit{income} dataset.}
    \label{fig:shiftmeasuresmultiplesamplesizes}
\end{wrapfigure}
For consistency, we select 20,000 random samples from each ID and OOD test data, though some datasets contain fewer than 20,000 samples. Throughout our experiments, the sample size has a relatively insignificant effect on the evaluated mutual information terms. For instance, as shown in Figure~\ref{fig:shiftmeasuresmultiplesamplesizes}, the terms in Equation~\eqref{eq:decomposition} computed for the representations of the XGB model on the \textit{income} dataset stayed relatively consistent as we vary sample size. Features $\phi(\mathbf{X})$ are extracted from the layer preceding the classification head; for XGBoost, we combine output margins and SHAP values from the model. All experiments were conducted on a single NVIDIA RTX A6000 GPU.

In our experiments studying the impact of informative covariates on predictive information, we follow the categorization introduced by~\citet{nastl2024do}, partitioning covariates into: \textit{causal}, \textit{arguably causal}, and \textit{all covariates}. Causal covariates have clearly established influence on the target, with well-supported directionality and unlikely reverse causation. However, strict reliance on causal variables risks omitting relevant parents due to knowledge gaps. Arguably causal covariates have uncertain causal relationships, meeting at least one of: (1) being known causal, (2) having plausible but potentially bidirectional influence, or (3) likely (but unconfirmed) causal effect. These groups approximate true causal parents based on available knowledge, though relationships may be confounded. Some datasets additionally contain \textit{anti-causal} covariates where the target likely affects them but not vice versa. The \textit{all covariates} set includes all observed variables regardless of causal status. The sign consistency metric introduced in the main paper~\S~\ref{sec:experimentsandresults} is defined as follows. For each $(m,\sigma_m) \in M$ where $M = \{
        (\text{conditional informativeness}, +1),(\text{variation}, -1), (\text{label shift}, +1),
        (\text{feature shift}, +1),\allowbreak (\text{concept shift}, -1), \text{ and } (\text{residual}, -1)\}$, for a specific dataset, we compute:
\begin{equation*}
C_m = \frac{1}{2|P|} \sum_{(s,t) \in P} \left[
    \mathbb{I}\left(\sigma_m \cdot \left(\mu_t^{\text{tr}}(m) - \mu_s^{\text{tr}}(m)\right) > 0\right) 
    + \mathbb{I}\left(\sigma_m \cdot \left(\mu_t^{\text{te}}(m) - \mu_s^{\text{te}}(m)\right) > 0\right)
\right]
\end{equation*}
Where $P$ is the set of covariate set pairs $(s,t)$ with $t \supset s$. For example, $s$ can be causal covariates and $t$ can be arguably causal covariates. There is a factor of $2$ in $2P$ that accounts for both train and test data. $\mu_s^{\text{tr/te}}(m)$ is the mean of measure $m$ for covariate set $s$ in training/testing data. $\mathbb{I}(\cdot)$ is the indicator function. $|P| = 3$ for covariate settings \{\textit{causal}, \textit{arguably causal}, \textit{all}\}. To get final sign consistency metric value for each $m$, we average across datasets: $\bar{C}_m = \frac{1}{|D|} \sum_{d \in D} C_m^{(d)}$ where $|D|$ is the number of datasets.

\section{Additional results on real-world datasets}
\label{sec:additionalresults}
In this section, we present additional results on real-world datasets. Figures~\ref{fig:shiftdifferences_train_alldatasets},~\ref{fig:shiftdifferences_test_alldatasets} show the comparison of the terms: $\text{\textit{conditional informativeness}}-\text{\textit{residual}}$ and $- \text{\textit{variation / 2}} + \text{\textit{label shift / 2}}  + \text{\textit{feature shift / 2}} - \text{\textit{concept shift / 2}}$ for each dataset (their aggregation is in the main paper Figure~\ref{fig:shiftdifferences} Left). The key takeaway from these results is that, from Proposition~\ref{propn:informationdecomposition}, the sum of \textcolor{terms1}{blue} and \textcolor{terms2}{orange} terms is positively correlated with the overall model performance. At the same time, due to potential hidden confounding shift in real-world data, the contribution of the difference $\text{\textit{conditional informativeness}}-\text{\textit{residual}}$ is higher towards the overall predictive information when compared to the contribution of $- \text{\textit{variation / 2}} + \text{\textit{label shift / 2}}  + \text{\textit{feature shift / 2}} - \text{\textit{concept shift / 2}}$. This serves as a motivation to maximize conditional informativeness for better OOD generalization under a hidden confounding shift.
\begin{figure}[H]
    \centering
    \includegraphics[width=0.9\linewidth]{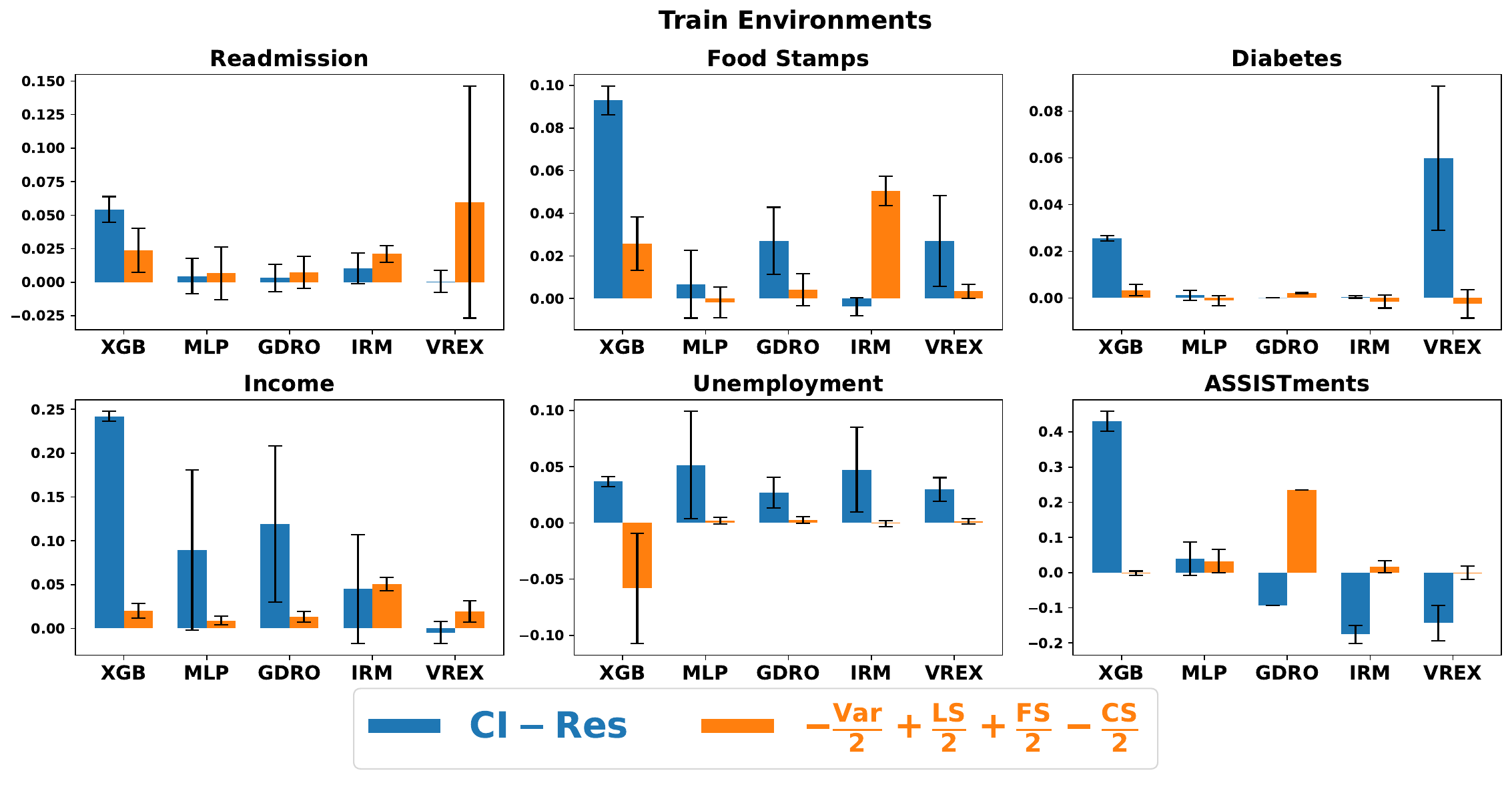}
    \caption{Dataset specific results on ID data.}
    \label{fig:shiftdifferences_train_alldatasets}
\end{figure}
\begin{figure}[H]
    \centering
    \includegraphics[width=0.9\linewidth]{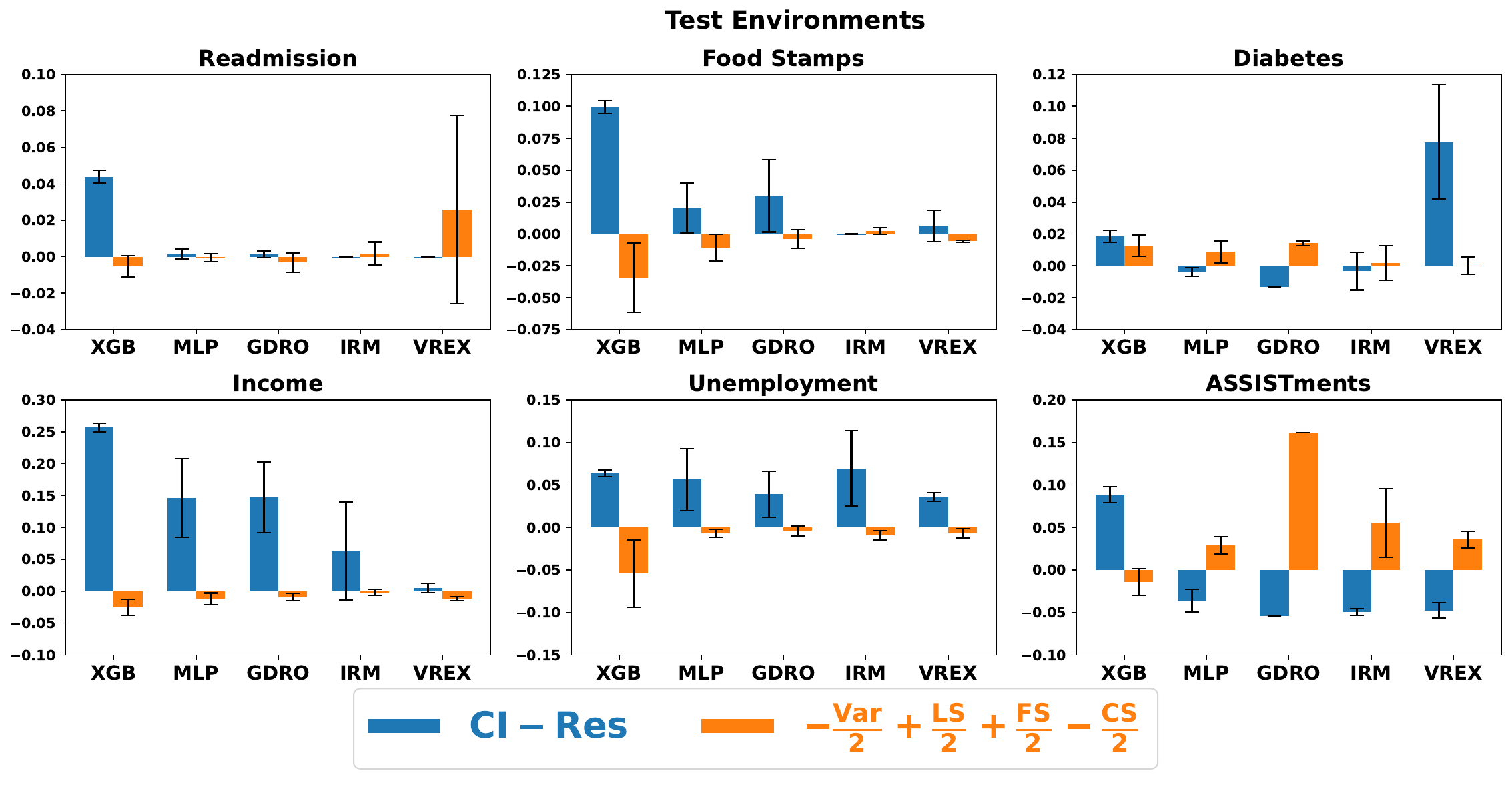}
    \caption{Dataset-specific results on OOD data.} 
    \label{fig:shiftdifferences_test_alldatasets}
\end{figure}

\begin{figure}
    \centering
    \begin{minipage}{0.68\linewidth}
        \centering
        \includegraphics[width=0.85\linewidth]{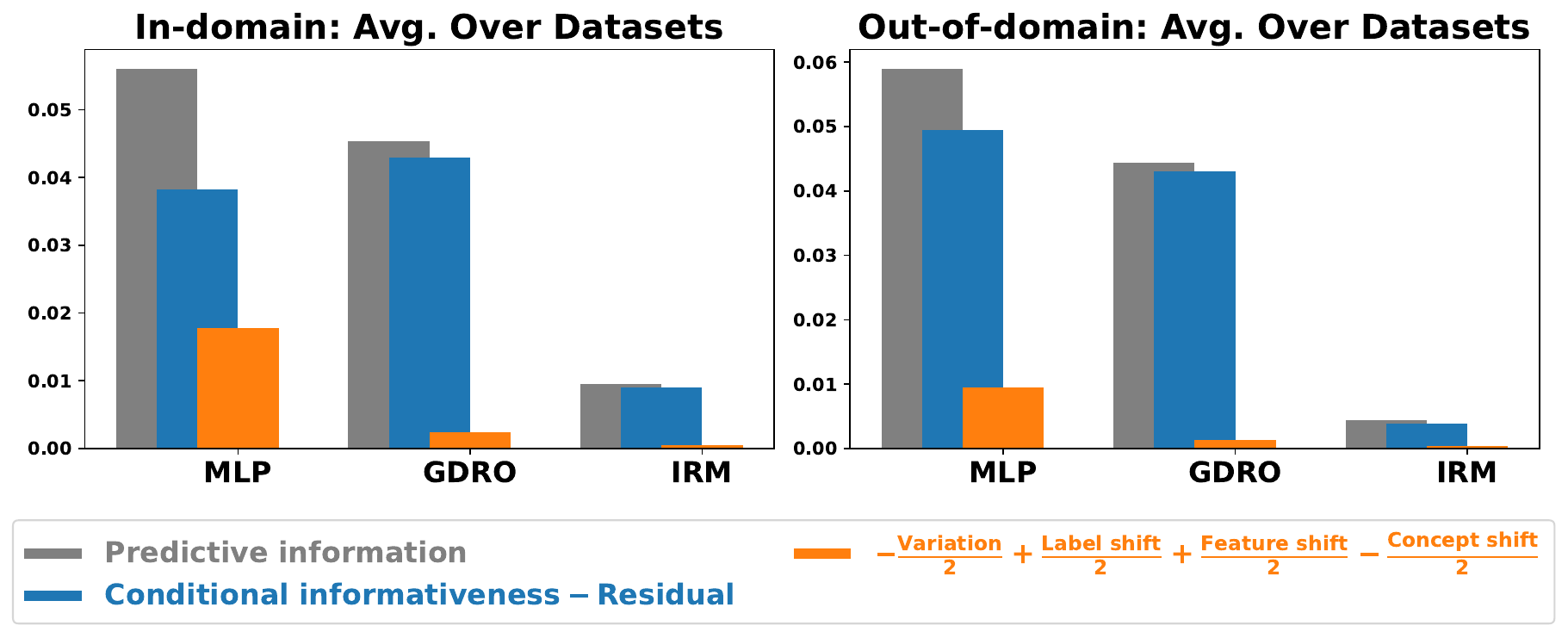}
    \end{minipage}
    \hfill
    \begin{minipage}{0.3\linewidth}
        \centering
        \scalebox{0.75}{
        \begin{tabular}{l|cc}
        \toprule
            \textbf{Method} & \textbf{Avg. ID}& \textbf{Avg. OOD} \\
            &\textbf{Test Acc.}&\textbf{Test Acc.}\\
        \midrule
            MLP    & $83.67$& $80.89$\\
            GDRO & $82.48$ &$79.64$\\
            IRM & $67.65$&$65.48$\\
        \bottomrule
        \end{tabular}
        }
    \end{minipage}
    
    \caption{\footnotesize 
     The difference $\text{\textit{\textcolor{terms1}{conditional informativeness}}} - \text{\textit{\textcolor{terms1}{residual}}}$ in the plots is positively correlated with the average ID and OOD test accuracy over five datasets shown in the table on the right. The contribution of the difference $\text{\textit{conditional informativeness}}-\text{\textit{residual}}$ is higher towards the overall predictive information when compared to the contribution of $- \text{\textit{variation / 2}} + \text{\textit{label shift / 2}}  + \text{\textit{feature shift / 2}} - \text{\textit{concept shift / 2}}$.}
    \label{fig:shiftdifferences_besthps}
\end{figure}

\begin{figure}[H]
    \centering
    \includegraphics[width=0.99\linewidth]{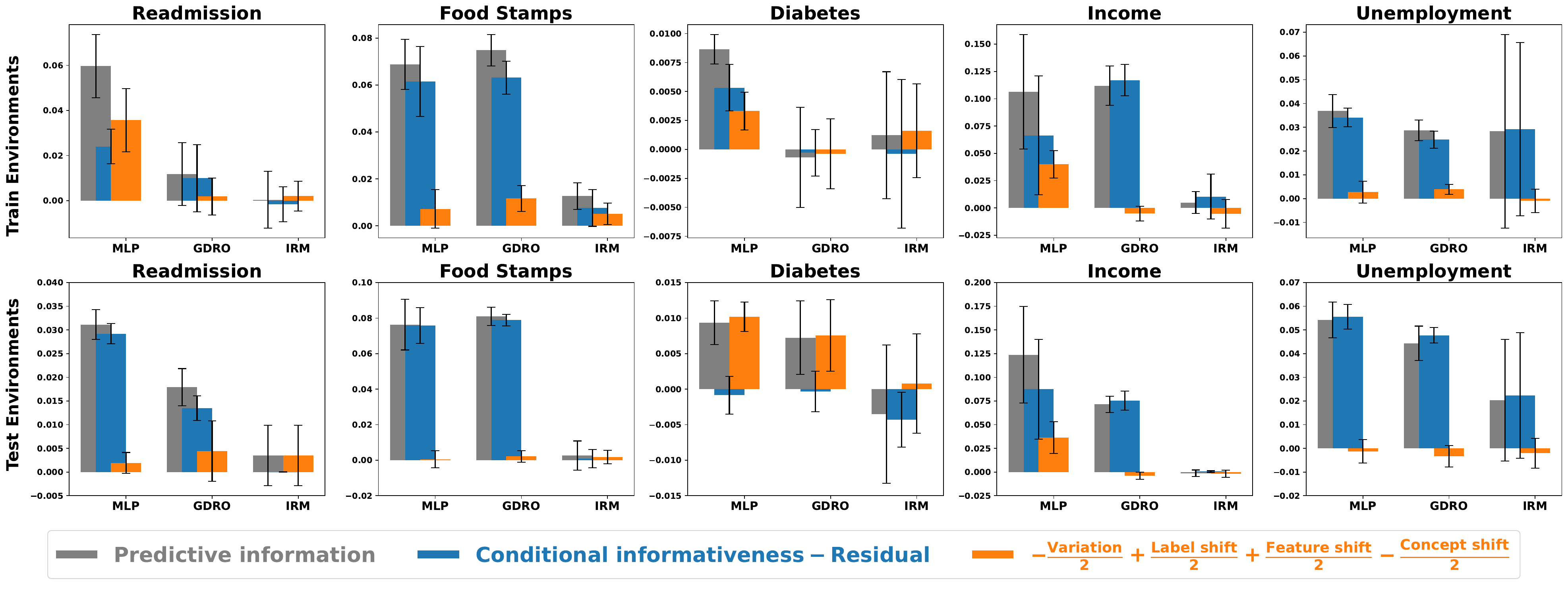}
    \caption{Dataset specific results. Comparison of the terms in predictive information.}
    \label{fig:shiftdifferencesdatasetspecific}
\end{figure}
\begin{table}[H]
    \centering
        \caption{Comparison of ID and OOD test accuracies of models.}
        \scalebox{0.8}{
    \begin{tabular}{lcccccc}
    \toprule
    \textbf{Method}&\multicolumn{2}{c}{\textbf{Readmission}}&\multicolumn{2}{c}{\textbf{Food stamps}}&\multicolumn{2}{c}{\textbf{Diabetes}}\\
    \midrule
    &ID test acc.&OOD test acc.&ID test acc.&OOD test acc.&ID test acc.&OOD test acc.\\
    \midrule
        MLP &$65.78\pm0.14$ &$61.75\pm0.22$&$84.74\pm0.03$&$82.00\pm0.06$&$87.66\pm0.02$&$83.22\pm0.05$ \\
        GDRO &$60.47\pm1.08$ &$57.60\pm0.42$&$84.47\pm0.04$&$81.41\pm0.09$&$87.47\pm0.10$&$82.90\pm0.10$\\
        IRM & $50.28\pm7.35$&$50.85\pm1.74$&$80.91\pm0.00$&$78.01\pm0.00$&$42.54\pm36.5$&$43.48\pm31.9$\\
        \midrule
        &&\multicolumn{2}{c}{\textbf{Income}}&\multicolumn{2}{c}{\textbf{Unemployment}}\\
        \cmidrule(lr){2-6}
        &&ID test acc.&OOD test acc.&ID test acc.&OOD test acc.\\
        \cmidrule(lr){2-6}
        &MLP&$82.92\pm0.05$&$81.46\pm0.01$&$97.27\pm0.02$&$96.04\pm0.05$\\
        &GDRO&$82.70\pm0.05$&$80.16\pm0.45$&$97.28\pm0.01$&$96.10\pm0.00$\\
        &IRM&$67.91\pm 0.00$&$60.20\pm0.00$&$96.61\pm0.00$&$94.84\pm0.00$\\
        \bottomrule
    \end{tabular}
    }
    \label{tab:datasetspecificaccuracies}
\end{table}
Figures~\ref{fig:xgbtrain}-~\ref{fig:vrextest} further show the comparison of various terms of the information decomposition in~\eqref{eq:decomposition} for each method and dataset. From these results, we observe that the values plotted in the second column (\textcolor{terms2}{variation}) and third column (\textcolor{terms2}{feature shift}) are similar in magnitude, and based on Equation~\ref{eq:decomposition}, their difference is close to zero, contributing very little to the overall predictive decomposition. Similarly, we observe that the values plotted in the fourth column (\textcolor{terms2}{label shift}) and fifth column (\textcolor{terms2}{concept shift}) are similar in magnitude, and based on Equation~\ref{eq:decomposition}, their difference is close to zero, contributing very little to the overall predictive decomposition. In almost all settings, the difference $\text{\textit{\textcolor{terms1}{conditional informativenss} - \textcolor{terms1}{residual}}}$ is the majority contributor for the predictive information.

\noindent \textbf{Hyperparameter tuning:} We also perform hyperparameter tuning on model parameters. We run a grid search over hyperparameter values and select the best hyperparameter values based on ID validation accuracy (not OOD validation accuracy). We run $1500$ experiments ($5$ datasets, $3$ models, and $100$ hyperparameter choices sampled from a grid of hyperparameter values). In this case, to get the mean and standard deviation results for accuracies and mutual information terms, we run each experiment for $5$ random seeds, even if the effect of random seeds is insignificant in many cases. Results are shown in Figure~\ref{fig:shiftdifferences_besthps}. From Proposition~\ref{propn:informationdecomposition}, the sum of \textcolor{terms1}{blue} and \textcolor{terms2}{orange} terms is positively correlated with the overall model performance. At the same time, due to potential hidden confounding shift in real-world data, the contribution of the difference $\text{\textit{conditional informativeness}}-\text{\textit{residual}}$ is higher towards the overall predictive information when compared to the contribution of $- \text{\textit{variation / 2}} + \text{\textit{label shift / 2}}  + \text{\textit{feature shift / 2}} - \text{\textit{concept shift / 2}}$. Dataset-specific results shown in Figure~\ref{fig:shiftdifferencesdatasetspecific} and Table~\ref{tab:datasetspecificaccuracies} also show similar trends.

\noindent \textbf{Potential limitations with predictive information decomposition analysis:} While predictive information $I(Y;\hat{Y})$ measures statistical dependence between predictions and labels, it does not guarantee accuracy. For instance, a binary classifier that systematically predicts the \textit{wrong} label (e.g., $\hat{Y}=1$ when $Y=0$) achieves maximal $I(Y;\hat{Y})=1$ bit (for binary variables) due to perfect anti-correlation, yet yields 0\% accuracy. This occurs because mutual information quantifies \textit{reduction in uncertainty} rather than correctness. Other decomposition terms provide further insight: conditional informativeness $I(\phi(X);Y|E)$ may mask harmful variation $I(\phi(X);E|Y)$, while feature shift $I(\phi(X);E)$ can inflate $I(Y;\hat{Y})$ through spurious correlations. Nevertheless, this decomposition remains valuable for analyzing reasonably performant models, as our experiments demonstrate - it reveals whether predictive information stems from generalizable patterns or environment-specific artifacts, enabling targeted improvements while maintaining interpretability.

\begin{figure}
    \centering
    \includegraphics[width=0.9\linewidth]{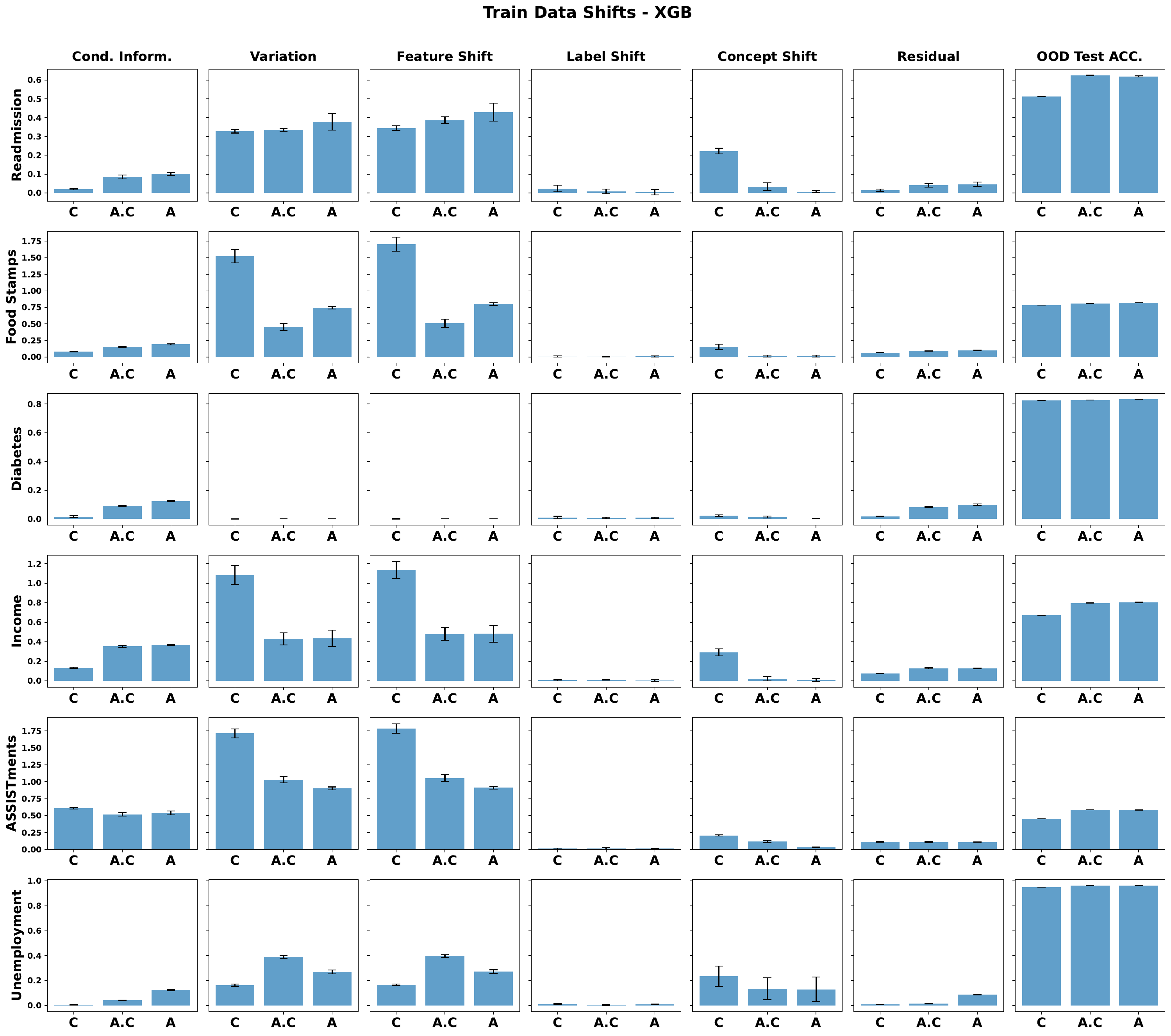}
    \caption{Decomposition of information metrics on train data for XGBoost and groups of features: (C) Causal, (A.C) Arguably causal, 
    (A) All.}
    \label{fig:xgbtrain}
\end{figure}
\begin{figure}
    \centering
    \includegraphics[width=0.9\linewidth]{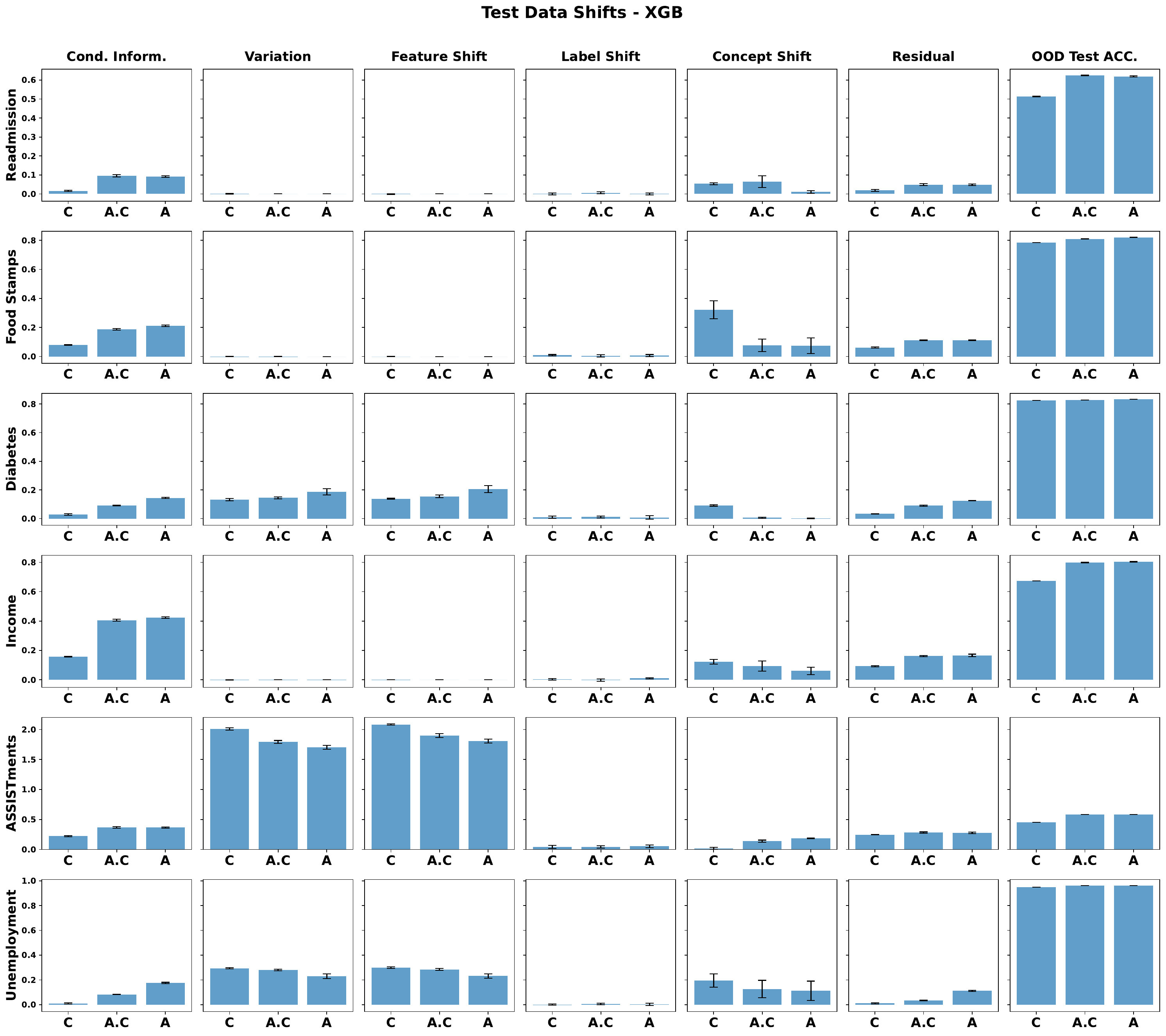}
    \caption{Decomposition of information metrics on test data for XGBoost and groups of features: (C) Causal, (A.C) Arguably causal, 
    (A) All.}
    \label{fig:xgbtest}
\end{figure}
\begin{figure}
    \centering
    \includegraphics[width=0.9\linewidth]{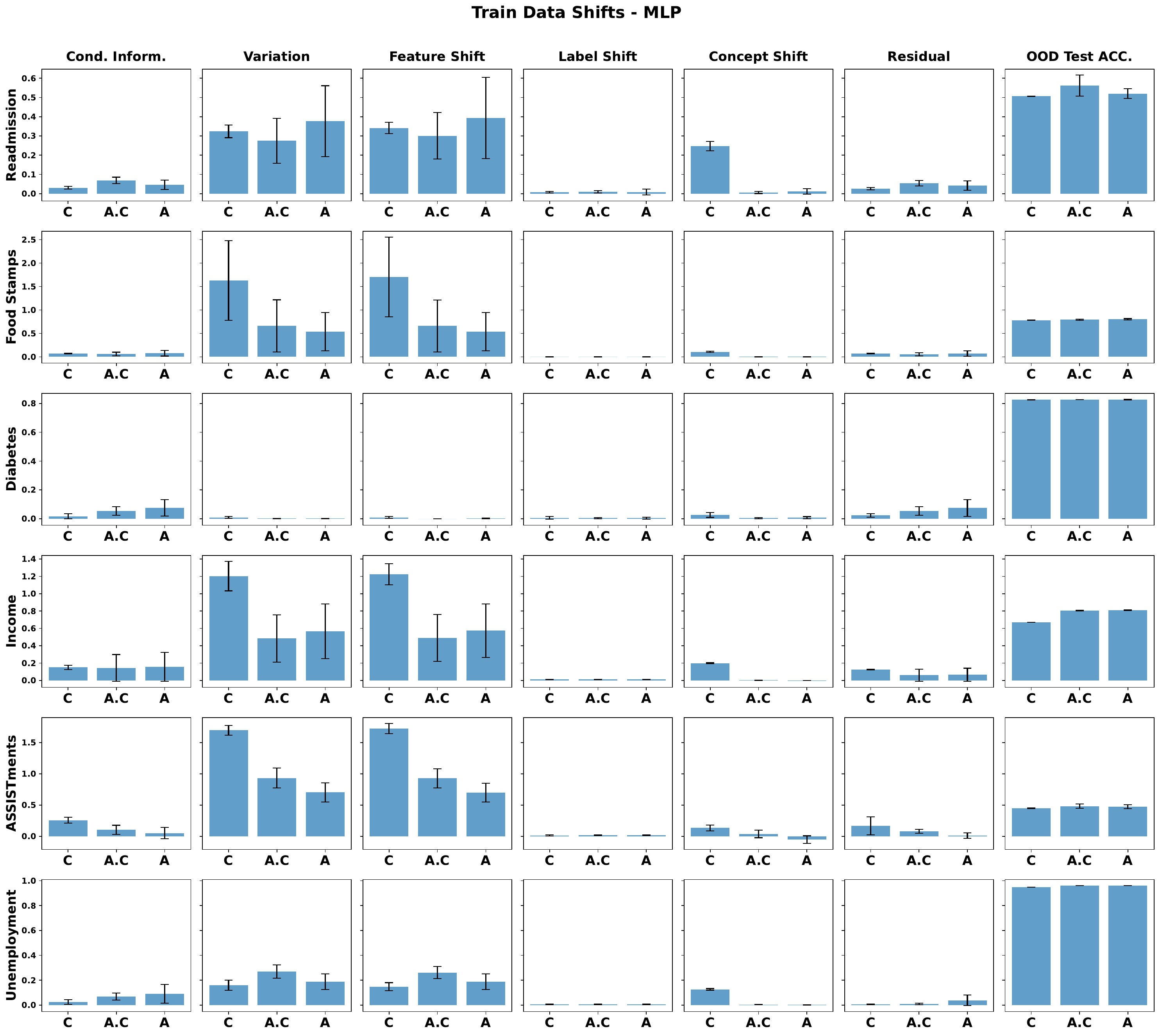}
    \caption{Decomposition of information metrics on train data for MLP and groups of features: (C) Causal, (A.C) Arguably causal, 
    (A) All.}
    \label{fig:mlptrain}
\end{figure}
\begin{figure}
    \centering
    \includegraphics[width=0.9\linewidth]{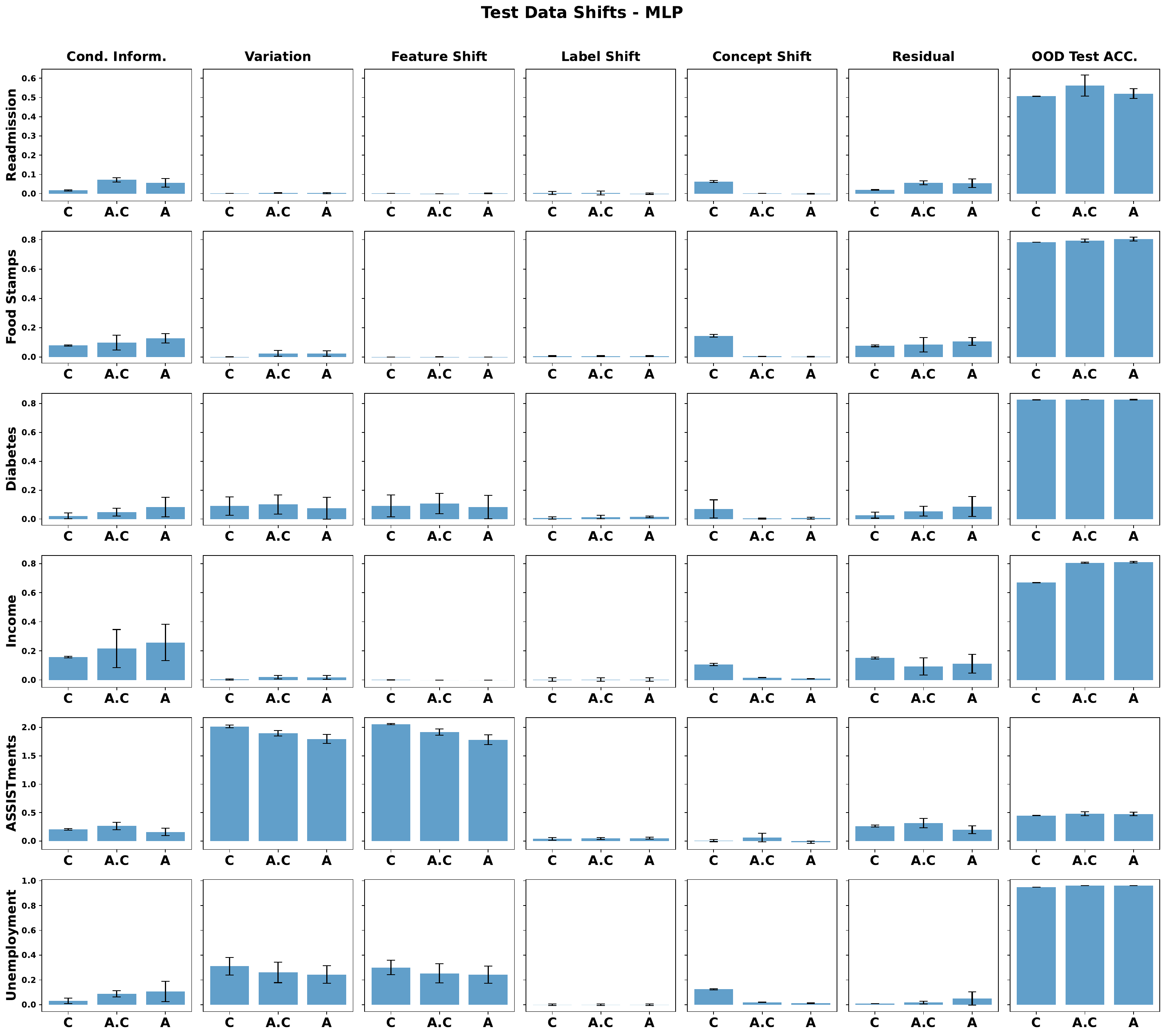}
    \caption{Decomposition of information metrics on test data for MLP and groups of features: (C) Causal, (A.C) Arguably causal, 
    (A) All.}
    \label{fig:mlptest}
\end{figure}
\begin{figure}
    \centering
    \includegraphics[width=0.9\linewidth]{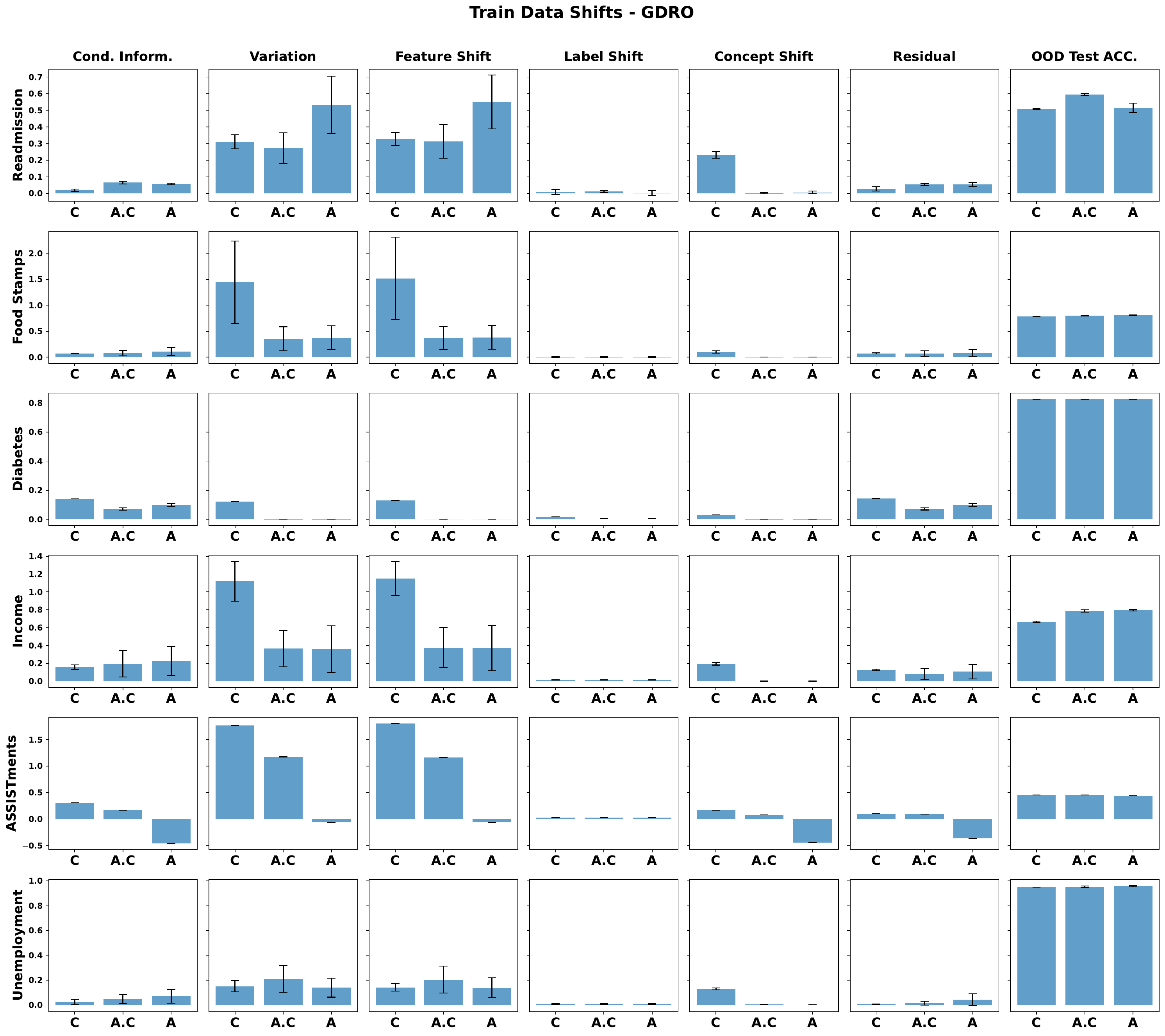}
    \caption{Decomposition of information metrics on train data for GDRO and groups of features: (C) Causal, (A.C) Arguably causal, 
    (A) All.}
    \label{fig:gdrotrain}
\end{figure}
\begin{figure}
    \centering
    \includegraphics[width=0.9\linewidth]{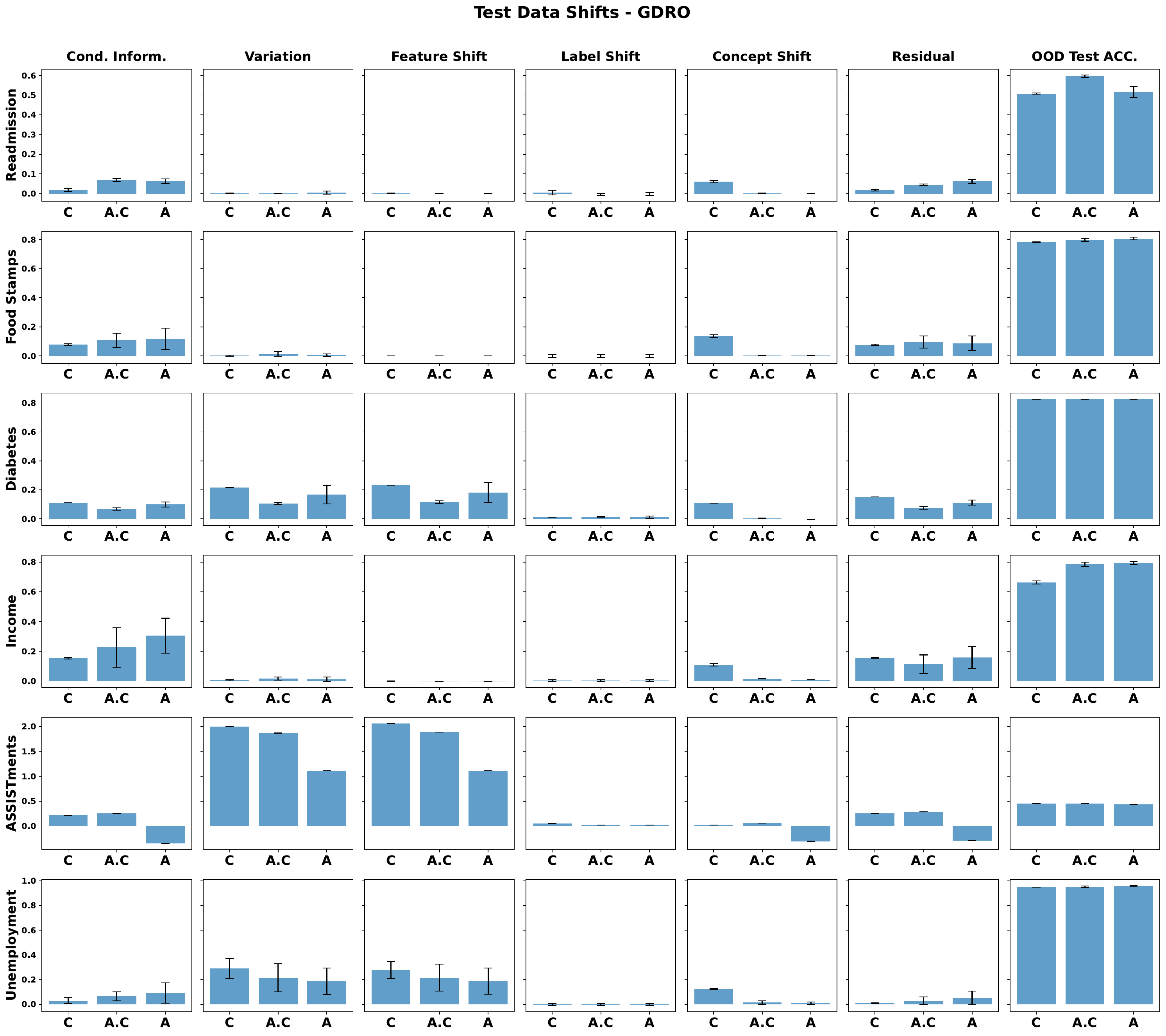}
    \caption{Decomposition of information metrics on test data for GDRO and groups of features: (C) Causal, (A.C) Arguably causal, 
    (A) All.}
    \label{fig:testgdro}
\end{figure}
\begin{figure}
    \centering
    \includegraphics[width=0.9\linewidth]{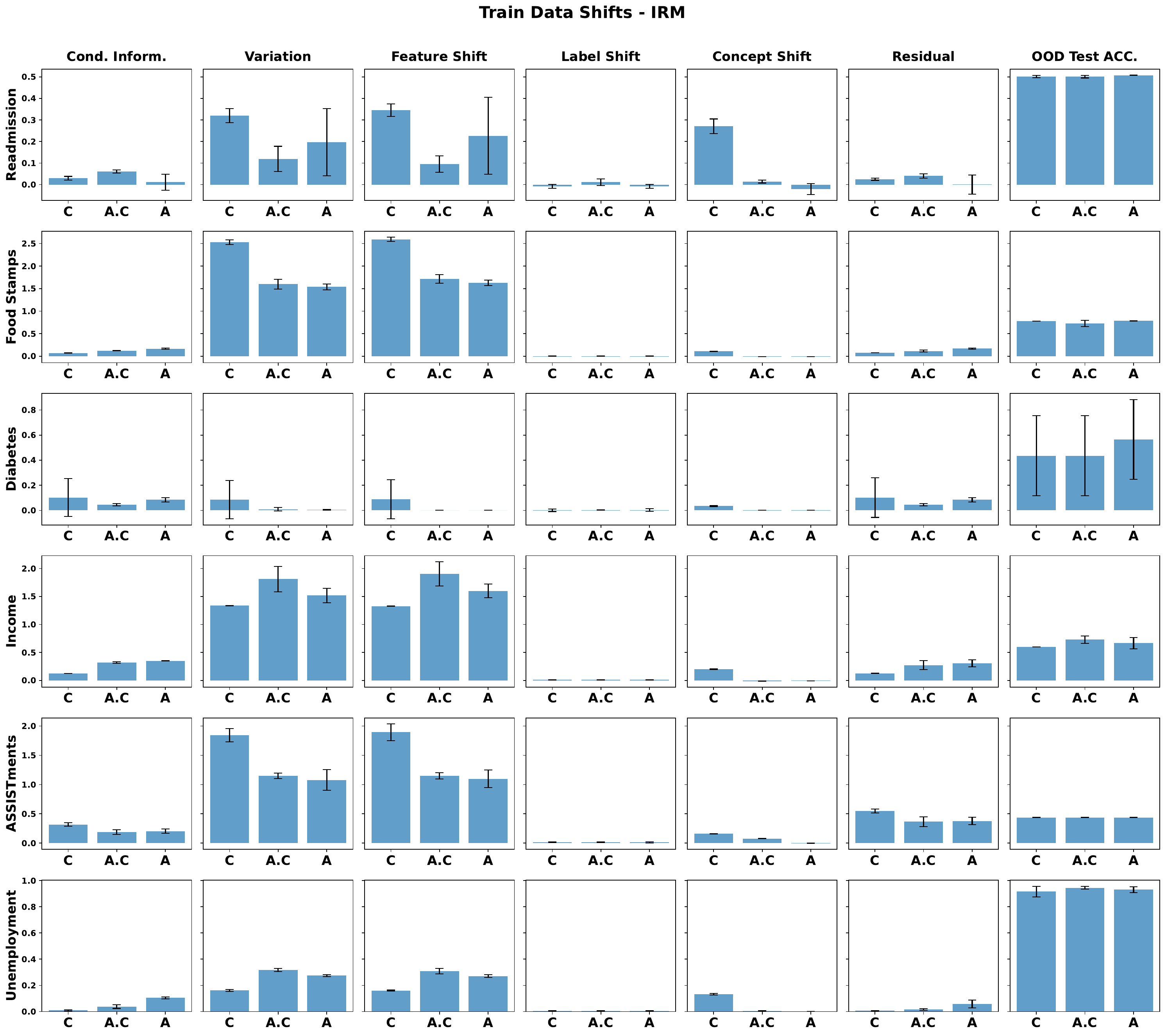}
    \caption{Decomposition of information metrics on train data for IRM and groups of features: (C) Causal, (A.C) Arguably causal, 
    (A) All.}
    \label{fig:irmtrain}
\end{figure}
\begin{figure}
    \centering
    \includegraphics[width=0.9\linewidth]{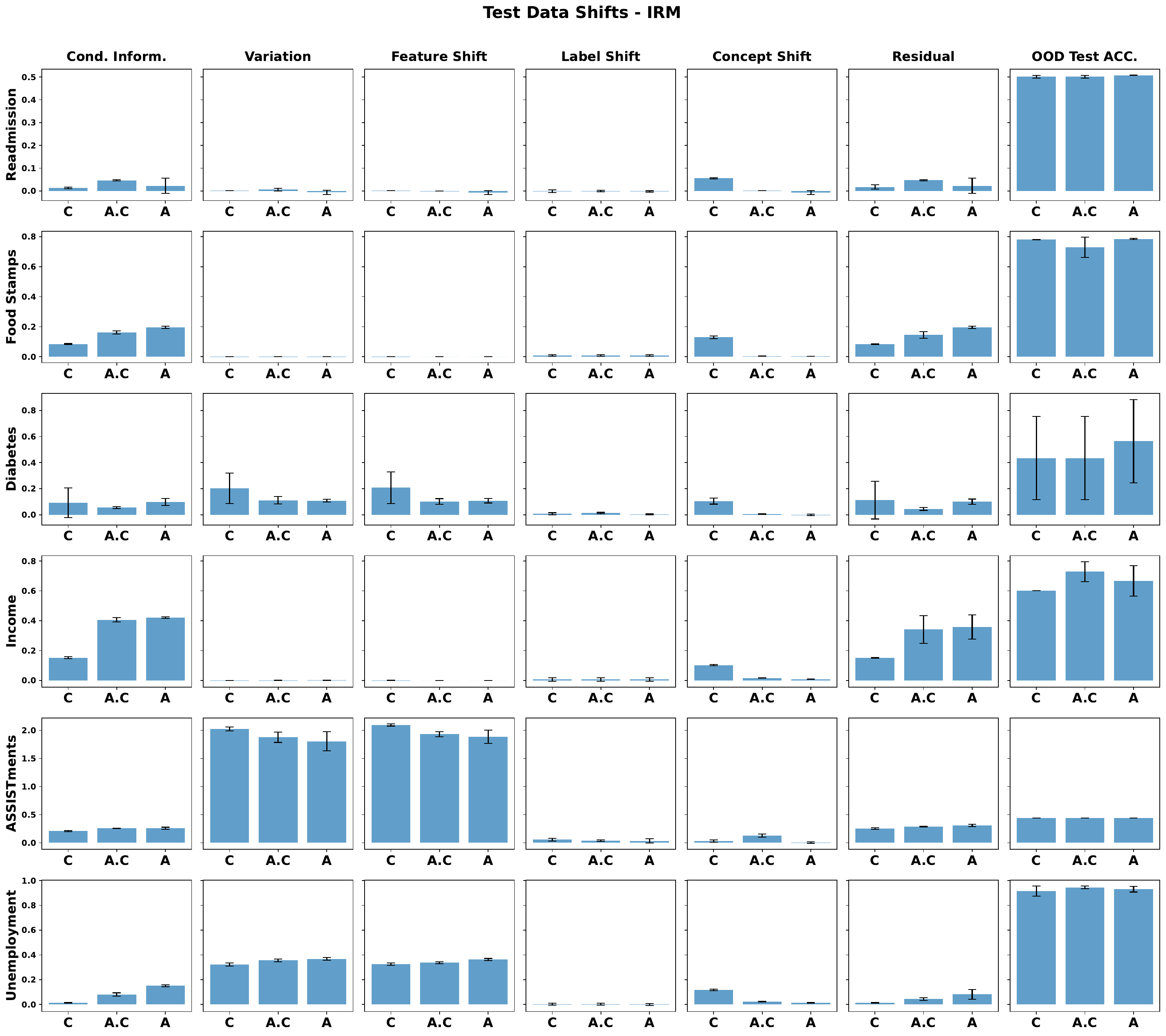}
    \caption{Decomposition of information metrics on test data for IRM and groups of features: (C) Causal, (A.C) Arguably causal, 
    (A) All.}
    \label{fig:irmtest}
\end{figure}
\begin{figure}
    \centering
    \includegraphics[width=0.9\linewidth]{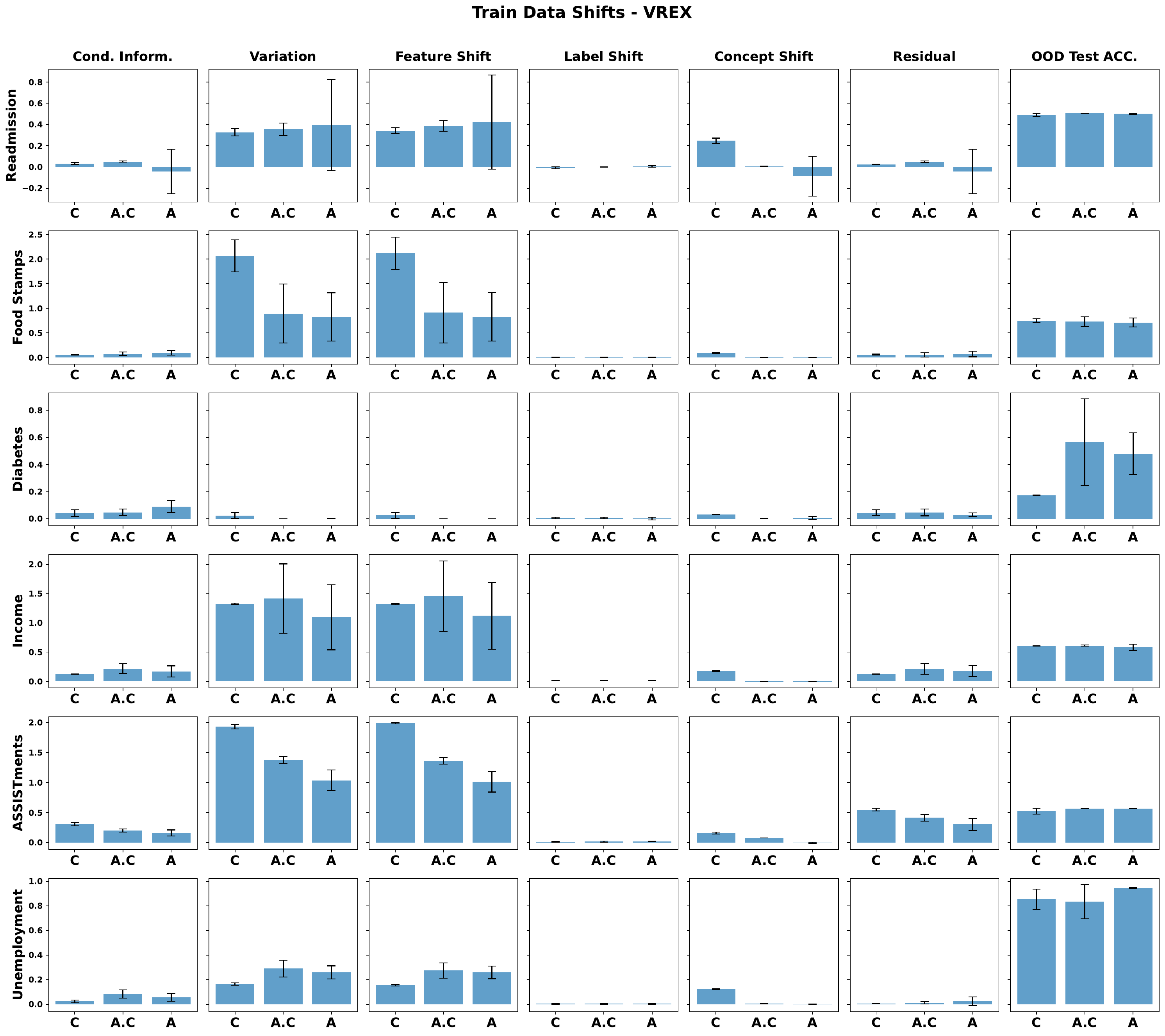}
    \caption{Decomposition of information metrics on train data for VREX and groups of features: (C) Causal, (A.C) Arguably causal, 
    (A) All.}
    \label{fig:vrextrain}
\end{figure}
\begin{figure}
    \centering
    \includegraphics[width=0.9\linewidth]{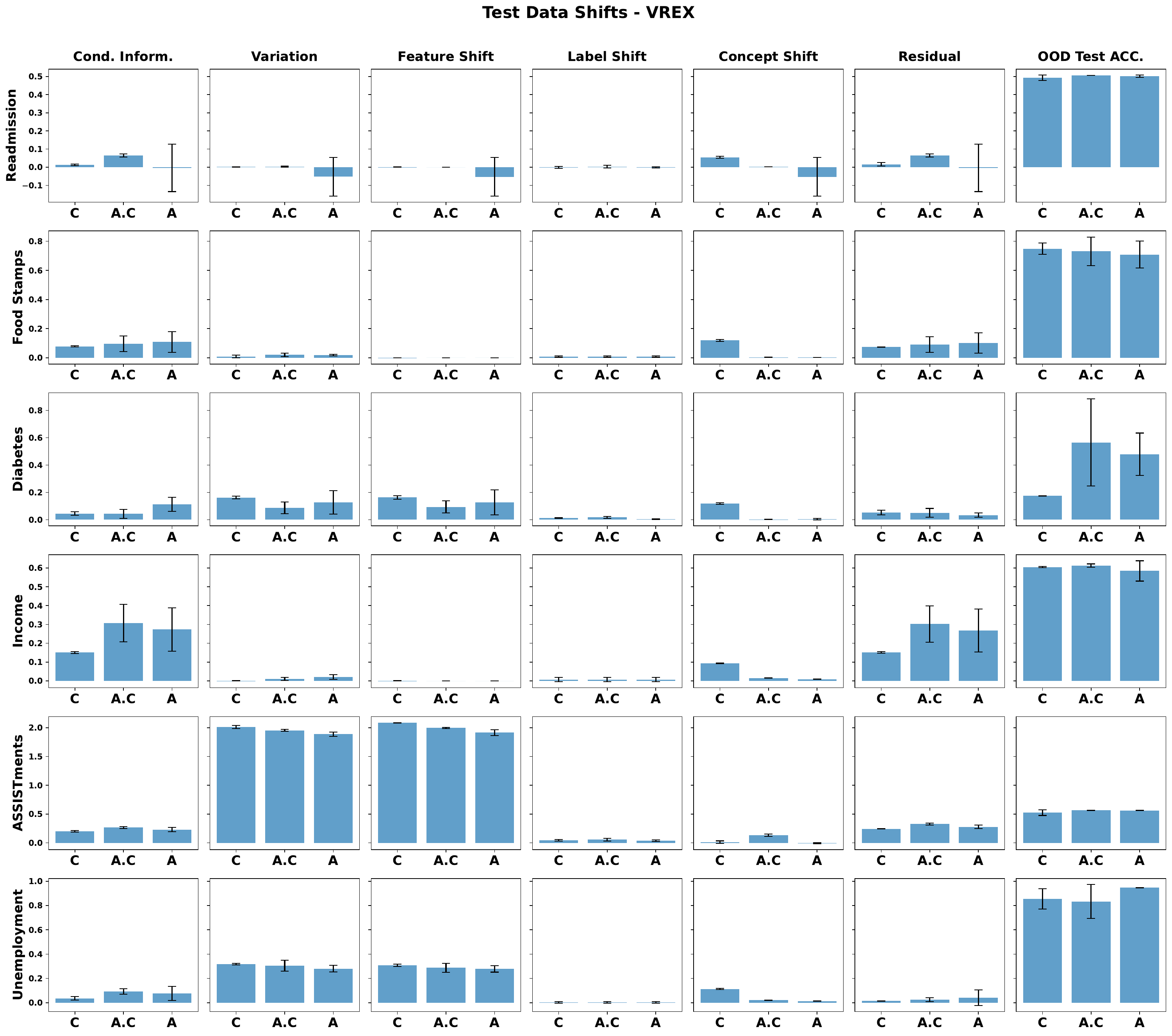}
    \caption{Decomposition of information metrics on test data for VREX and groups of features: (C) Causal, (A.C) Arguably causal, 
    (A) All.}
    \label{fig:vrextest}
\end{figure}

\clearpage
\section{Results on synthetic datasets}
\label{sec:syntheticdataexperiments}
We conduct several experiments on synthetic datasets to investigate: (i)~the impact of different informativeness criteria, (ii)~how the level of confounder overlap across environments affects model performance, and (iii)~the influence of the number of causal versus anti-causal variables on the performance. The data-generating process follows the structural equations in~\eqref{eq:syntheticequations}, with the causal structure $U \rightarrow X$, $U \rightarrow Y$, $U \rightarrow \mathbf{X}_I$, and $X \rightarrow Y$, where $U$ is an unobserved confounder, $X$ is an observed covariate, $Y$ is the target variable, and $\mathbf{X}_I$ represents additional informative covariates.
\begin{equation}
    \label{eq:syntheticequations}
    \begin{aligned}
U\sim \mathcal{N}(\mu^e_u, \sigma_u) &\quad \quad X\leftarrow f_X(U) + \mathcal{N}(\mu_x,\sigma_x) \\ 
X_i\leftarrow f_i(U) + \mathcal{N}(\mu_i,\sigma_i);\ X_i\in \mathbf{X}_I &\quad \quad Y\leftarrow f_Y(X,U) + \mathcal{N}(\mu_y, \sigma_y)
    \end{aligned}
\end{equation}
We explore different functional forms for $f_X$, $f_i$, and $f_Y$, with the corresponding results presented in our analysis. Domain shifts are induced by systematically varying the environment-specific mean parameter $\mu_u^e$ of the confounder distribution. This allows us to examine how different degrees of distributional shift affect model performance across environments.

\noindent \textbf{Informativeness vs. accuracy:} In this set of experiments, we begin with linear structural equations: $f_X = 0.3U$, $f_i = 0.1U$, and $f_Y = X - 2U$. For these experiments $\mu^e_u \in \{-2,2\}$ for training environments and 
$\mu^e_u \in \{0,4\}$ for test environments. For the results presented in Figure~\ref{fig:varyings} of the main paper, we consider one hidden confounding variable with $|\mathbf{X}_I|=20$ informative covariates. As we increase the number of informative covariates from $0$ to $20$, we observe: (i)~a reduction in mean squared error, (ii)~improved conditional informativeness, (iii)~enhanced feature shift, while (iv)~decreased concept shift. We extend these findings by examining two additional informativeness criteria. First, we analyze the case with one hidden confounder and a single informative covariate ($|\mathbf{X}_I|=1$) across varying noise levels $\sigma_i \in \{0,0.1,\dots,2.0\}$. Figure~\ref{fig:varyingnoise} demonstrates that as noise decreases from $2.0$ to $0.0$, we observe: (i)~reduced mean squared error, (ii)~improved conditional informativeness, (iii)~enhanced feature shift, alongside (iv)~decreased concept shift. Consistent with other synthetic experiments, the variation term remains zero throughout. Finally, we investigate a setting with $20$ hidden confounders and corresponding $20$ informative covariates, using modified structural equations $f_X = 0.3U$, $f_i = 0.2U$, and $f_Y = X - 2U$. Figure~\ref{fig:varyingu} shows that as we introduce informative covariates corresponding to hidden confounders, we again observe: (i)~reduced mean squared error, (ii)~improved conditional informativeness, (iii)~enhanced feature shift, while (iv)~decreased concept shift. These experiments demonstrate how different informativeness criteria help in achieving OOD generalization.
\begin{figure}[H]
    \centering
    \includegraphics[width=1\linewidth]{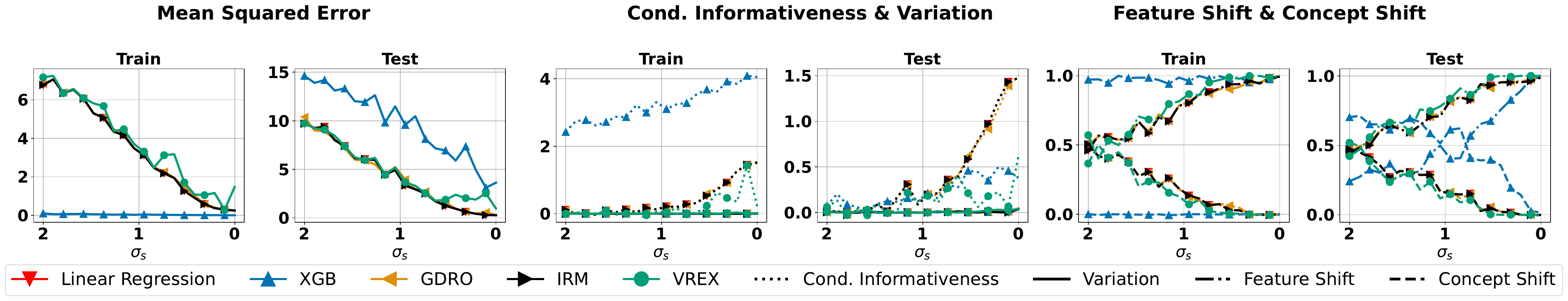}
    \caption{ Adding proxy variable with low noise helps in reducing MSE, increasing conditional informativeness and feature shift while reducing concept shift.}
    \label{fig:varyingnoise}
\end{figure}
\begin{figure}[H]
    \centering
    \includegraphics[width=1\linewidth]{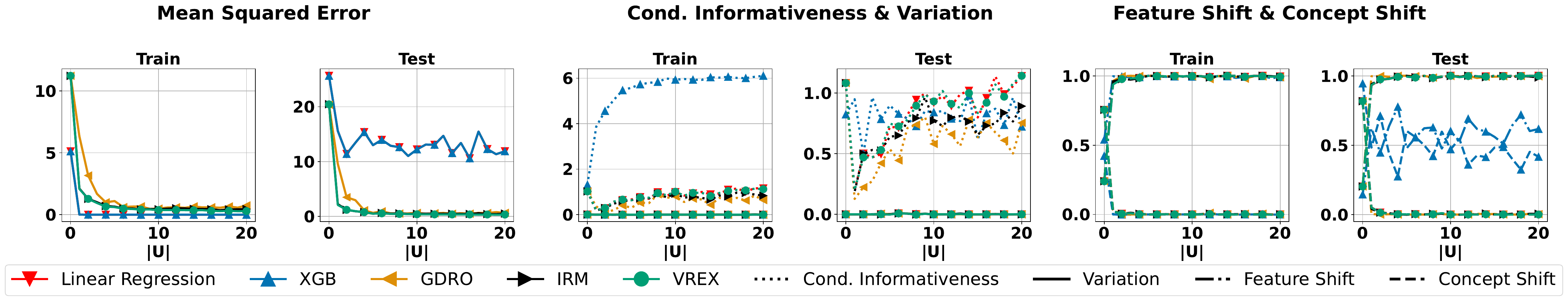}
    \caption{ Adding more proxy variables $\mathbf{X}_I$ of a set of hidden confounding variables $\mathbf{U}$ that are informative to $Y$ helps in reducing MSE, increasing conditional informativeness and feature shift while reducing concept shift.}
    \label{fig:varyingu}
\end{figure}

\noindent \textbf{Overlapping confounder support vs. accuracy:} As established in the main paper, recent work by~\citet{prashant2025scalable} proposed an OOD generalization method for hidden confounding shift that assumes test confounder support remains within training support. While our results demonstrated successful learning of correct relationships in synthetic linear data when sufficient information about hidden confounders exists in observed covariates, we now extend this analysis to nonlinear data using MLP and XGB models under both low and high confounding support conditions (Figures~\ref{fig:synthetic_mlp_lowoverlap}--\ref{fig:synthetic_xgb_highoverlap}). In low confounding overlap settings (characterized by distant $\mu_e$ between ID and OOD data), models relying solely on $X$ learn an inadequate global function that fails to distinguish environments. Performance improves when incorporating environment-specific summary statistics of observed covariates, which helps capture environment-specific relationships, but the most significant gains occur only when leveraging additional informative covariates, underscoring their importance. For completeness, we include both oracle model results and $U$-$Y$ relationship scatter plots to elucidate the learned input-output mappings. These experiments confirm better performance under high confounding overlap as expected, but more importantly, reveal promising results in the low-overlap regime - a previously unstudied scenario that challenges the common support assumption in the literature.

\begin{figure}
    \centering
    \includegraphics[width=0.75\linewidth]{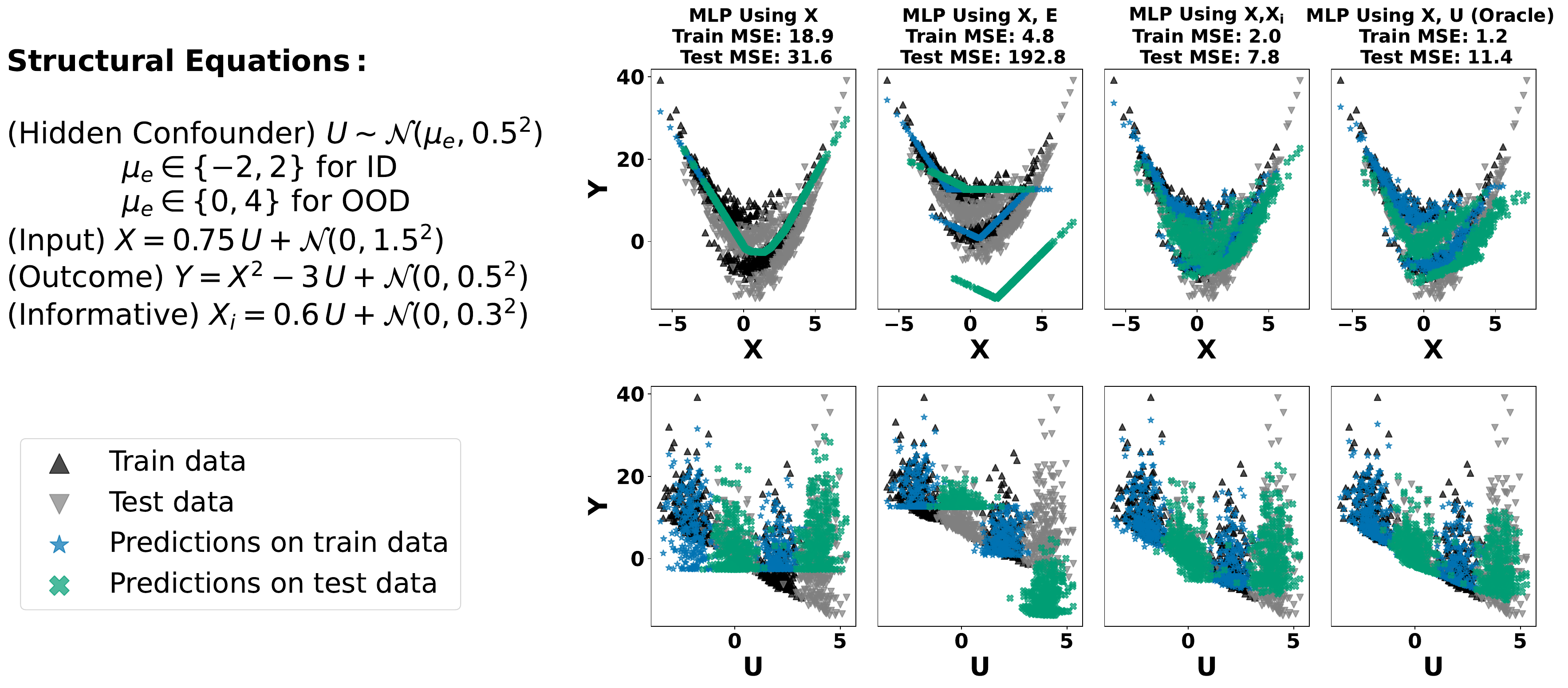}
    \caption{Performance of MLP on the low overlap non-linear setting, for sets of features: (i) $X$ (ii) $X$ and $E$ (environment statistics) (iii) $X$ and $X_i$ (iv) $X$ and $U$ (Oracle).}
    \label{fig:synthetic_mlp_lowoverlap}
\end{figure}
\begin{figure}
    \centering
    \includegraphics[width=0.75\linewidth]{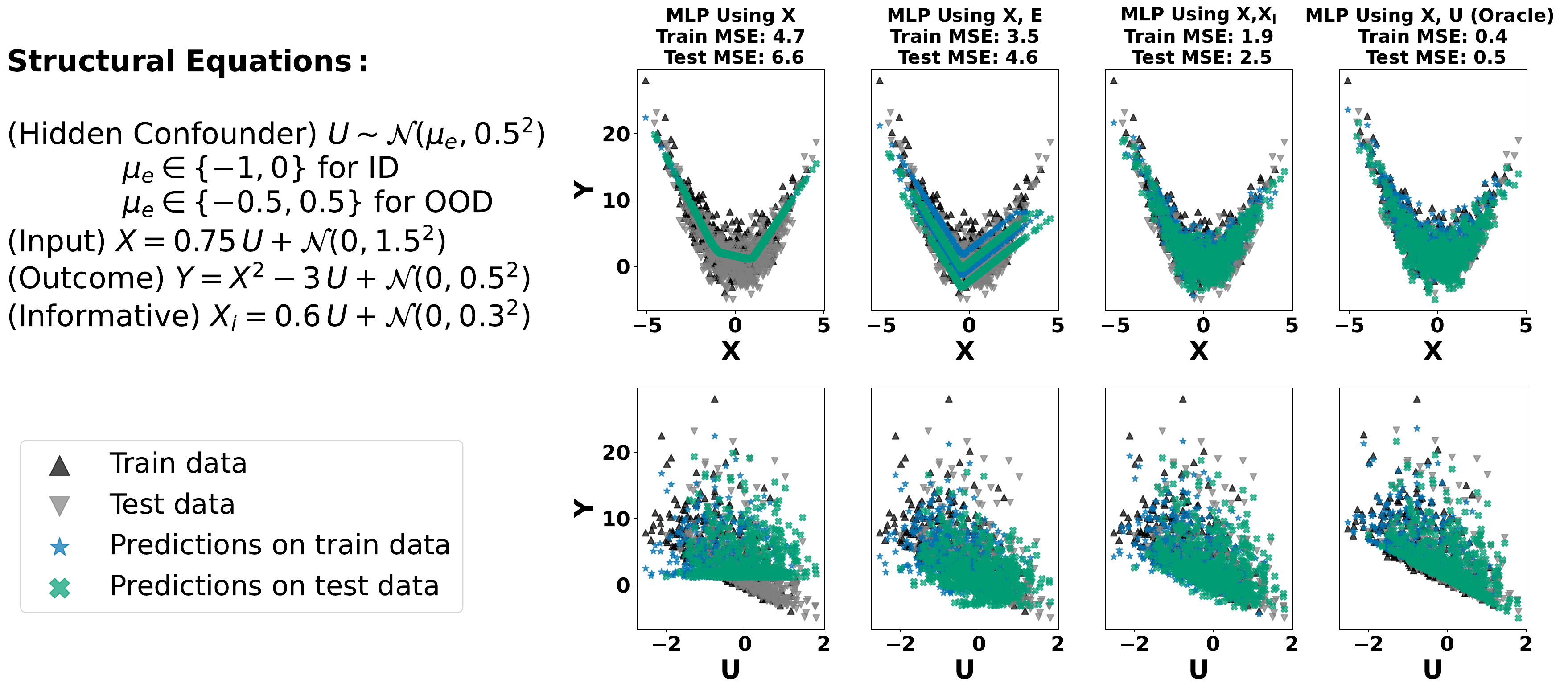}
    \caption{Performance of MLP on the high overlap non-linear setting, for sets of features: (i) $X$ (ii) $X$ and $E$ (environment statistics) (iii) $X$ and $X_i$ (iv) $X$ and $U$ (Oracle).}
    \label{fig:synthetic_mlp_highoverlap}
\end{figure}
\begin{figure}
    \centering
    \includegraphics[width=0.75\linewidth]{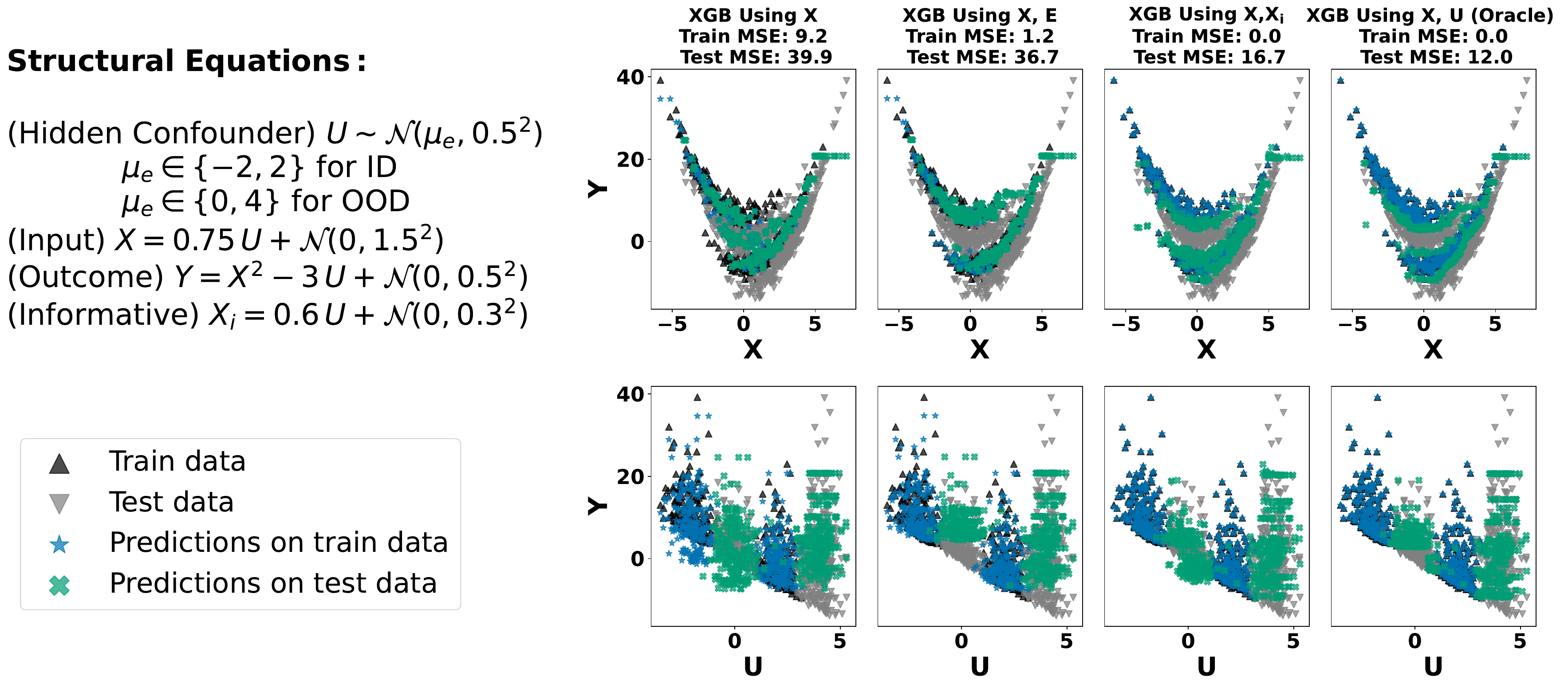}
    \caption{Performance of XGB on the low overlap non-linear setting, for sets of features: (i) $X$ (ii) $X$ and $E$ (environment statistics) (iii) $X$ and $X_i$ (iv) $X$ and $U$ (Oracle).}
    \label{fig:synthetic_xgb_lowoverlap}
\end{figure}
\begin{figure}
    \centering
    \includegraphics[width=0.75\linewidth]{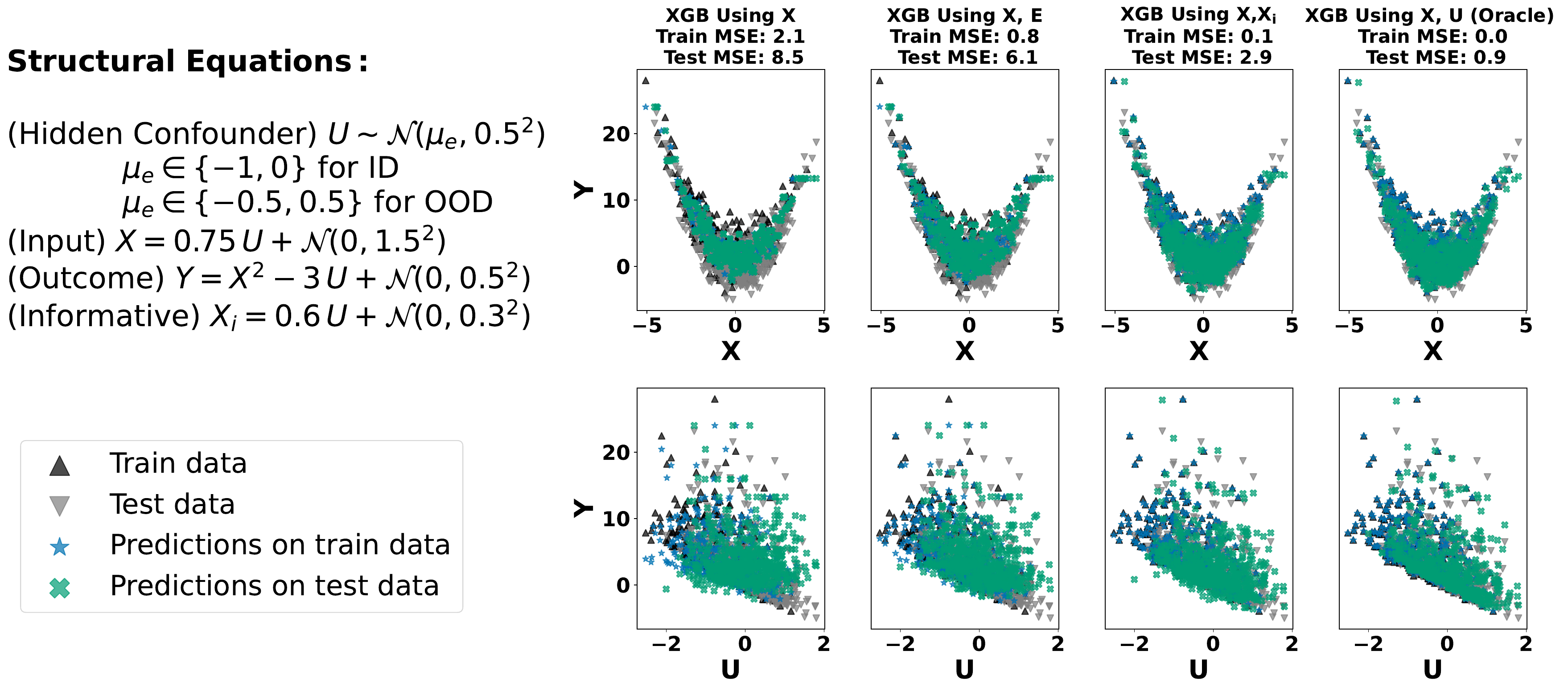}
    \caption{Performance of XGB on the low overlap non-linear setting, for sets of features: (i) $X$ (ii) $X$ and $E$ (environment statistics) (iii) $X$ and $X_i$ (iv) $X$ and $U$ (Oracle).}
    \label{fig:synthetic_xgb_highoverlap}
\end{figure}

\noindent \textbf{Number of causal vs anti-causal variables:} Recent benchmarks~\citep{gardner2023benchmarking} show that baseline methods (e.g., XGB, MLP) consistently match or outperform OOD generalization techniques (e.g., GDRO, IRM, VREX) in real-world datasets, coinciding with the prevalent causal structure $\mathbf{X}\rightarrow Y$ in tabular data. To test if baseline superiority stems from the causal structure: $\mathbf{X}\rightarrow Y$, we vary the causal-to-anti-causal ratio $\rho = |\mathbf{X}_C|/|\mathbf{X}_A| \in \{0.0,\dots,1.0\}$ with $|\mathbf{X}|=50$ and $\mathbf{X} = \mathbf{X}_C \cup \mathbf{X}_A$ (e.g., $\rho=0.5$ gives $|\mathbf{X}_C|=|\mathbf{X}_A|=25$). Results (Figure~\ref{fig:pmaood}) show baseline resilience even under the causal structures $\mathbf{X}_C\rightarrow Y\rightarrow \mathbf{X}_A$, though no method dominates universally (Figures~\ref{fig:rholinear}, \ref{fig:rhononlinear}). Baselines achieve superior PMA-OOD scores~\citep{gardner2023benchmarking} (fraction of maximum OOD accuracy). We evaluate: (1) ERM-based methods (XGBoost~\citep{xgb}, LightGBM~\citep{lightgbm}, MLP, ResNet~\citep{gorishniy2021revisiting}, SAINT~\citep{saint}, TabTransformer, NODE~\citep{node}, FT-Transformer, ExpGrad~\citep{agarwal2018reductions}); (2) OOD generalization methods (IRM~\citep{irm}, IB-IRM, IB-ERM~\citep{Gulrajani21:DG}, CausIRL~\citep{chevalley2022invariant}, DANN~\citep{dann}, MMD~\citep{mmd}, DeepCORAL~\citep{deepcoral}, V-REx~\citep{krueger2021out}, Domain Mixup~\citep{mixup1,mixup2}); and (3) Robust optimization methods (DRO~\citep{dro}, GroupDRO~\citep{sagawa2019distributionally}, Label DRO, Adversarial Label DRO~\citep{zhang2020coping}).
\begin{figure}[H]
    \centering

    \includegraphics[width=1\linewidth]{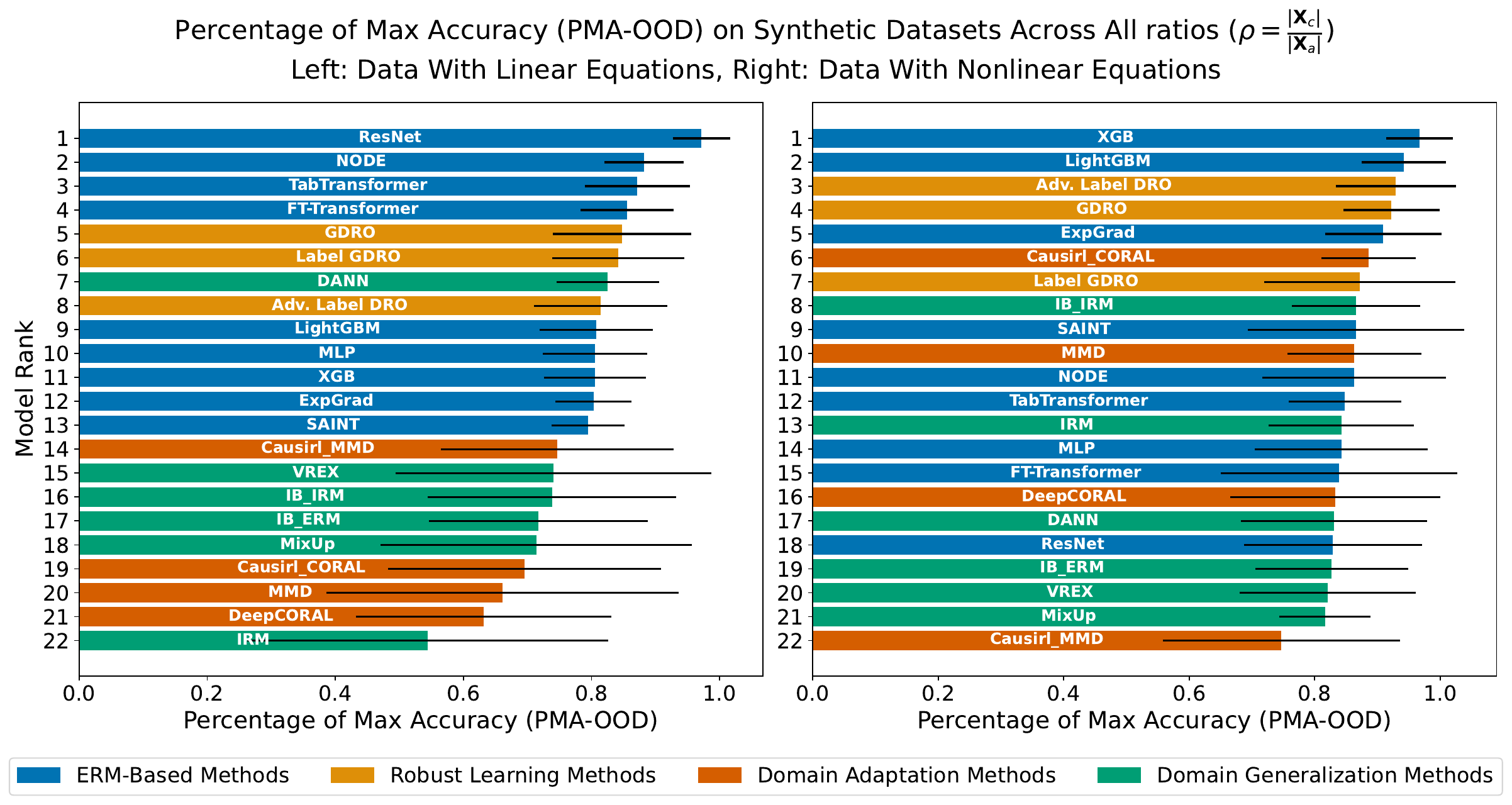}
    \caption{\footnotesize ERM-based methods achieve a better percentage of maximum OOD (PMA-OOD) test accuracy on the synthetic datasets, in line with the real-world results of~\citep{gardner2023benchmarking}.}

    \label{fig:pmaood}
\end{figure}
\begin{figure}
    \centering

    \includegraphics[width=1\linewidth]{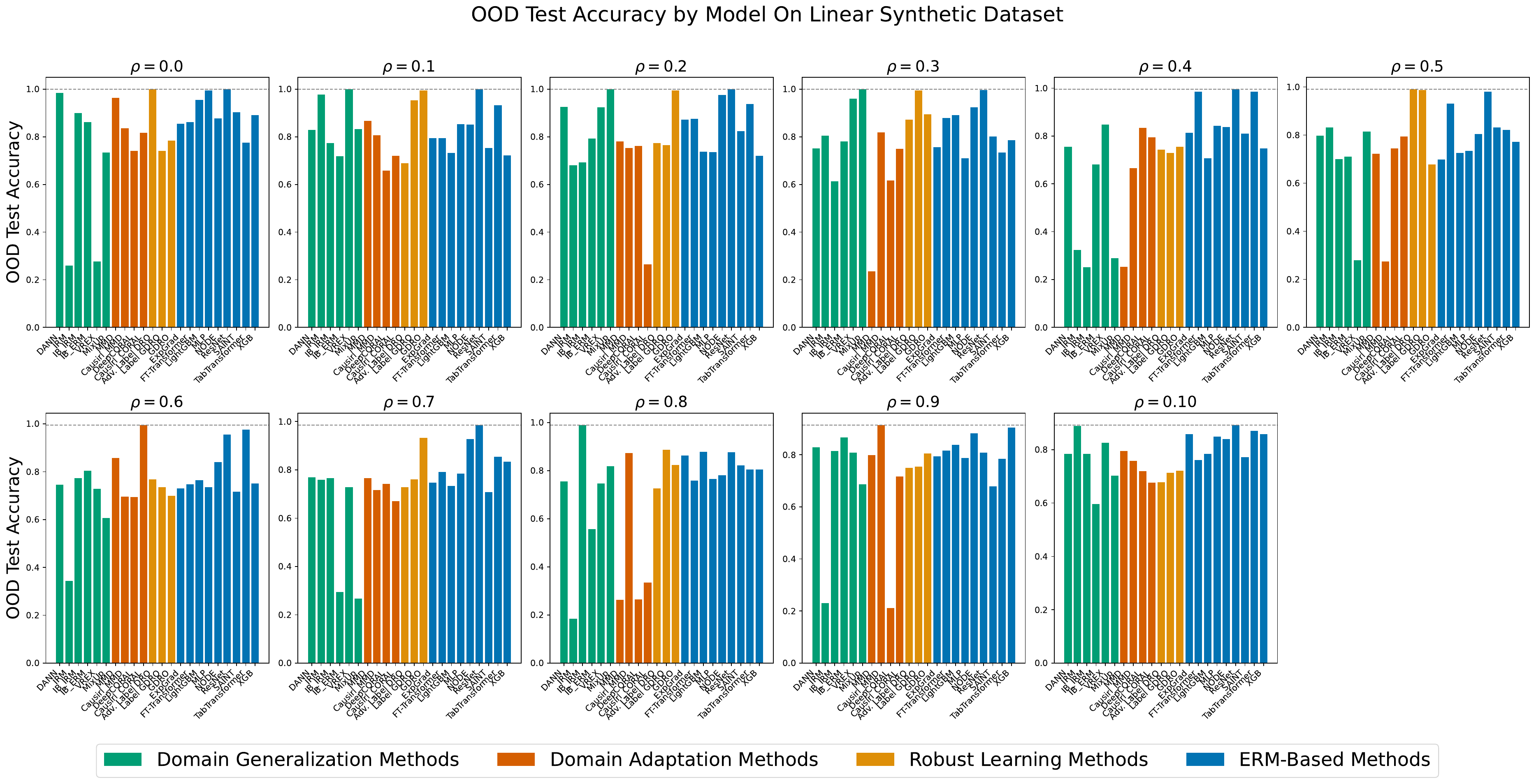}
    \caption{\footnotesize OOD test accuracy on the linear setting per model for different values of $\rho$.}

    \label{fig:rholinear}
\end{figure}
\begin{figure}
    \centering

    \includegraphics[width=1\linewidth]{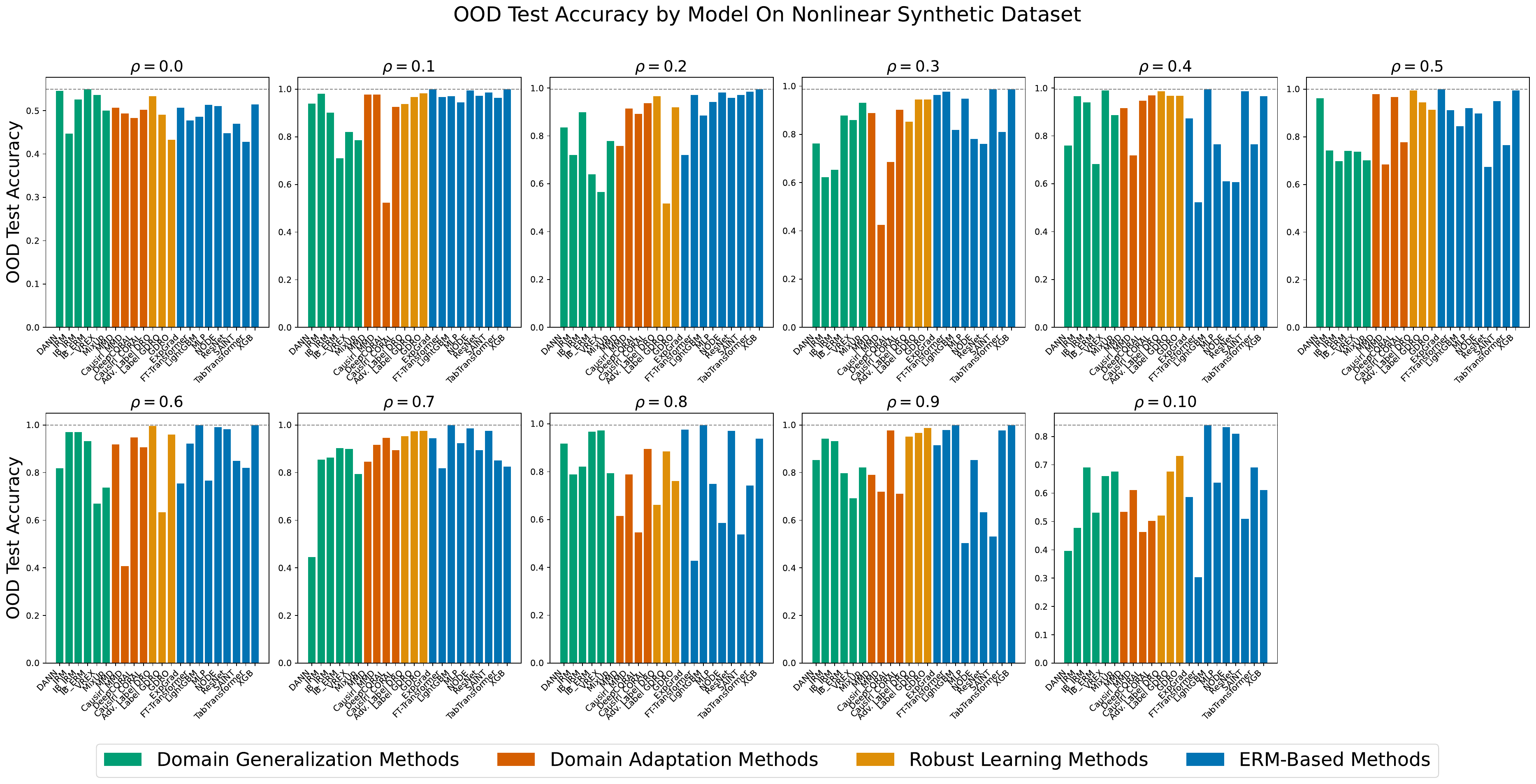}
    \caption{\footnotesize OOD test accuracy on the non-linear setting per model for different values of $\rho$}

    \label{fig:rhononlinear}
\end{figure}

\clearpage 

\section{Qualitative analysis of hidden confounding shift in real-world datasets}
\label{sec:detailsonrealworlddatasets}
In this section, we qualitatively study the presence of hidden confounding shifts in real-world datasets. Tables~\ref{tab:food_stamp_confounders}-\ref{tab:poverty_confounder} present potential hidden confounders influencing various observed covariates in several real-world datasets~\citep{nastl2024do}. For additional details regarding these datasets, we refer to \citep{gardner2023benchmarking,nastl2024do}. We use GPT4o~\citep{achiam2023gpt} for this task, and we acknowledge that the list of unobserved confounding variables provided is not exhaustive. This study supports the arguments for the existence and impact of hidden confounding shifts across domains in real-world datasets.
\paragraph{Prompt:}
We use the prompt below to query GPT4o to get possible hidden confounders.

\texttt{\hspace{20pt} For a target variable, you will be given lists of causal, arguably causal, anti-causal, and spurious covariates. Also, you will be provided with environment variables such that different datasets are collected in environments induced by these environment variables. Provide possible hidden confounders that cause any of the target, causal, arguably causal, spurious, and anti-causal covariates. That is, the hidden confounders should influence the distribution of their children in different domains. Ensure hidden confounders are not a part of the given variables, and your explanation should include how hidden confounders influence their children in different environments.}

\begin{table}[H]
\centering
\caption{\textbf{Dataset:} Food stamps. \textbf{Target:} Food stamp recipiency in the past year for households with child. \textbf{Environments:} regions in the United States.}
\scalebox{0.9}{
\begin{tabular}{@{}p{3.8cm}p{3.8cm}p{5.8cm}@{}}
\toprule
\textbf{Hidden confounder} & \textbf{Affected variables} & \textbf{Reason for confounding} \\ \midrule
Local economic conditions     & 
{Food stamp recipiency}, {Age}, {Sex}, {Race}, {Marital status} &
Poor economic conditions can increase food insecurity (raising food stamp usage), while also influencing population demographics due to migration, employment patterns, and household formation. \\ \midrule
State or regional public assistance policies & 
{Food stamp recipiency}, {Marital status}, Number of children, Household income &
Generous or restrictive assistance policies directly affect food stamp eligibility and indirectly influence household structure and income distribution. \\ \midrule
Cultural attitudes towards welfare and public assistance & 
{Food stamp recipiency}, {Marital status}, {Disability}, {Race}, Ethnicity &
Social stigma or support for welfare affects both the uptake of food stamps and social norms around marriage, disability reporting, and identity categories. \\ \midrule
Regional health care conditions & 
{Food stamp recipiency}, {Disability}, {Cognitive difficulty}, {Hearing difficulty}, {Vision difficulty} &
Regions with limited healthcare access may show higher disability prevalence and greater reliance on food stamps due to increased financial and care burdens. \\ \bottomrule
\end{tabular}
}
\label{tab:food_stamp_confounders}
\end{table}

\begin{table}
\centering
\caption{\textbf{Dataset:} Income. \textbf{Target:} Total person’s income $\geq 56K$ for employed adults. \textbf{Environments:} regions in the United States.}
\scalebox{0.9}{
\begin{tabular}{@{}p{3.8cm}p{3.8cm}p{5.8cm}@{}}
\toprule
\textbf{Hidden confounder} & \textbf{Affected variables} & \textbf{Reason for confounding} \\ \midrule
Local economic conditions     & 
Income, Occupation, Educational attainment, Marital status &
Stronger local economies offer better-paying jobs and education access, increasing income; they also influence occupational choices and household formation patterns. \\ \midrule
State or regional tax and labor policies & 
Income, Class of worker, Educational attainment, Marital status &
Tax incentives and labor protections affect wages and employment types, while also shaping decisions about education and family due to financial security. \\ \midrule
Cost of living and regional affordability & 
Income, Occupation, Marital status, Educational attainment &
Higher living costs necessitate higher incomes and may drive occupational or educational shifts; they also influence marriage or cohabitation decisions. \\ \midrule
Cultural norms and regional economic history & 
Income, Educational attainment, Occupation &
Cultural values and historical industry presence shape education levels and job opportunities, which in turn affect income distributions. \\ \bottomrule
\end{tabular}
}
\label{tab:income_confounders}
\end{table}

\begin{table}
\centering
\caption{\textbf{Dataset:} Public coverage. \textbf{Target:} Coverage of non-Medicare eligible low-income individuals. \textbf{Environments:} Disability statuses.}
\scalebox{0.9}{
\begin{tabular}{@{}p{3.8cm}p{3.8cm}p{5.8cm}@{}}
\toprule
\textbf{Hidden confounder} & \textbf{Affected variables} & \textbf{Reason for confounding} \\
\midrule
State or regional healthcare policies     & 
Public health coverage, Marital status, Employment status, Income &
Differences in Medicaid expansion and public insurance eligibility influence health coverage; these policies also affect economic stability, employment, and family dynamics. \\
\midrule
Economic conditions and poverty levels   & 
Public health coverage, Employment status, Income, Marital status &
Poorer regions tend to have lower employment rates and incomes, which increase reliance on public health coverage and influence marriage and household composition. \\
\midrule
Cultural attitudes towards disability and healthcare    & 
Public health coverage, Disability status, Cognitive difficulty, Hearing difficulty, Vision difficulty &
Stigma or support for disability and public care varies culturally, affecting both the reporting of disabilities and the likelihood of seeking or receiving public coverage. \\ 
\midrule
Urban vs. rural divide                   & 
Public health coverage, Educational attainment, Marital status, Occupation &
Urban areas typically offer better healthcare access, education, and jobs, all of which affect coverage likelihood and socioeconomic indicators. \\
\bottomrule
\end{tabular}
}
\label{tab:public_health_confounders}
\end{table}

\begin{table}
\centering
\caption{\textbf{Dataset:} Unemployment. \textbf{Target:} Classify whether a person is unemployed. \textbf{Environments:} Educational attainments.}
\scalebox{0.9}{
\begin{tabular}{@{}p{3.8cm}p{3.8cm}p{5.8cm}@{}}
\toprule
\textbf{Hidden confounder} & \textbf{Affected variables} & \textbf{Reason for confounding} \\ \midrule
Local economic conditions     & 
Employment status, Occupation, Marital status, Mobility status &
Regional economic strength affects job availability and unemployment rates; it also shapes occupation types, migration decisions, and household stability. \\ \midrule
State or regional labor market policies & 
Employment status, Occupation, Educational attainment, Marital status &
Labor regulations and unemployment benefits vary by region, affecting hiring practices, education incentives, and family structures. \\ \midrule
Cultural and social norms around employment & 
Employment status, Disability status, Marital status &
Cultural attitudes toward work and dependency influence unemployment reporting and societal roles around disability and family responsibilities. \\ \midrule
Health and disability status  & 
Employment status, Disability status, Cognitive difficulty, Hearing difficulty, Vision difficulty &
Poor health or disabilities reduce the ability to work, directly increasing unemployment and shaping associated health-related variables. \\ \bottomrule
\end{tabular}
}
\label{tab:employment_status_confounders}
\end{table}

\begin{table}
\centering
\caption{\textbf{Dataset:} Voting. \textbf{Target:} Classify whether a person voted in the U.S. presidential election. \textbf{Environments:} United States census regions.}
\scalebox{0.9}{
\begin{tabular}{@{}p{3.8cm}p{3.8cm}p{5.8cm}@{}}
\toprule
\textbf{Hidden confounder} & \textbf{Affected variables} & \textbf{Reason for confounding} \\ \midrule
Political, family, and peer influences  & 
Voted in national election, Party identification, Political participation &
Social networks shape political ideology and engagement, influencing both voting likelihood and party alignment. \\ \midrule
Media consumption habits & 
Voted in national election, Party identification, Political knowledge, Voting behavior &
Media exposure affects awareness of political issues and biases, influencing party affiliation, political knowledge, and voting participation. \\ \midrule
Social capital  & 
Voted in national election, Political participation, Interest in elections &
Strong community ties and civic networks increase political interest and participation, leading to higher voter turnout. \\ \midrule
Civic education and political engagement programs & 
Voted in national election, Interest in elections, Political knowledge &
Educational programs raise political awareness and civic responsibility, influencing both knowledge levels and voting decisions. \\ \midrule
Historical and cultural context & 
Voted in national election, Party identification, Interest in elections &
Historical events and regional political culture affect interest in elections and party alignment, which in turn influence voting behavior. \\ \bottomrule
\end{tabular}
}
\label{tab:voting_behavior_confounders}
\end{table}

\begin{table}
\centering
\caption{\textbf{Dataset:} Hypertension. \textbf{Target:} Whether a person has hypertension. \textbf{Environments:} Body Mass Index (BMI) values.}
\scalebox{0.9}{
\begin{tabular}{@{}p{3.8cm}p{3.8cm}p{5.8cm}@{}}
\toprule
\textbf{Hidden confounder} & \textbf{Affected variables} & \textbf{Reason for confounding} \\ \midrule
Socioeconomic status &
Income, Employment status, Smoking habits, Alcohol consumption, Physical activity, Healthcare access, Medical cost, High blood pressure diagnosis &
Lower socioeconomic status reduces access to healthcare and healthy lifestyle options, leading to poor diet, limited activity, and delayed diagnosis, which jointly influence both BMI and hypertension risk. \\ \midrule
Access to healthcare services & 
Healthcare access, Medical costs, Smoking habits, Alcohol consumption, Physical activity, High blood pressure diagnosis &
Limited healthcare access results in underdiagnosis and unmanaged hypertension, while also affecting lifestyle choices that vary with BMI, confounding the relationship between BMI and hypertension. \\ \midrule
Genetic predisposition & 
Age group, Race, Sex, Smoking habits, Diabetes, High blood pressure diagnosis &
Genetic risk factors for hypertension may co-vary with demographic attributes and influence both hypertension prevalence and BMI distribution across subpopulations. \\ \midrule
Psychosocial stress & 
Smoking habits, Alcohol consumption, Physical activity, High blood pressure diagnosis, Diabetes, Age group &
Chronic stress alters behavior (e.g., smoking, inactivity) and physiological responses, contributing to both increased BMI and elevated blood pressure, thus confounding the BMI–hypertension link. \\ \midrule
Environmental factors & 
Physical activity, Diet, Smoking habits, High blood pressure diagnosis, Diabetes, BMI category &
Living environments affect access to recreational spaces, food quality, and pollution exposure, influencing both BMI and hypertension risks. These vary across BMI categories, creating confounding. \\ \midrule
Dietary habits beyond fruits and vegetables & 
Alcohol consumption, Smoking habits, Physical activity, High blood pressure diagnosis, BMI category &
High-sodium or processed-food diets raise both BMI and hypertension risk. Variation in such unmeasured dietary habits across BMI categories creates spurious associations with hypertension. \\ \bottomrule
\end{tabular}
}
\label{tab:hypertension_confounders}
\end{table}

\begin{table}
\centering
\caption{\textbf{Dataset:} College Scorecard. \textbf{Target:} Predict completion rate for first-time, full-time students at four-year institutions. \textbf{Environments:} Based on Carnegie Classifications.}
\scalebox{0.9}{
\begin{tabular}{@{}p{3.8cm}p{3.8cm}p{5.8cm}@{}}
\toprule
\textbf{Hidden confounder} & \textbf{Affected variables} & \textbf{Reason for confounding} \\ \midrule
Institutional funding and resources & 
Accreditor, Control of institution, Highest degree awarded, In-state tuition, Out-of-state tuition, Cost of attendance, SAT scores &
Wealthier institutions can offer better academic support, facilities, and programs, leading to higher completion rates and more selective admission profiles. This varies across Carnegie classifications. \\ \midrule
Regional socio-economic factors & 
Region, Poverty rate, Unemployment rate, SAT scores &
Economic conditions across regions affect affordability, student preparedness, and institutional support levels, all influencing both enrollment outcomes and graduation likelihood. \\ \midrule
Demographic factors & 
HBCU flag, Federal loan recipient rate, Pell grant recipient rate, ACT scores, Undergraduate enrollment &
Student demographics shape financial aid needs, academic preparation, and graduation rates. Their effect differs by institution type and selectivity under Carnegie categories. \\ \midrule
Community support and engagement & 
Distance-education flag, Federal loan recipient rate, Pell grant recipient rate, SAT scores, Undergraduate enrollment &
Supportive institutional communities improve retention and completion. Variation in engagement across institution types and student aid profiles induces confounding. \\ \midrule
Admission selectivity & 
Admission rate, SAT (reading/math) midpoints, ACT midpoint, Undergraduate enrollment &
Selective admissions correlate with better-prepared students and higher completion rates, and vary with institutional prestige and classification. \\ \midrule
State and local policies & 
Region, Poverty rate, Unemployment rate, Cost of attendance &
Differences in education funding and public policy affect cost structures and completion outcomes, interacting with institutional classification and regional demographics. \\ \bottomrule
\end{tabular}
}
\label{tab:classscorecard_confounders}
\end{table}

\begin{table}
\centering
\caption{\textbf{Dataset:} ASSISTments. \textbf{Target:} Predict whether a student solves a problem correctly on the first attempt in an online learning tool. \textbf{Environments:} Different schools.}
\scalebox{0.9}{
\begin{tabular}{@{}p{3.8cm}p{3.8cm}p{5.8cm}@{}}
\toprule
\textbf{Hidden confounder} & \textbf{Affected variables} & \textbf{Reason for confounding} \\
\midrule

Institutional teaching quality & 
Hint count, Attempt count, Skill ID, Problem type, Tutor mode, Position, Type, First action, Milliseconds to first response, Overlap time, Average confidence & 
Variation in instructional quality and pedagogy across schools affects how effectively students engage with content, leading to differences in problem-solving strategies, response behavior, and emotional states. \\
\midrule

Student motivation & 
Hint count, Attempt count, Skill ID, Problem type, First action, Milliseconds to first response, Overlap time, Average confidence &
Differences in intrinsic motivation across schools influence students’ willingness to persevere, seek help, or give up quickly, affecting interaction and performance. \\
\midrule

Classroom environment & 
Hint count, Attempt count, Tutor mode, Position, Type, First action, Average confidence & 
Peer dynamics, classroom culture, and noise levels vary by school and affect how confidently and independently students solve problems. \\
\midrule

School technology infrastructure & 
Hint count, Attempt count, Tutor mode, Position, Type, First action, Milliseconds to first response, Overlap time, Average confidence &
Access to reliable devices and fast internet differs by school, influencing response time, tool usage, and student experience. \\
\midrule

Teacher-student interaction & 
Hint count, Attempt count, Tutor mode, Position, Type, First action, Milliseconds to first response, Overlap time, Average confidence &
The level of teacher guidance and feedback shapes how much support students require during problem-solving, affecting engagement and confidence. \\
\midrule

Previous academic performance & 
Hint count, Attempt count, Skill ID, Problem type, First action, Average confidence & 
Students' prior achievement affects how easily they solve problems, their need for assistance, and their confidence, all of which vary across schools. \\
\bottomrule
\end{tabular}
}
\label{tab:assistments_confounders}
\end{table}

\begin{table}
\centering
\caption{\textbf{Dataset:} ICU. \textbf{Target:} Predict whether the patient will stay in the ICU for longer than 3 days. \textbf{Environments:} Insurance types.}
\scalebox{0.9}{
\begin{tabular}{@{}p{3.8cm}p{3.8cm}p{5.8cm}@{}}
\toprule
\textbf{Hidden confounder} & \textbf{Affected variables} & \textbf{Reason for confounding} \\ \midrule

Socioeconomic status (SES) & 
Age, Gender, Ethnicity, Height , Weight , Bicarbonate, CO\textsubscript{2}, pCO\textsubscript{2}, pO\textsubscript{2}, Lactate, Sodium, Hemoglobin, Oxygen saturation, Respiratory rate, etc. & 
SES influences access to healthcare, preventive services, and overall health status. Differences in SES across insurance types lead to variability in pre-ICU health, physiological indicators, and ICU stay duration. \\ \midrule

Hospital resources and care quality & 
Age, Gender, Ethnicity, Height , Weight , Bicarbonate, CO\textsubscript{2}, pCO\textsubscript{2}, pO\textsubscript{2}, Lactate, Sodium, Hemoglobin, Oxygen saturation, Respiratory rate, Heart rate, etc. & 
Hospital infrastructure and care standards affect monitoring, intervention speed, and clinical decisions. These factors vary by insurance coverage and influence ICU outcomes and vitals. \\ \midrule

Comorbidities & 
Age, Gender, Ethnicity, Height , Weight , Bicarbonate, CO\textsubscript{2}, pCO\textsubscript{2}, pO\textsubscript{2}, Lactate, Sodium, Hemoglobin, Oxygen saturation, Respiratory rate, Heart rate, etc. & 
Presence of chronic conditions (e.g., diabetes, cardiovascular disease) affects both the need for prolonged ICU care and physiological measurements. The distribution of comorbidities differs across insurance types. \\ \midrule

Insurance-related treatment variability & 
Age, Gender, Ethnicity, Height , Weight , Bicarbonate, CO\textsubscript{2}, pCO\textsubscript{2}, pO\textsubscript{2}, Lactate, Sodium, Hemoglobin, Oxygen saturation, Respiratory rate, etc. & 
Differences in treatment timing, intensity, and access to specialists based on insurance policies affect ICU stay duration and clinical metrics. \\ \midrule

Genetic factors & 
Age, Gender, Ethnicity, Height , Weight , Bicarbonate, CO\textsubscript{2}, pCO\textsubscript{2}, pO\textsubscript{2}, Lactate, Sodium, Hemoglobin, etc. & 
Inherited traits influence predisposition to organ failure, metabolic responses, and recovery trajectories. These effects are partially mediated by ethnicity and age distributions, which vary across insurance groups. \\

\bottomrule
\end{tabular}
}
\label{tab:icu_confounders}
\end{table}

\begin{table}
\centering
\caption{\textbf{Dataset:} Hospital mortality. \textbf{Target:} Classify whether an ICU patient expires in the hospital during their current visit. \textbf{Environments:} Insurance types.}
\scalebox{0.9}{
\begin{tabular}{@{}p{3.8cm}p{3.8cm}p{5.8cm}@{}}
\toprule
\textbf{Hidden confounder} & \textbf{Affected variables} & \textbf{Reason for confounding} \\ \midrule

Socioeconomic status (SES) & 
Age, Gender, Ethnicity, Height, Weight, Bicarbonate, Lactate, Sodium, Hemoglobin, Oxygen saturation, Respiratory rate, Systolic blood pressure, White blood cell count, etc. & 
SES shapes access to timely and high-quality care, preventive services, and general health status. Patients with higher SES often have better insurance and outcomes, leading to confounding with mortality risk. \\ \midrule

Comorbidities & 
Age, Gender, Ethnicity, Height, Weight, Bicarbonate, CO\textsubscript{2}, pCO\textsubscript{2}, pO\textsubscript{2}, Lactate, Sodium, Hemoglobin, Oxygen saturation, Respiratory rate, Heart rate, Systolic blood pressure, etc. & 
Pre-existing conditions such as diabetes or heart disease increase mortality risk and influence physiological features. Their prevalence differs by insurance type, creating confounding. \\ \midrule

Hospital resources and care quality & 
Age, Gender, Ethnicity, Height, Weight, Bicarbonate, CO\textsubscript{2}, pCO\textsubscript{2}, pO\textsubscript{2}, Lactate, Sodium, Hemoglobin, Oxygen saturation, Respiratory rate, Heart rate, Systolic blood pressure, White blood cell count, etc. & 
Access to advanced treatments, trained staff, and timely interventions influences survival rates. These factors correlate with insurance coverage, confounding mortality outcomes. \\ \midrule

Genetic factors & 
Age, Gender, Ethnicity, Height, Weight, Bicarbonate, CO\textsubscript{2}, pCO\textsubscript{2}, pO\textsubscript{2}, Lactate, Sodium, Hemoglobin, Oxygen saturation, Respiratory rate, Heart rate, etc. & 
Genetic predispositions affect disease susceptibility and treatment responses. Variations in genetic risk factors may correlate with demographic traits across insurance types. \\ \midrule

Lifestyle and behavioral factors & 
Age, Gender, Ethnicity, Height, Weight, Bicarbonate, CO\textsubscript{2}, pCO\textsubscript{2}, pO\textsubscript{2}, Lactate, Sodium, Hemoglobin, Oxygen saturation, Respiratory rate, Heart rate, Temperature, Systolic blood pressure, White blood cell count, etc. & 
Behaviors such as smoking, diet, and physical activity affect long-term health and mortality risk. These behaviors vary systematically with SES and insurance coverage, influencing both target and physiological features. \\

\bottomrule
\end{tabular}
}
\label{tab:hospitalmortality_confounders}
\end{table}

\begin{table}
\centering
\caption{\textbf{Dataset:} Childhood lead. \textbf{Target:} Predict blood lead levels above CDC blood level reference value. \textbf{Environments:} Poverty-income ratios.}
\scalebox{0.9}{
\begin{tabular}{@{}p{3.8cm}p{3.8cm}p{5.8cm}@{}}
\toprule
\textbf{Hidden confounder} & \textbf{Affected variables} & \textbf{Reason for confounding} \\ \midrule

Environmental Exposure & 
Blood lead levels, Country of birth, Age, Race and Hispanic origin & 
Environmental exposure to lead influences blood lead levels, and this varies significantly across socio-economic groups. People in lower PIR groups are more likely to live in areas with higher lead contamination, which contributes to higher blood lead levels. Furthermore, environmental factors may affect the demographic distribution (e.g., country of birth, race). \\ \midrule

Access to Healthcare & 
Blood lead levels, Age, Gender, Race and Hispanic origin, Marital status, Education & 
Limited access to healthcare, especially in lower PIR groups, means fewer opportunities for detection and treatment of lead poisoning. This results in higher blood lead levels, with disparities also influencing demographic variables like age, gender, and education. Additionally, healthcare access varies by insurance and socio-economic status, further confounding the relationships. \\ \midrule

Diet and Nutrition & 
Blood lead levels, Age, Gender, Race and Hispanic origin & 
Dietary factors, such as poor nutrition in lower PIR groups, can exacerbate lead absorption. Malnutrition increases the body’s susceptibility to lead poisoning, raising blood lead levels. In contrast, higher PIR groups may have better access to nutritious foods, lowering lead absorption, thus creating a confounding effect in how socio-economic status and race influence lead toxicity. \\ \midrule

Housing Conditions & 
Blood lead levels, Country of birth, Race and Hispanic origin, Marital status & 
Older housing conditions, which are more prevalent in lower PIR groups, contribute significantly to elevated lead exposure (e.g., lead paint, poor plumbing). These living conditions directly influence blood lead levels and can also correlate with demographic factors like country of birth, race, and marital status. This introduces confounding, as socio-economic status impacts both exposure and the demographics of affected individuals. \\ \midrule

Occupation & 
Blood lead levels, Age, Race and Hispanic origin, Marital status, Education & 
Certain occupations, which are more common among lower PIR groups, involve higher lead exposure (e.g., construction, manufacturing). Occupational lead exposure directly impacts blood lead levels and is often correlated with education, marital status, and socio-economic status. The varying prevalence of lead exposure by occupation introduces confounding, especially across different PIR groups. \\ \bottomrule
\end{tabular}
}
\end{table}

\begin{table}
\centering
\caption{\textbf{Dataset:} Diabetes. \textbf{Target:} Predict diabetes. \textbf{Environments:} Preferred race categories.}
\scalebox{0.9}{
\begin{tabular}{@{}p{3.8cm}p{3.8cm}p{5.8cm}@{}}
\toprule
\textbf{Hidden confounder} & \textbf{Affected variables} & \textbf{Reason for confounding} \\ \midrule

Genetic predisposition & 
Diabetes, BMI, High blood pressure, High blood cholesterol & 
Genetic factors and family history contribute to both the onset of diabetes and comorbid conditions like obesity, hypertension, and high cholesterol. These genetic predispositions can make individuals more susceptible to diabetes, leading to confounding as they correlate with other health indicators. \\ \midrule

Access to healthcare & 
Diabetes, Physical health, BMI, Healthcare coverage, Health checkups & 
Limited or unequal access to healthcare, especially in marginalized racial groups, leads to disparities in diabetes diagnosis, management, and comorbidity treatment. It also influences the frequency of health checkups and access to medications, which can confound the relationship between diabetes status and other health metrics. \\ \midrule

Dietary habits and food availability & 
Diabetes, BMI, Physical health, Alcohol consumption, Fruit and vegetable intake & 
Dietary habits, often shaped by socio-economic status and local food environments, influence weight, health behaviors (such as alcohol consumption), and diabetes risk. People in lower socioeconomic strata may have limited access to healthy food options, leading to higher BMI and increased diabetes risk, creating confounding effects on health outcomes. \\ \midrule

Psychosocial stress and mental health factors & 
Diabetes, Mental health, Physical health, BMI, Physical activity, Doctor visits & 
Chronic stress and mental health issues, often higher in marginalized groups, contribute to diabetes development and complicate its management. These factors also affect physical health (e.g., weight gain due to stress) and health-seeking behaviors (e.g., fewer doctor visits), leading to confounding by influencing both diabetes risk and its associated variables. \\ \midrule

Socioeconomic status beyond income & 
Diabetes, Income, Physical health, Healthcare coverage, Education level & 
Socioeconomic factors, such as occupation, education, and neighborhood wealth, influence access to healthcare, nutrition, and overall health behaviors. These factors can confound the relationships between diabetes and other socio-economic variables like income and education, as they shape opportunities for prevention and treatment. \\ \bottomrule
\end{tabular}
}
\label{tab:diabetes_confounders}
\end{table}

\begin{table}
\centering
\caption{\textbf{Dataset:} Sepsis. \textbf{Target:} Predict, from a set of fine-grained ICU data, whether a patient will experience sepsis onset within the next 6 hours. \textbf{Environments:} Lengths of ICU stay.}
\scalebox{0.9}{
\begin{tabular}{@{}p{3.8cm}p{3.8cm}p{5.8cm}@{}}
\toprule
\textbf{Hidden confounder} & \textbf{Affected variables} & \textbf{Reason for confounding} \\ \midrule

Infection prevalence in ICU & SepsisLabel, Temperature (Temp), Leukocyte count (WBC), Heart rate (HR), Blood urea nitrogen (BUN) & Higher infection rates in certain ICU units can lead to a higher probability of sepsis onset (SepsisLabel). These infection rates influence biomarkers such as WBC, HR, and BUN, creating confounding because the unit's infection environment affects both the likelihood of sepsis and the observed clinical measures. \\ \midrule

Quality of ICU Care & SepsisLabel, Fibrinogen concentration (Fibrinogen), Leukocyte count (WBC), Platelet count (Platelets) & Higher-quality care in certain ICUs may lead to earlier identification and treatment of sepsis, resulting in more accurate SepsisLabel predictions. Additionally, better care could affect biomarkers like fibrinogen, WBC, and platelets, which are critical in sepsis detection and progression, thereby confounding the relationships between these variables and the outcome. \\ \midrule

Severity of underlying conditions & SepsisLabel, Blood urea nitrogen (BUN), Creatinine, Lactate, Calcium & Patients with severe chronic conditions (e.g., kidney disease, cardiovascular issues) are at a higher risk of sepsis and may show abnormal levels in biomarkers like BUN, creatinine, lactate, and calcium. These underlying conditions contribute to the SepsisLabel outcome and confound the relationship between the biomarkers and the likelihood of sepsis, varying across ICU units depending on patient population. \\ \midrule

Patient's socio-economic status & Age (Age), Gender (Gender), Leukocyte count (WBC), Fibrinogen concentration (Fibrinogen), Platelet count (Platelets) & Socio-economic factors, such as access to healthcare, can influence both the likelihood of sepsis and the observed clinical biomarkers. For example, patients from lower socio-economic backgrounds may have delayed hospitalizations or inadequate care, which affects both SepsisLabel and the progression of sepsis as indicated by WBC, fibrinogen, and platelet levels. \\ \midrule

Hospital-specific protocols and treatment guidelines & SepsisLabel, Lactate, Glucose, Creatinine & Differences in hospital protocols for sepsis treatment, such as timing of interventions and choice of sepsis bundles, can affect both the SepsisLabel and biomarkers like lactate, glucose, and creatinine. These protocols lead to variability in how sepsis is diagnosed and treated across different hospitals and ICU units, confounding the relationship between biomarkers and sepsis outcomes. \\ \bottomrule
\end{tabular}
}
\end{table}

\begin{table}
\centering
\caption{\textbf{Dataset:} Hospital readmission. \textbf{Target:} Predict whether a diabetic patient is readmitted to the hospital within 30 days of their initial release. \textbf{Environments:} Admission sources.}
\scalebox{0.9}{
\begin{tabular}{@{}p{3.8cm}p{3.8cm}p{5.8cm}@{}}
\toprule
\textbf{Hidden confounder} & \textbf{Affected variables} & \textbf{Reason for confounding} \\ \midrule

Socio-economic status (SES) & Race, Gender, Age, Payer code, Medical specialty, Number of outpatient visits, Number of emergency visits, Number of inpatient visits, Diabetes medication prescribed & SES influences access to healthcare, patient demographics, and chronic disease rates. It can also determine the type of care received based on admission source (e.g., emergency department vs. outpatient settings). Differences in healthcare access, such as availability of medications, may impact readmission rates and associated variables like outpatient visits and prescribed medication. \\ \midrule

Severity of illness & Primary diagnosis, Secondary diagnosis, Number of diagnoses, Discharge type, Medication changes (e.g., Insulin, Glipizide) & More severe illness increases the likelihood of readmission and influences the complexity of diagnoses and the treatments administered. The severity of illness may differ based on the admission source (e.g., emergency versus outpatient), impacting the number and type of diagnoses and treatments prescribed at discharge, affecting readmission likelihood. \\ \midrule

Access to healthcare resources & Time in hospital, Discharge disposition, Number of procedures, Number of medications, Number of lab tests & Access to healthcare resources (e.g., time in hospital, availability of procedures and medications) influences treatment decisions and outcomes. Different admission sources may have varying levels of available resources, leading to different lengths of stay, the types of procedures performed, and overall treatment quality, which can affect the likelihood of readmission. \\ \midrule

Patient's adherence to medication & Change in medications, Diabetes medication prescribed, Number of outpatient visits, Number of emergency visits & Adherence to prescribed medications is often influenced by socio-economic status, which can vary depending on admission source. Non-adherence may lead to medication changes, affecting diabetes control and subsequent readmission risk. The number of outpatient and emergency visits can also reflect how well a patient manages their diabetes and the likelihood of complications. \\ \midrule

Hospital-specific protocols & Time in hospital, Number of diagnoses, Discharge disposition, Readmitted & Different hospitals and healthcare systems implement various protocols for discharge planning and readmission prevention, which can affect readmission rates. These protocols may vary by admission source, where patients admitted via the emergency department may receive different follow-up instructions and care than those admitted through other channels. \\ \bottomrule
\end{tabular}
}
\end{table}

\begin{table}
\centering
\caption{\textbf{Dataset:} MEPS. \textbf{Target:} Measure of health care utilization. \textbf{Environments:} Insurance types.}
\scalebox{0.9}{
\begin{tabular}{@{}p{3.8cm}p{3.8cm}p{5.8cm}@{}}
\toprule
\textbf{Hidden confounder} & \textbf{Affected variables} & \textbf{Reason for confounding} \\ \midrule

Socioeconomic status (SES) & Years of education, Employment status, Hourly wage, Paid sick leave, Paid leave to visit doctor, Family size, Insurance coverage, Healthcare utilization & SES can influence access to healthcare services, insurance coverage, and the ability to use medical services. Insurance types often correlate with SES levels, where individuals with lower SES may be more likely to be insured by government programs (e.g., Medicaid), which in turn impacts healthcare utilization patterns across different SES groups. \\ \midrule

Healthcare access & Paid sick leave, Insurance coverage, Employer offers health insurance, Healthcare utilization & Limited access to healthcare, such as lack of insurance or paid sick leave, directly impacts healthcare utilization. The type of insurance a person holds is often tied to access to various medical services. The level of coverage and accessibility differs across insurance types, influencing healthcare behaviors such as whether a patient can afford and utilize healthcare services. \\ \midrule

Health behaviors & Perceived health status, Asthma medications, Limitations in physical functioning, Healthcare utilization & Lifestyle factors like smoking, alcohol consumption, and physical activity can directly affect health status and healthcare needs. Health behaviors differ across groups with different insurance types, and these behaviors contribute to healthcare utilization. Insurance coverage can also be influenced by perceived health status, which varies across insured groups, affecting utilization of medical services. \\ \midrule

Chronic health conditions & Asthma medications, Perceived health status, Limitations in physical functioning, Healthcare utilization & Chronic conditions like asthma or diabetes increase the need for healthcare services, leading to higher healthcare utilization. People with chronic conditions are more likely to be covered by Medicare or Medicaid, which influences the type of insurance they hold and impacts their healthcare utilization and associated features (e.g., medications and physical limitations). \\ \midrule

Regional healthcare infrastructure & Region, Family size, Healthcare utilization, Paid sick leave, Paid leave to visit doctor, Employment status & The quality and availability of healthcare infrastructure vary across regions, which can impact healthcare utilization. In underserved regions, people may be more reliant on public insurance options like Medicaid, affecting healthcare access and behaviors. The regional differences in healthcare systems can lead to disparities in access to medical services, insurance coverage, and utilization of healthcare. \\ \bottomrule
\end{tabular}
}
\end{table}

\begin{table}
\centering
\caption{\textbf{Dataset:} Poverty. \textbf{Target:} Predict household income-to-poverty ratio. \textbf{Environments:} Citizenship statuses.}
\scalebox{0.9}{
\begin{tabular}{@{}p{3.8cm}p{3.8cm}p{5.8cm}@{}}
\toprule
\textbf{Hidden confounder} & \textbf{Affected variables} & \textbf{Reason for confounding} \\ \midrule

Social capital & Income, Unemployment compensation, Disability benefits, Family size, Housing conditions, Household income-to-poverty ratio & Stronger social support networks can improve access to income, benefits, and resources like unemployment or disability compensation. Social capital can vary based on citizenship status, which in turn influences income, housing conditions, and eligibility for benefits, confounding the relationship between household income and poverty ratios. \\ \midrule

Workplace discrimination or bias & Household income, Worker’s compensation, Unemployment compensation, Pension, Family income, Disability benefits & Discrimination in the workplace can limit opportunities for higher wages, pensions, or disability benefits, especially for marginalized groups (e.g., racial minorities, non-citizens). This bias varies by citizenship status and leads to biased associations between income, benefits, and poverty levels. \\ \midrule

Healthcare access & Household income, Health insurance premiums, Medical expenses, Disability benefits, Family size, Health conditions, Medicaid/Medicare assistance & Limited access to healthcare, particularly for non-citizens, can lead to higher medical expenses and greater reliance on public health assistance. Citizenship status directly impacts eligibility for public healthcare programs, confounding relationships between medical expenses and income, and affecting poverty ratios. \\ \midrule

Access to education and skills development & Educational attainment, Household income, Employment status, Unemployment compensation, Income from assistance, Savings & Disparities in educational opportunities, often linked to citizenship status, influence income potential and eligibility for government assistance. These disparities, in turn, affect the household income-to-poverty ratio and employment outcomes, creating confounding relationships between education and income. \\ \midrule

Cultural factors & Family size, Household income, Savings, Disability benefits, Healthcare utilization, Income assistance, Living arrangements & Cultural factors influence financial management, family support, and the use of social programs. These factors can vary across citizenship statuses, leading to differences in how household income and assistance are distributed, and thus confounding the relationship between income, benefits, and financial behavior. \\ \midrule

Housing market and rent conditions & Housing ownership, Housing costs, Family size, Household income, Rent payments, Social security benefits, Medical aid & Housing market conditions, especially rent disparities, can create financial strain and impact the household income-to-poverty ratio. Citizenship status often influences eligibility for housing assistance and rent subsidies, which confounds the relationship between housing costs and income, particularly in areas with significant immigrant populations. \\ \bottomrule
\end{tabular}
}
\label{tab:poverty_confounder}
\end{table}

\end{document}